\documentclass{article}

\usepackage[preprint]{neurips_2024}
% to compile a preprint version, e.g., for submission to arXiv, add add the
% [preprint] option:
%     \usepackage[preprint]{neurips_2024}

\usepackage[utf8]{inputenc} % allow utf-8 input
\usepackage[T1]{fontenc}    % use 8-bit T1 fonts
\usepackage{xparse}
\usepackage{hyperref}       % hyperlinks
\usepackage{url}            % simple URL typesetting
\usepackage{booktabs}       % professional-quality tables
\usepackage{amsfonts}       % blackboard math symbols
\usepackage{nicefrac}       % compact symbols for 1/2, etc.
\usepackage{microtype}      % microtypography
\usepackage{xcolor}         % colors

% ADDED:
\usepackage{amsmath}		% eg \DeclareMathOperator
\usepackage{bbm}			% bb 1 (for indicator-functions)
\usepackage{amsthm}			% thm, lemma etc environments
\usepackage{enumitem}		% [label=...] for enums
\usepackage{tikz}
\usetikzlibrary{arrows.meta}
\usepackage{xfrac}		    % \sfrac (nice-frac, but still maintained)
\usepackage[colorinlistoftodos,prependcaption,textsize=tiny]{todonotes}
\usepackage{algorithm}
\usepackage[noend]{algpseudocode}
\usepackage{wrapfig}

% https://de.overleaf.com/learn/latex/Natbib_citation_styles
\bibliographystyle{abbrvnat}	% References
\setcitestyle{numbers, square}

% use subset-equal symbol instead of subset (and superset)
\RenewDocumentCommand{\subset}{}{\subseteq}
\RenewDocumentCommand{\supset}{}{\supseteq}

\NewDocumentCommand{\defName}{m}{\textbf{#1}}
\NewDocumentCommand{\val}{m}{\mathcal{#1}}
\NewDocumentCommand{\txt}{m}{\text{\normalfont{#1}}}
\NewDocumentCommand{\BreakingSmallSpace}{}{\hspace{.16667em}} % \, is non-breaking ...
\NewDocumentCommand{\Slash}{}{\,/\BreakingSmallSpace}

\DeclareMathOperator{\const}{const}
\DeclareMathOperator{\PearlDo}{do}
\DeclareMathOperator{\Pa}{Pa}
\DeclareMathOperator{\pa}{pa}
\DeclareMathOperator{\Anc}{Anc}
\DeclareMathOperator{\Dec}{Desc}
\DeclareMathOperator{\Adj}{Adj}
\DeclareMathOperator{\supp}{supp}
\DeclareMathOperator{\StronglyConnected}{sc}
\DeclareMathOperator{\ACycl}{ACycl}
\DeclareMathOperator{\eval}{eval}

\NewDocumentCommand{\gPhys}{m}{#1^{\txt{phys}}}
\NewDocumentCommand{\gDescr}{m}{#1^{\txt{descr}}}

\NewDocumentCommand{\gVisible}{m}{#1^{\txt{union}}}
\NewDocumentCommand{\gMask}{m}{#1^{\txt{mask}}}
\NewDocumentCommand{\gPool}{m}{#1^{\txt{pool}}}
\NewDocumentCommand{\gUnion}{m}{#1^{\txt{union}}}

\NewDocumentCommand{\gCF}{m}{#1^{\txt{CF}}}

\NewDocumentCommand{\gDetect}{m}{#1^{\txt{detect}}}

% dependence and independence symbols

%https://tex.stackexchange.com/questions/174118/not-independent-sign-in-latex
\makeatletter
% Taken from http://ctan.org/pkg/centernot
\newcommand*{\centernot}{%
	\mathpalette\@centernot
}
\def\@centernot#1#2{%
	\mathrel{%
		\rlap{%
			\settowidth\dimen@{$\m@th#1{#2}$}%
			\kern.5\dimen@
			\settowidth\dimen@{$\m@th#1=$}%
			\kern-.5\dimen@
			$\m@th#1\not$%
		}%
		{#2}%
	}%
}
\makeatother

\newcommand{\independent}{\perp\mkern-9.5mu\perp}
\newcommand{\dependent}{\centernot{\independent}}

% ams-theorems:
\theoremstyle{plain}
\newtheorem{lemma}{Lemma}[section]
\newtheorem{thm}{Theorem}

\newtheorem{cor}{Corollary}[section]

\theoremstyle{definition}
\newtheorem{definition}{Definition}[section]

\newtheorem{example}{Example}[section]
\newtheorem{rmk}{Remark}[section] % there is already an environment called "remark" in some packages

\title{Causal discovery with endogenous context variables}

\author{%
  Wiebke Günther$^{1, 2*}$ \quad Oana-Iuliana Popescu$^{1, 2*}$ \quad Martin Rabel$^{1, 3, 4*}$\\ \textbf{Urmi Ninad}$^{1, 2}$ \quad \textbf{Andreas Gerhardus}$^{1}$  \quad \textbf{Jakob Runge}$^{1, 2, 3,4}$ \\
  $^{1}$German Aerospace Center (DLR), Institute of Data Science, Jena, Germany \\
  $^{2}$Technische Universität Berlin, Institute of Computer Engineering\\ and Microelectronics, Berlin, Germany
  \\
  $^{3}$Center for Scalable Data Analytics and Artificial Intelligence (ScaDS.AI)\\ Dresden\Slash{}Leipzig, Germany
  \\
  $^{4}$Technische Universität Dresden, Faculty of Computer Science, Dresden, Germany \\
   \texttt{oana-iuliana.popescu@tu-dresden.de} \\
   $^*$equal contribution
}

\begin{document}

\maketitle

\begin{abstract}

Causal systems often exhibit variations of the underlying causal mechanisms between the variables of the system. Often, these changes are driven by different environments or internal states in which the system operates, and we refer to context variables as those variables that indicate this change in causal mechanisms. An example are the causal relations in soil moisture-temperature interactions and their dependence on soil moisture regimes: Dry soil triggers a dependence of soil moisture on latent heat, while environments with wet soil do not feature such a feedback, making it a \emph{context-specific} property. Crucially, a regime or context variable such as soil moisture need not be exogenous and can be influenced by the dynamical system variables -- precipitation can make a dry soil wet -- leading to joint systems with endogenous context variables. 
In this work we investigate the assumptions for constraint-based causal discovery of context-specific information in systems with endogenous context variables. We show that naive approaches such as learning different regime graphs on masked data, or pooling all data, can lead to uninformative results. We propose an adaptive constraint-based discovery algorithm and give a detailed discussion on the connection to structural causal models, including sufficiency assumptions, which allow to prove the soundness of our algorithm and to interpret the results causally. Numerical experiments demonstrate the performance of the proposed method over alternative baselines, but they also unveil current limitations of our method.
\end{abstract}

\section{Introduction}
    
In recent years, causal discovery (CD), that is, learning cause-effect relationships from observational data, has garnered significant interest across various domains, from Earth sciences and genomics to industrial settings 
\cite{KarmoucheChangingEO, runge2019inferring,runge2023causal,Assaad2023RootCI}. 
Such complex systems often exhibit a change of causal mechanisms over time or across different environments, including changes in the associated causal graph.
 An example are the causal relations between surface heat and latent heat and their dependence on soil moisture regimes: When the soil is dry, it triggers a dependence of soil moisture on latent heat due to the dry soil reflecting heat,  while environments with wet soil do not feature such a feedback.
These changing causal mechanisms can be represented using so-called contexts, where the causal mechanisms and structures vary depending on the states of one or more context variables that augment the assumed underlying structural causal model  \citep{huang2020causal, SelectionVars,mooij2020joint}.
Both time-series \citep{saggioro2020reconstructing} and non-time-series data can exhibit such behavior.
Context-dependent systems have been widely studied; see \citet{mooij2020joint} for a review. 
Context variables \citep{mooij2020joint, tikka2019identifying, pensar2015labeled} have also been called selection 
variables \citep{SelectionVars} or regime indicators \cite{Rahmani2023CastorCT, saggioro2020reconstructing} in the literature.

Causal discovery on such systems is challenging  \cite{huang2020causal,mooij2020joint} and methods for CD in case of systems with changing causal mechanisms typically assume that the context variable is exogenous, as will be discussed below and in Sec.~\ref{sec:connect_to_scm}. Yet there exist many real-world examples where the context indicators are endogenous. 
A regime\Slash{}context variable such as soil moisture can be influenced by the dynamical system variables -- precipitation can make a dry soil wet -- leading to joint systems with endogenous context variables, as illustrated in Fig.~\ref{fig:soil_moisture_problem}, where we show a hypothesized causal model of the drivers of soil moisture (SM). Understanding context-specific causal relationships is crucial for applications such as extreme event forecasting and for enhancing our general understanding of dynamical systems.

\begin{figure}[ht]
\begin{minipage}{0.2 \textwidth}
\begin{tikzpicture}[
  every node/.style={circle, draw=gray!80, fill=gray!60, minimum size=0.8cm, font=\sffamily\small},
  every edge/.style={draw, thick, -{Latex[length=2mm, width=1.5mm]}},
  positive/.style={black!80},
  positived/.style={black!80, dashed},
  negative/.style={blue!80},
  negatived/.style={blue!80, dashed},
  textnode/.style={font=\small}
]

\node (LH) at (0,0) {LH};
\node (SM) [right=of LH] {SM};
\node (TP) [above=of SM] {TP};

\draw [positive] (TP) edge[->] (SM);
\draw [positived] (SM) edge[->] (LH);
\draw [positive] (LH) edge[->, bend left] (SM);

\end{tikzpicture}
\end{minipage}
\begin{minipage}{0.8 \textwidth}
		\caption{In this strongly simplified example, the variable soil moisture (SM) is an endogenous context variable influenced by variables such as precipitation (TP) and latent heat flux (LH) in both wet and dry regimes of SM, represented by solid edges. 
        In dry conditions only, a feedback loop from the latent heat flux to the soil moisture is created because the soil additionally reflects the heat, represented by the dashed edge (a feedback loop is a time-delayed cycle, leading to a cyclic summary graph representation while the underlying time-series graph remains acyclic).}
		\label{fig:soil_moisture_problem}
	\end{minipage}
\end{figure}

Our work investigates the assumptions for constraint-based CD on such systems. A core difficulty in treating endogenous context 
indicators is that context-specific CD methods that resort to masking, i.\,e., conditioning on the context of interest and ignoring data from other contexts, 
e.g., \cite{huang2020causal, saggioro2020reconstructing}, can introduce selection bias if the assumption of exogenous context variables is violated. 
For example, applying CD only on samples from the dry context in the soil moisture problem of Fig.~\ref{fig:soil_moisture_problem} 
would introduce a spurious correlation between the precipitation and latent heat flux. Other methods apply CD on the pooled data, i.e., 
the concatenated dataset containing all contexts. Then allowing for endogenous variables is possible
\cite{mooij2020joint}, but does not yield context-specific information, i.e., no distinction between solid and dashed edges in Fig. \ref{fig:soil_moisture_problem} can be made. 
We propose an adaptive constraint-based discovery algorithm that allows to distinguish context-specific causal graphs and give a detailed discussion of the connection to structural causal models, including sufficiency assumptions, which allow to prove the soundness of our algorithm and to interpret the results causally.

Context-specific graphs contain more information than union graphs on pooled data, i.e., the dashed edges in in Fig. \ref{fig:soil_moisture_problem}. Under suitable assumptions, this information
can be discovered by additionally taking
context-specific independencies (CSIs) into account. CSIs are 
independence relations that are only present when data of a single context is viewed in isolation.
Naively searching through all CSIs would lead to an inflated search space of available independence tests. Further, CSIs necessarily use only a reduced sample size, which can also increase error rates. Indeed, a simple baseline method we compare against suffers from an increased error rate, whereas our proposed method can avoid inflation of required tests. This is achieved by an easy-to-implement decision rule that allows to adaptively decide for one and only one test to be executed per separating set. Whenever consistent, tests are further executed on the pooled dataset and, thus, on all available data.

We now summarize our contributions: 
(1) We propose a general framework for adapting existing constraint-based CD algorithms to extract context-specific information in addition to joint information about skeleton graphs when contexts are (potentially) endogenous without inflating the number of required independence tests.
(2) We give an SCM-based interpretation of this additional information and unveil interesting and little-studied difficulties, whose understanding we consider essential to the problem.
(3) We work out the theoretical assumptions under which our proposed method soundly recovers the context-specific skeletons despite these difficulties. 
(4) We conduct a simulation study to numerically evaluate the performance of the proposed method and understand the failure modes of current approaches.

\section{Related work}
\label{sec:rel_work}

The problem of CD from heterogeneous data sources has gained interest in recent years. \citet{Mooij2016JointCI} offers a broad overview of the topic and groups 
recent works on CD for heterogeneous data into two categories: Works that do CD separately on each context-specific dataset 
(the data points belonging to a specific context) and works that use the pooled datasets (all data points). Works that obtain context-specific graphs apply 
CD on the context-specific dataset, where the assignment to contexts is either known \cite{Hyttinen2012LearningLC, Rothenhusler2015BACKSHIFTLC}, 
 identified before applying CD \cite{Markham2021ADC, varambally2023discovering}, or iteratively learned \citep{saggioro2020reconstructing}.  The information from the context-specific graphs can then be summarized in a single graph, the so-called union graph.

\citet{Mooij2016JointCI} propose the Joint Causal Inference (JCI) framework to deal with multi-context CD by modeling the context variable and using CD
on the pooled datasets to uncover the relationships between the context and system variables, where the context can also be endogenous. 
Several CD methods can be extended for JCI, e.g., FCI \cite{spirtes2000causation}. FCI is also applied in \cite{Saeed2020CausalSD, Jaber2020CausalDF} 
on the pooled data to discover causal relationships among all variables. \citet{Zhang2017CausalDF} test for conditional independence between system 
and context variables on the pooled dataset under the assumption of exogenous context variables. Constraint-based CD for the multi-context case has 
also been studied specifically for the time-series case, e.g., \cite{saggioro2020reconstructing,strobl2022causal,Gunther2023CausalDF}. 
Among non-constraint-based methods, \citet{Peters2015CausalIB, HeinzeDeml2017InvariantCP} exploit the invariance of the conditional distribution of a 
target variable given its direct causes across multiple contexts, while \citet{Zhou2022CausalDW} find causal structure by modeling the direct causal 
effects as functions of the context variables. Although not directly methodically related, CD for cyclical graphs is also relevant to our problem, 
as cycles can appear between the different contexts \cite{Hyttinen2012LearningLC, forre2020causal}. Most above-mentioned methods using both context-specific
and pooled data assume, either implicitly or explicitly, that the context variable is exogenous. This is possibly due to the fact that, for context-specific approaches, endogenous variables might lead to selection bias, while for methods that use pooled data, it suffices to treat endogenous variables as any other variable in the graph. In our work, we discover context-specific causal graphs for the case of endogenous context variables. Assuming access to the context variable, we obtain information from the pooled and context-specific graphs. We focus on the non-time-series case and leave the extension to time-series for future work.

Labeled Directed Acyclic Graphs (LDAGs) \cite{pensar2015labeled} are conceptually closest to our work. LDAGs summarize context-specific information in graphical form by assigning a label to each edge according to the context in which dependence arises.
\citet{hyttinen2018structure} propose a constraint-based method to discover LDAGs from data similar to our approach: A variant of the PC algorithm that 
tests conditional independence of $X \independent Y |R=r$ within each context $R=r$ instead of testing $X \independent Y |R$ on the pooled data.  
We discuss how our definitions of context-specific graphs fundamentally differ from
CSIs in Sec.~\ref{sub:graphs_new} and delimit our work from LDAGs in App.~\ref{app:del_from_LDAGs}.

\section{Formal description}
\label{sec:formal_description}

We first briefly outline preliminary definitions (see, e.g., \cite{pearl2009causality} for detailed definitions). 
We then point out an unexpected difficulty in our approach in the connection to SCMs and its remedy.

\textbf{Preliminaries} We work within the framework of structural causal models (SCM) \cite{pearl2009causality}. 
We define the set of observed \textbf{endogenous variables} $\{X_i\}_{i\in I}$, which take values in $\mathcal{X}_i$, with finite index set $I$. 
Let $V := \{X_i \mid i \in I\}$ represent the set of all endogenous variables. The set of (hidden) \textbf{exogenous noises} $\{\eta_i\}_{i\in I}$, 
taking values in $\mathcal{N}_i$
is denoted $U := \{\eta_i \mid i \in I\}$. For any subset $A \subset V$, let $\mathcal{X}_A := \prod_{j \in A} \mathcal{X}_j$. 
We denote $\mathcal{X} := \mathcal{X}_V$ and $\mathcal{N} := \mathcal{N}_U$.

A set of \defName{structural equations} $\mathcal{F} := \{f_i | i\in I \}$ is a set
of parent sets $\{\Pa(i) \subset V|i\in I\}$ together with functions
$f_i : \val{X}_{\Pa(i)} \times \val{N}_i \rightarrow \val{X}_i$.
An \defName{intervention} $\mathcal{F}_{\PearlDo(A=g)}$ on a subset $A \subset V$
is a replacement of $f_j \mapsto g_j$ for $j\in A$.
We will only need \defName{hard interventions}, $g_j \equiv x_j = \const$ below.

Given a distribution $P_\eta$ of the noises $U = \{\eta_i\}_{i\in I}$,
a collection of random variables $V = \{X_i\}_{i\in I}$
\defName{satisfies the structural equations $\mathcal{F}$ on $U$}
if $X_i = f_i((X_j)_{j \in \Pa(i)}, \eta_i)$.
An \defName{SCM} $M$ is a triple $M = ( V, U, \mathcal{F} )$, where $V$
solves the structural equations $\mathcal{F}$ given $U$.
The \defName{intervened model} $M_{\PearlDo(A=g)}$ is the\footnote{We assume all SCMs are uniquely solvable, i.\,e.\ given $\mathcal{F}$
and $U$, there is a unique $V$ solving these structural equations.
This is not a restriction for the per-regime acyclic models considered later.} SCM $M_{\PearlDo(A=g)} := (V', U,$ $\mathcal{F}_{\PearlDo(A=g)})$.
A \defName{causal graph} describes
qualitative relations between variables; given a
collection of parent sets $\{\Pa_i\}_{i\in I}$ on the nodes $I$, a directed edge $p \rightarrow i$ is added to the causal graph for all $i\in I$ and all $p \in \Pa_i$.
See also next section Sec.~\ref{sub:graphs_new}.

We observe data from a system with multiple, slightly different SCMs corresponding to \defName{contexts} and indicated by a
categorical \defName{context indicator\Slash{}variable}, as illustrated in example~\ref{example:change_by_indicator_function}.
This variable $R\in V$ is a variable whose value indicates the context, i.e., the dataset to which the data point belongs. Therefore, the value space of $R$ defines all datasets, and its values associate data points to datasets.

\begin{example}\label{example:change_by_indicator_function}
	Given a binary context indicator
	variable $R$ and a \emph{multivariate} mechanism of the form
	\begin{equation*}
		Y := f_Y(X,R,\eta_Y) = \mathbbm{1}(R) g(X) + \eta_Y \txt,
	\end{equation*}
	the dependence $X \rightarrow Y$ is present in the context $R=1$,
	but absent for $R=0$. This entails a \defName{context-specific independence} (CSI)
	$X \independent Y | R=0$ and $X \dependent Y | R=1$. 
\end{example}

\subsection{SCMs for endogenous contexts and their associated graphs}
\label{sub:graphs_new}

To create a connection between CSIs and SCMs, we define multiple graphical objects. 
We motivate their necessity and exemplify them in Sec.~\ref{sec:connect_to_scm}.

Given a set of structural equations, $\mathcal{F}$, we define the \defName{mechanism graph} $G[\mathcal{F}]$, 
constructed via parent sets $\Pa_i$ specified via $\mathcal{F}$ (see "causal graphs" above).
Given a pair consisting of a set of structural equations $\mathcal{F}$,
and the (support of) a distribution of the observables $P(V)$ we define the \defName{observable graph} $G[\mathcal{F}, P]$, 
constructed via parent sets $\Pa'_i\subset\Pa_i$ (of $X_i$),
such that
$j \notin \Pa'_i$ if and only if for all values $\bar{\pa}$ of $\Pa_i - \{j\}$
the mapping
$x_j \mapsto f_i|_{\supp{P(\Pa_i)}}(x_j, \bar{\pa},\eta_i)$ is constant (where defined).

The \defName{union graph} of an SCM $M=(V^M, U^M, \mathcal{F}^M)$, where $V^M$ is distributed according
to $P_M$,
is the observable graph
\begin{equation*}
	\gUnion{G}[M] := G[\mathcal{F}^M, P_M]
\end{equation*}
which consolidates the information from multiple contexts into a unified graphical representation.
If $\mathcal{F}=\mathcal{F}^{\txt{min}}$
satisfies standard minimality assumptions
\citep[Def.\ 2.6]{bongers2021foundations}, then $G[\mathcal{F}^{\txt{min}}] = \gVisible{G}[M]$
(given faithfulness, Sec.~\ref{sub:AlgoTheory}) thus corresponds to the causal graph in the standard sense. Given an SCM $M$ with a context indicator $R$,
there are \emph{multiple} meaningful sets of structural equations and
\emph{multiple} meaningful associated distributions on observables, which
result in \emph{multiple} observable graphs associated to the SCM $M$; an overview is provided in Table~\ref{tab:observable_graphs} in the appendix,
see also Fig.\ \ref{fig:indicator_examples} and the next subsection.
We thus distinguish \emph{multiple} context-specific graphs.
The \defName{descriptive graph}
\begin{equation*}
	\gDescr{\bar{G}}_{R=r}[M] := G[\mathcal{F}^M_{\PearlDo(R=r)}, P_M(V|R=r)]\txt,
\end{equation*}
captures relations present in the
\emph{intervened} model's mechanisms $\mathcal{F}^M_{\PearlDo(R=r)}$, i.e., the SCM in context $r$,
and encountered in \emph{observational}
data together with this context;
the last point is formalized by restricting our
attention to the support of $P_M(V|R=r)$.
For edges not involving $R$,
the same information is contained in $G[\mathcal{F}^M, P_M(V|R=r)]$ using the unintervened $\mathcal{F}$, 
which supports the intuition that the context-specific graph captures the overlap of observations and the context of interest.
Since $R$ is a constant per-context, we add edges involving $R$ from the union graph to $\gDescr{\bar{G}}_{R=r}[M]$ and denote the 
result $\gDescr{G}_{R=r}[M]$. The result is \emph{very different} from an independence graph of the conditional distribution: 
There are no edges induced by selection bias in \(\gDescr{G}_{R=r}[M]\).

The \defName{physical graph} encodes the differences from the union graph necessary to describe the context consistently and describes 
the intuitive notion of $R$ introducing physical changes in mechanisms:
\begin{equation*}
    \gPhys{G}_{R=r} = G[\mathcal{F}^M_{\PearlDo(R=r)},  P_M(V)]
\end{equation*}

Importantly, the context-specific graphs defined above \emph{depend only on the SCM}, without making explicit or implicit reference to independence.
This sets them apart from graphical representations of independence structures like LDAGs \citep{pensar2015labeled}: 
LDAGS are graphical independence models describing
context-specific independence (CSI) structure of a dataset. However, a causal analysis requires an understanding of interventional properties, which, in turn, requires inferring knowledge about the causal model properties. 
In the single-context case, under the faithfulness assumption, knowledge about the causal properties of models is directly connected to the independence structure. 
As will be explored in Sec.~\ref{sec:connect_to_scm}, this direct connection cannot generally hold in the multi-context case. 
Thus, an important open problem is the connection of CSI structure to the underlying causal model.
Such a relation is given in Sec.~\ref{sub:AlgoTheory},
connecting properties of the underlying causal model
represented by context-specific graphs to the CSI structure.

\subsection{Two problems of CD with endogenous context variables}
\label{sec:connect_to_scm}

In this section, we describe two main problems encountered with context-specific
causal discovery and illustrate them by two examples.
In the columns of Fig.~\ref{fig:indicator_examples},
we show the different graphs of systems with the following SCMs:

\begin{example}
The mechanisms
$X := \eta_X, T := \eta_T$ and the binary threshold
exceedence $R := [(X + T + \eta_R) > t] \in \{0,1\}$ agree for both systems.
\textbf{(i)} For the system depicted in the left column of Fig.~\ref{fig:indicator_examples}, $Y := R \cdot g(T) + \eta_Y$ is a function of $R$ with the dependence
on $T$ explicitly disabled for $R=0$.
\textbf{(ii)} On the right-hand side,
$Y:=\max(T,T_0) + \eta_Y$, is not a function of $R$,
but assume $P(T > T_0 | R = 0) < \epsilon$,
such that given $R=0$ automatically $\max(T,T_0) \approx T_0$
and hence the dependence of $Y$ on $T$ becomes negligible for $R = 0$. 
\label{example:state}
\end{example}

\begin{wrapfigure}{R}{0.4\textwidth} % "R" for right, "L" for left, and width of figure
    \centering
    \includegraphics[width=0.38\textwidth]{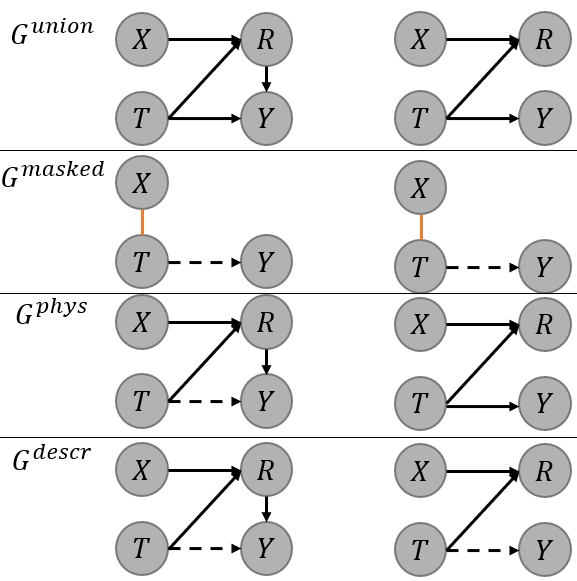}
    \caption{Left: An example of an SCM where the physical and descriptive graphs are the same.  
    Right: an example of an SCM where the physical and descriptive graphs differ, c.f.~the "support problem".
    For $\gMask{G}$, $R$ is not shown, as it is a constant per dataset, and links with other variables will not be found.
    Context-specific graphs depend on the value $r$ of $R$ and are summarized in a single diagram containing a solid edge for edges that are present in both contexts
    and a dashed edge for edges that are
    present for exactly one of the two contexts.
    }
    \label{fig:indicator_examples}
\end{wrapfigure}

\textbf{Selection bias} Endogenous context variables can pose a problem for context-specific CD if the context variable is 
driven by multiple other variables and is thus a collider. 
 As described in Sec.~\ref{sec:rel_work}, methods to obtain context-specific graphs typically apply CD on the context-specific dataset, 
 thus essentially conditioning on the context variable. If the context variable is a collider or a descendant thereof, this approach leads to selection bias
(red edge in Fig.~\ref{fig:indicator_examples}).
This is a subtle issue: From an independence structure viewpoint, the dependence
between $X$ and $T$ exists yet does not make any intuitive
sense and does not admit causal semantics. A formal description in which
the absence of this edge is considered correct -- as it should be
for a causal analysis -- requires \emph{the underlying model} as a starting
point. Indeed, the graphs defined above from the SCM directly
(also shown in Fig.~\ref{fig:indicator_examples}) consider the absence
of this edge correct.

\textbf{The support problem}
A second problem is encountered because CSI
can arise in two different ways from a given model: Either by an explicitly disabled dependence
as in the first example, or indirectly through the lack of 
observational support in a specific context. Note that, in Fig.~\ref{fig:indicator_examples}, our graphical
models capture this difference: Once $\gDescr{G} = \gPhys{G}$ (left-hand side), while in the other case $\gDescr{G} \neq \gPhys{G}$ (right-hand side).

Concerning the second example in example~\ref{example:state} in particular, we make the following observations: \textbf{(a)}  There is
no evidence for the absence of the link $T\rightarrow Y$, e.\,g.\ when considering
interventional questions. \textbf{(b)}  The absence of the link \emph{does} correctly describe an independence. For problems like edge orientations, 
cycle resolution, or function fitting on the respective data, the link $T\rightarrow Y$ is not required for a meaningful description in the context $R=0$. 
\textbf{(c)} $R$ is adjacent to $T$, but we could add an intermediate node
$T \rightarrow M \rightarrow R$ without changing the problem, so adjacency of $R$ to the changing link is not guaranteed. 
\textbf{(d)} We lack observations of another context variable $R'$, which indicates whether
$T > T_0$, and would suffice to fully describe the qualitative structure of the mechanism at $Y$.  
Observations (a) and (b) demonstrate that there is not
a single "correct" graph to learn, and the additional flexibility of observable graphs introduced above is necessary.
(a) and (b) also make it clear that it should be carefully specified what \emph{should be learned} for 
a particular application and what \emph{could be learned} (see Sec.~\ref{sub:graphs_new}), 
and what a specific algorithm \emph{will} learn (see Sec.~\ref{sub:AlgoTheory}). Finally, CD heavily relies on restricting the search space 
for separating sets to adjacencies, which becomes non-trivial in sight of (c). We will return to (d) momentarily.

In our setup, the context indicator is \emph{given} by the user. 
The context indicator can either arise naturally from the problem setting or it can be the result of preprocessing using anomaly 
or regime detection methods (see also Sec.~\ref{apdx:deterministic_indicators}). Since the indicator is chosen,
it becomes important to understand which context indicator makes for a \emph{good choice}
given the problems illustrated above. A core contribution of our work is to understand which assumptions about the model, including the choice of context indicator, prevent from drawing wrong conclusions. We give suitable assumptions to make the problem tractable and the result interpretable (in terms of SCMs) in Sec.~\ref{sub:sufficiency} based on observation (d).

\subsection{Assumptions}
\label{sub:sufficiency}

We now define the assumptions that a proper context indicator must fulfill such that our causal discovery method yields meaningful results.
These \emph{regime sufficiency assumptions} are described below and further formalized in Sec.~\ref{apdx:sufficiency}. The general approach and the naming
are inspired by observation (d) made above
and its resemblance to causal sufficiency.

\textbf{Sufficiency assumptions regarding the context indicator} 
The support problems in Example~\ref{example:state}\,(ii)
occur if the qualitative change in the causal mechanisms does not involve the observed context indicator $R$ directly, cf.\ (d). Therefore, we define
the \defName{strong context-sufficiency} assumption (Def.~\ref{def:strongly_suff}), which enforces 
that the qualitative changes in the graph involve $R$ directly as displayed in Example~\ref{example:change_by_indicator_function}. 
We also define \defName{single-graph-sufficiency} (Def.~\ref{def:single_graph_suff}), 
which enforces that certain context-specific graphs collapse to the same graph.
This is a weaker assumption than the strong sufficiency above (see Cor.\ \ref{cor:StrongRSImpliesNoState}). Yet, this assumption still suffices to prove the soundness of our proposed Algorithm~\ref{algo:context},
and even allows for a strong interpretation of the results as encoding physical changes. Thus, this assumption is technically better, albeit less intuitive, and potentially harder to verify directly.

The reduced
search space for CSI, cf.\ (c) in Sec.\ref{sec:connect_to_scm},
happens already under
the assumption of \defName{weak context-sufficiency} (Def.~\ref{def:weak_suff}),
which formalizes dependence on $R$ in the weak sense
of being adjacent in the graph. This is yet a weaker assumption (see Cor.~\ref{cor:NoStateImpliesWeakRS}), which
still ensures useful output from our algorithm (Rmk. \ref{rmk:descr_soundness}),
but does not allow for a strong (physical) interpretation of the results anymore. 
These assumptions mirror the causal sufficiency assumption: 
They exclude hidden contexts and lead to fewer CITs and a stronger interpretation of results as physical changes. 
We further discuss the hierarchy of these assumptions in Sec.~\ref{apdx:suff_hier}.

\textbf{Additional assumptions} We assume \defName{causal sufficiency} (Def. \ref{def:causal_suff}, no hidden confounders). 
Furthermore, we require at least \defName{weak (descriptive) context-acyclicity} (Def.~\ref{def:weak_context_acyclicity}), 
i.e., $\gDescr{G}_{R=r}$ is a DAG (acyclic) for all $r$. In practice, we assume \defName{strong (descriptive) context-acyclicity} 
(Def.~\ref{def:weak_context_acyclicity}), meaning there are
further no cycles involving any ancestors of $R$ (including $R$ itself) in the union graph. 
Efficient discovery with our method
requires directed cycles to be of length $\leq 2$, as discussed in Sec.~\ref{apdx:large_cylces}, Sec.~\ref{apdx:large_cycles_and_R}. 
We assume that $R$ is a categorical variable which takes finitely many values $r$ with $P(R=r)>0$.

\section{Causal discovery with endogenous context variables}
\label{sec:method}

We now propose a method to discover context-specific graphs and their corresponding union graph, which extends the PC algorithm 
\cite{Spirtes1991AnAF, glymour2019review} with a novel adaptive testing strategy. In Sec.\ref{sub:AlgoTheory} we will connect the output of our method to the graphical objects defined in \ref{sub:graphs_new}. In particular,
under suitable sufficiency assumptions, the output
is precisely $\gPhys{G}_{R=r}$.
The approach is not specific to PC but could also be used with other constraint-based algorithms.

\subsection{Method}

We observe that physical changes
(edges in $\gUnion{G}$ not in $\gPhys{G}_{R=r}$)
or under weak context-sufficiency
(Def.~\ref{def:weak_suff}), the only changes in $\gPhys{G}_{R=r}$ relative 
to the union graph occur downstream of the context indicator, more precisely in the adjacencies of its children (lemma \ref{lemma:PhysChangesAreRegimeChildren}). On the other hand, spurious links due to selection bias occur upstream of the context indicator (cf.\ Sec.\ref{sec:connect_to_scm}).
Based on either of the context-acyclicity assumptions introduced above, these two cases are mutually exclusive.
Skeleton discovery has to find a valid separating set $Z$ to remove any edge $X$ to $Y$ which is not in the context-specific $\gPhys{G}_{R=r}$.
By assuming an appropriate version of faithfulness, the decision between independence testing on the context-specific data or on the pooled data does not matter for all other candidates $S$ for $Z$, as long as we decide correctly for (one) valid $S$.
On the other hand, under causal sufficiency,
a valid $S_0\subset\Pa(Y)$ exists by a suitable Markov property.
Thus $R \in S_0$ implies $R$ is a parent of $Y$
and we are in the first case; thus, we should test on the context-specific data. By mutual exclusiveness, we can consistently do so
without the risk of selection bias.
Conversely $R \notin S_0$ implies
$R$ is not a parent of $Y$, and there is no context-specific information to be obtained here; thus
testing on the pooled data is consistent (it does not lose information), avoids selection bias ($X$ and $Y$ \emph{could} be ancestors of $R$), and uses all available data.
Thus, for the \emph{valid} $S_0$, we will, by this decision rule, always use the correct test. Formal details are given in Sec.~\ref{sub:AlgoTheory}.

\begin{algorithm}
\caption{Adaptive CD for Discovering Context-Specific Graphs with Endogenous Context Variables}\label{algo}
\begin{algorithmic}[1]
\State \textbf{Input:} Context indicator $R$, Samples for $X_1, ..., X_D, X_R=R$
\State \textbf{Output:} Context-specific graphs $G_{R=r} \quad \forall r \in \{1, ..., n_r\}$
\For {$r = 1$ to $n_r$}
    \State Initialize $G_{R=r}$ as fully connected graph
    \For{$sepset_{size} = 0$ to $D-1$}
        \For{each $j\in \{1,\ldots,D,R\}$
        } 
            \For{each $i \in \Adj(X_j)$ and $S \subset \Adj(X_j) - \{i\}$ with $\# S = sepset_{size}$}
                \State Test $X_i \independent X_j | S \setminus R, R=r$ if $R \in S$, else test $X_i \independent X_j | S$ on pooled data
                \If {independence is found} \State Remove edge $X_i - X_j$ from $G$ \EndIf
            \EndFor
        \EndFor
    \EndFor
    \State Orient edges as in PC algorithm
\EndFor
\State \Return $\{ G_{R=r} \mid r \in \{1, ..., n_r\} \}$
\end{algorithmic}
\label{algo:context}
\end{algorithm}

\textbf{Discovering context-specific graphs} We start by discovering $G_{R=r}$ which, under the assumptions of Thm.~\ref{thm:soundness}, 
agrees with both the descriptive and the physical graph. For a fixed context value $R=r$, we adapt the skeleton phase of the 
standard PC algorithm \cite{glymour2019review} to test on the pooled data whenever $R$ is not in the conditioning set. 
Whenever $R$ is in the conditioning set, the algorithm tests $X \independent Y | \mathbf{Z}, R=r$ instead of $X \independent Y | \mathbf{Z}, R$, 
as described in Algorithm~\ref{algo:context}.

\textbf{Obtaining the union graph} To obtain the union graph, we combine the information about the edges in all context-specific graphs as follows:
If the edge from $X$ to $Y$ is found in none of the context-specific graphs, then no edge is added to the union graph.
If an edge $X \rightarrow Y$ was found in any of the context-specific graphs, then the directed edge is added to the union graph.
The resulting graph may contain both the edge $X \rightarrow Y$
and the edge $X \leftarrow Y$. See also Sec.~\ref{apdx:labeled_union_graph}.

\textbf{Scaling properties} We study univariate context indicators, 
but our approach allows multiple indicators. A discussion on the time complexity of our algorithm for this case can be found in Sec.~\ref{subsec:complexity}.

\subsection{Theoretical results}
\label{sub:AlgoTheory}

\textbf{Markov properties}\label{sec:Markov}
The Markov property ensures that \textit{links that should be removed are removed} from suitable independence relations.
\begin{lemma}\label{lemma:markov}
    Assume strong context-acyclicity, causal sufficiency, and single-graph-sufficiency. If $X$ and $Y$, both $X, Y\neq R$, with $Y \notin\gDescr\Anc_{R=r}(X)$ 
    (this is not a restriction by context-acyclicity,
    rather fixes notation),
	are not adjacent in $\gDescr{G}_{R=r}$, then there is
	$Z \subset \gDescr\Adj_{R=r}(Y)$ such that:
	If $Y$ is neither part of a directed cycle in the union graph (union cycle)
	nor a child of $R$ in the union graph (union child), then
	(a) $X \independent Y | Z$, otherwise (b) $X \independent Y | Z, R=r$.
\end{lemma}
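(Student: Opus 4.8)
The plan is to reduce the statement to the ordinary local Markov property of a DAG, applied to $\gDescr{G}_{R=r}$, where the distribution from which one reads off the independence is the \emph{pooled} distribution $P_M$ in case~(a) and the \emph{per-context} distribution $P_M(\,\cdot\mid R=r)$ in case~(b). Fix $r$ and set $Z:=\gDescr\Pa_{R=r}(Y)\setminus\{R\}\subseteq\gDescr\Adj_{R=r}(Y)$, as required. By strong context-acyclicity $\gDescr{G}_{R=r}$ is a DAG, and the hypothesis $Y\notin\gDescr\Anc_{R=r}(X)$ says precisely that $X$ is a \emph{non-descendant} of $Y$ there; since $X$ and $Y$ are also non-adjacent, the full parent set $\gDescr\Pa_{R=r}(Y)$ d-separates $X$ from $Y$ in $\gDescr{G}_{R=r}$ (a node's parents block every path to its non-descendants). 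In case~(a), $R\notin\gDescr\Pa_{R=r}(Y)$ because $Y$ is not a union child of $R$, so this d-separating set equals $Z$; in case~(b) we recover it by conditioning on $Z$ together with the event $\{R=r\}$. It therefore suffices to certify that the appropriate distribution obeys the Markov property with respect to $\gDescr{G}_{R=r}$; the case split is arranged precisely so that whenever directed cycles could interfere we are in case~(b) and argue inside the per-context model, where $\gDescr{G}_{R=r}$ is acyclic.

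For case~(b) --- $Y$ a union child of $R$ or lying on a union cycle --- I would argue directly from the structural equations of the intervened model. By definition of the descriptive graph, $P_M(\,\cdot\mid R=r)$-almost surely $Y=g\!\left((X_j)_{j\in Z},\eta_Y\right)$ for a fixed measurable $g$: restricting $f_Y$ with $R$ fixed to $r$ to $\supp P_M(V\mid R=r)$ kills the dependence on every coordinate outside $Z$. It remains to show $\eta_Y\independent\big((X_j)_{j\in Z},X\big)\mid R=r$. Strong context-acyclicity forbids a cycle through an ancestor of $R$, so in case~(b) $Y$ is not an ancestor of $R$ and hence $\{R=r\}$ lies in the $\sigma$-algebra generated by $\{\eta_j:j\neq Y\}$. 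Topologically ordering the DAG $\gDescr{G}_{R=r}$, every element of $Z\cup\{X\}$ is, $P_M(\,\cdot\mid R=r)$-a.s., a measurable function of the noises of its $\gDescr{G}_{R=r}$-ancestors, none of which is $\eta_Y$ --- by the choice of $Z$ and the hypothesis on $X$, $Y$ is not a $\gDescr{G}_{R=r}$-ancestor of any element of $Z\cup\{X\}$. Causal sufficiency (mutual independence of the $\eta_i$) then yields the conditional independence, hence $X\independent Y\mid Z,R=r$.

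Case~(a) --- $Y$ neither a union child of $R$ nor on a union cycle --- runs along the same lines but on the pooled distribution $P_M$, and here single-graph-sufficiency is essential: it eliminates the support problem, so the dependence structure of the mechanisms on the per-context support coincides with the one on the full support; consequently $Y=g\!\left((X_j)_{j\in Z},\eta_Y\right)$ already holds $P_M$-almost surely, and likewise each $W\in Z\cup\{X\}$ is $P_M$-a.s.\ a function of the noises of its $\gDescr{G}_{R=r}$-ancestors. Combining this with Lemma~\ref{lemma:PhysChangesAreRegimeChildren} --- the physical changes, hence all discrepancies between $\gDescr{G}_{R=r}$ and $\gUnion{G}$, are confined to the adjacencies of children of $R$ --- and the case assumption on $Y$ shows that $\gDescr{G}_{R=r}$ and $\gUnion{G}$ agree on the relevant neighbourhood of $Y$, so that $\eta_Y$ is independent of $\big((X_j)_{j\in Z},X\big)$ under $P_M$. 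Mutual independence of the noises then gives $X\independent Y\mid Z$ on the pooled data.

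The step I expect to be the main obstacle is the bookkeeping in case~(a): precisely relating the three observable graphs $\gUnion{G}$, $\gDescr{G}_{R=r}$ and $\gPhys{G}_{R=r}$ --- through single-graph-sufficiency and Lemma~\ref{lemma:PhysChangesAreRegimeChildren} --- so as to guarantee that no directed path from $Y$ to an element of $Z\cup\{X\}$ can be ``resurrected'' in the pooled structure by detouring through a child of $R$; this is exactly where the sufficiency assumption earns its keep. One must also carefully carry the ``where defined''/support caveats in the definition of $G[\mathcal F,P]$ throughout, and use causal sufficiency (independent exogenous noises, no latent confounders) to justify stopping at the parents $Z$ rather than a larger blocking set.
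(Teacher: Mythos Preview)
Your argument for case~(b) is a valid and pleasantly direct alternative to the paper's route. The paper does not work with $P_M(\,\cdot\mid R=r)$ directly; instead it passes to the \emph{intervened} model $M_{\PearlDo(R=r)}$, uses that $\gCF{G}_{R=r}=\gDescr{G}_{R=r}$ (single-graph-sufficiency) is the visible graph of that SCM, obtains $X_r\independent Y_r\mid Z_r,R_r$ from ordinary d-separation there, and then transfers back to $X\independent Y\mid Z,R=r$ via the consistency of potential responses \citep[cor.~7.3.2]{pearl2009causality}. Your approach---showing that on $\{R=r\}$ every $W\in Z\cup\{X\}$ is a measurable function of $\{\eta_j:j\neq Y\}$, that $\{R=r\}$ is itself $\sigma(\{\eta_j:j\neq Y\})$-measurable because $Y$ is not a union ancestor of $R$, and then invoking independence of exogenous noises---achieves the same conclusion without the counterfactual machinery.

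Case~(a), however, has a genuine gap. Your central claim that ``each $W\in Z\cup\{X\}$ is $P_M$-a.s.\ a function of the noises of its $\gDescr{G}_{R=r}$-ancestors'' is not what single-graph-sufficiency gives you: under $P_M$ the variables are generated by the \emph{original} mechanisms $\mathcal{F}^M$, so each $W$ depends on the noises of its $\gUnion{G}$-ancestors, and the union graph can contain directed paths from $Y$ to $X$ that run through children of $R$ even when $Y\notin\gDescr\Anc_{R=r}(X)$ and $Y$ satisfies the case-(a) hypotheses. Concretely, take $Y\to A\to X$ in $\gUnion{G}$ with the edge $Y\to A$ removed in context $r$ (so only $A$ is a child of $R$); then $Y$ is a union ancestor of $X$, $\eta_Y$ enters the pooled expression for $X$, and your conclusion $\eta_Y\independent(Z,X)$ under $P_M$ fails. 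Lemma~\ref{lemma:PhysChangesAreRegimeChildren} only tells you that $\gDescr\Pa_{R=r}(Y)=\gUnion\Pa(Y)$, i.e.\ the graphs agree \emph{at} $Y$; it says nothing about the global ancestral structure above $X$. This is why the paper does \emph{not} attempt to reason with $\gDescr{G}_{R=r}$ in case~(a) at all: it first argues that $X$ and $Y$ are non-adjacent in $\gUnion{G}$ (using weak context-sufficiency to rule out $X\to Y$), then applies the path-blocking Lemma~\ref{lemma:path_blocking} and the $\sigma$-Markov property of Lemma~\ref{lemma:visible_markov} \emph{in the union graph}, with $Z=\gUnion\Pa(Y)$ (which equals $\gDescr\Pa_{R=r}(Y)$ here). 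Working in $\gUnion{G}$ is essential precisely because pooled-data dependence sees union paths that the per-context graph hides.
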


\begin{rmk}\label{rmk:markovR}
    If either $X = R$ or $Y = R$, then there is no per-context test available.
    In this case, we fall back to the standard result:
    If $X$ and $Y$ are not
    adjacent in $\ACycl(\gUnion{G})$, then there is
    $Z \subset \gDescr\Adj_{R=r}(Y)$
	or $Z \subset \gDescr\Adj_{R=r}(X)$ such that $X \independent Y | Z$
	(see Sec.~\ref{sec:path_blocking}).
\end{rmk}

The full proof can be found in Sec.~\ref{apdx:MarkovProof}. The idea is as follows: Case (a) is relatively standard and follows by
a "path-blocking" argument \citep{Verma1991}.
We focus on case (b), proved in three steps:

\textbf{(i)} Assuming single-graph-sufficiency, $\gDescr{G}_{R=r}$ agrees with the graph that would have been observed
if $R$ had been intervened to $r$.
This graph is, in a standard way, associated to the intervened SCM.
Thus, the usual Markov property holds \citep[Prop.\,6.31 (p.\,105)]{Elements},
and independence in that SCM is reduced to d-separation in the context-specific graph (see (iii)). 
\textbf{(ii)} For the cases excluded by case (a), $\gDescr\Pa_{R=r}(Y)\cup\{R\}$
is always a valid d-separating set (\ref{lemma:path_blocking}). 
\textbf{(iii)} The intervened model observed under the noises $\eta_i$ of the original SCM,
has a counterfactual distribution (Sec.~\ref{apdx:cf}) and by standard results
\citep[cor.\,7.3.2 (p.\,229)]{pearl2009causality},
the counterfactual independence
$X_r \independent Y_r | Z_r, R_r=r$ implies $X \independent Y | Z, R=r$.

\textbf{Faithfulness properties}
The faithfulness property ensures that \textit{links that should remain in the graph, do remain in the graph} and are not erroneously deleted due to
ill-suited independence relations.
A probability distribution $P$ is faithful to a DAG $G$
if independence $X \independent_P Y | Z$ with respect to $P$
implies d-separation $X \independent_G Y | Z$ with respect to $G$.
This means that, if $G' \subset G$ is (strictly) sparser,
then faithfulness to $G'$ is (strictly) weaker than faithfulness to $G$.
Now, $\gDescr{G}_{R=r}\subset \gUnion{G}$, so
"$P_M(\ldots)$ is faithful to $\gDescr{G}_{R=r}$" is weaker than
the standard assumption "$P_M(\ldots)$ is faithful to $\gVisible{G}$".
Similarly (excluding links involving $R$),
$\gDescr{\bar{G}}_{R=r}$ is sparser than what one would expect for a
graph of the conditional model (there are no edges induced by selection bias
$\gDescr{\bar{G}}_{R=r}$), so "$P_M(\ldots|R=r)$ is faithful to $\gDescr{\bar{G}}_{R=r}$"
is also weaker than expected.
The hypothesis of the following lemma is thus unconventional but
not particularly strong. This lemma is proved in Sec.~\ref{apdx:faithful}.

\begin{lemma}\label{lemma:faithful}
	Given $r$, assume both $P_M$ is faithful to $\gDescr{G}_{R=r}$
	and $P_M(\ldots|R=r)$ is faithful to $\gDescr{\bar{G}}_{R=r}$
	(we will refer to this condition as $r$-faithfulness,
	or $R$-faithfulness if it holds for all $r$).
	Then:
	\begin{equation*}
	\exists Z \txt{ s.\,t.\ }
	\left\{\quad\begin{aligned}
	X &\independent Y | Z \quad\txt{or}\\
	X, Y \neq R \txt{ and }X &\independent Y | Z, R=r
	\end{aligned}\quad\right\}
	\quad\Rightarrow\quad
	\txt{$X$ and $Y$ are not adjacent in $\gDescr{G}_{R=r}$}.
	\end{equation*}
\end{lemma}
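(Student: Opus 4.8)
The plan is to prove the implication by a short case split on which of the two independence statements in the premise the witnessing set $Z$ satisfies, and in each case to chain the appropriate faithfulness hypothesis with the elementary fact that in a DAG a pair of adjacent nodes is d-connected given \emph{every} conditioning set (the single edge joining them is a path that no conditioning set can block), so that d-separation forces non-adjacency. Before splitting I would record that the two faithfulness hypotheses implicitly presuppose (weak descriptive) context-acyclicity, so $\gDescr{G}_{R=r}$ is a DAG; consequently $\gDescr{\bar{G}}_{R=r}$, which by the construction in Sec.~\ref{sub:graphs_new} arises from $\gDescr{G}_{R=r}$ by deleting all edges incident to $R$, is a DAG as well. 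In particular the two graphs carry exactly the same edges among $V\setminus\{R\}$, with $R$ isolated in $\gDescr{\bar{G}}_{R=r}$, so d-separation is well defined in both — which is all the remaining steps need.

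For the first case, suppose $X \independent Y | Z$ under $P_M$. Faithfulness of $P_M$ to $\gDescr{G}_{R=r}$ turns this into d-separation of $X$ and $Y$ given $Z$ in $\gDescr{G}_{R=r}$, hence into non-adjacency of $X$ and $Y$ there; note this step uses neither $X\neq R$ nor $Y\neq R$, matching the asymmetric form of the statement. For the second case, suppose $X, Y \neq R$ and $X \independent Y | Z, R=r$ under $P_M(\cdot|R=r)$. Here I would first observe that under $P_M(\cdot|R=r)$ the variable $R$ is a.s.\ the constant $r$, so conditioning further on $R=r$ (or on $R$ at all, should $R$ happen to lie in $Z$) is inert, and the premise is equivalent to $X\independent Y | Z\setminus\{R\}$ under $P_M(\cdot|R=r)$. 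Applying faithfulness of $P_M(\cdot|R=r)$ to $\gDescr{\bar{G}}_{R=r}$ then yields d-separation, hence non-adjacency, of $X$ and $Y$ in $\gDescr{\bar{G}}_{R=r}$. Finally, since $X,Y\neq R$ and the two graphs agree on all non-$R$-incident edges, non-adjacency transfers from $\gDescr{\bar{G}}_{R=r}$ to $\gDescr{G}_{R=r}$, completing the argument.

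I do not expect a genuine obstacle: the lemma is bookkeeping once the two graphs are correctly identified. The two points that warrant care are (i) checking that conditioning on the degenerate $R$ inside $P_M(\cdot|R=r)$ is truly inert, which is exactly what makes the per-context conditional-independence test correspond to d-separation in the $R$-free graph $\gDescr{\bar{G}}_{R=r}$ and not in $\gDescr{G}_{R=r}$; and (ii) the transfer of non-adjacency from $\gDescr{\bar{G}}_{R=r}$ back to $\gDescr{G}_{R=r}$, which relies on the fact — recalled in Sec.~\ref{sub:graphs_new} — that $\gDescr{G}_{R=r}$ extends $\gDescr{\bar{G}}_{R=r}$ solely by edges incident to $R$, together with $X,Y\neq R$. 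The real content of the lemma is not in its proof but, as the paragraph before it explains, in the fact that faithfulness to the sparser graphs $\gDescr{G}_{R=r}$ and $\gDescr{\bar{G}}_{R=r}$ is strictly weaker than the customary faithfulness to $\gUnion{G}$, so the hypothesis, though unconventional, is mild.
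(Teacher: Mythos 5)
Your argument is correct and is essentially the paper's own proof read in the contrapositive direction: the paper assumes adjacency, notes that the direct edge $X\rightarrow Y$ cannot be blocked by any $Z$, and invokes the two faithfulness hypotheses to rule out both forms of independence, which is exactly your case split run backwards. Your two "points of care" (the inertness of conditioning on the degenerate $R$ under $P_M(\cdot\mid R=r)$, and the transfer of adjacency between $\gDescr{G}_{R=r}$ and $\gDescr{\bar{G}}_{R=r}$ for $X,Y\neq R$) are handled implicitly in the paper's one-line remark that the per-context dependence statement "is by definition the same as" dependence under $P_M(\cdot\mid R=r)$, so making them explicit is a harmless refinement rather than a different route.
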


\textbf{Soundness of the algorithm} \textit{The proposed algorithm
recovers a meaningful graph.}
Our algorithm is (descriptively) sound (Rmk.\ \ref{rmk:descr_soundness}) but would not be complete without the sufficiency assumptions introduced in Sec.~\ref{sub:sufficiency}.
Moreover, the strong- or single-graph-sufficiency makes our algorithm discover the physical graph soundly.
\begin{thm}\label{thm:soundness}
	Assume causal sufficiency, single-graph-sufficiency, $r$-faithfulness, and
	strong context-acyclicity with minimal union cycles of length $\leq 2$ (edge-flips).
	In the oracle case (if there are no finite-sample errors during
	independence testing), algorithm \ref{algo}
	recovers the skeleton of $\gDescr{G}_{R=r} = \gPhys{G}_{R=r}$.
\end{thm}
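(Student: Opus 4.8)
The plan is to prove the two halves of skeleton recovery --- nothing true is deleted, everything spurious is deleted --- while keeping track of which of the two tests Algorithm~\ref{algo:context} selects. At the very start, single-graph-sufficiency supplies $\gDescr{G}_{R=r} = \gPhys{G}_{R=r}$, so from then on the target is a single fixed graph; strong context-acyclicity makes it (and $\gDescr{\bar{G}}_{R=r}$) acyclic, so d-separation, skeletons and local Markov statements are all well defined, and the edges of $\gDescr{G}_{R=r}$ touching $R$ are by construction exactly the $R$-edges of $\gUnion{G}$.

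\textbf{Soundness (the invariant).} I would show by induction over the outer loops of Algorithm~\ref{algo:context} that the working graph always contains the skeleton of $\gDescr{G}_{R=r}$. An edge is removed only on a positive independence test, and the tests come in two shapes: a pooled test $X_i \independent X_j \mid S$ (run exactly when $R \notin S$, which is forced whenever $X_i = R$ or $X_j = R$, since then $R \notin \Adj(X_j) \setminus \{i\}$) and a context-specific test $X_i \independent X_j \mid S \setminus R, R = r$ (run when $R \in S$, which forces $X_i, X_j \neq R$). In the second case, and in the first case with $X_i, X_j \neq R$, Lemma~\ref{lemma:faithful} ($r$-faithfulness) already yields that $X_i$ and $X_j$ are non-adjacent in $\gDescr{G}_{R=r}$; in the first case with an endpoint equal to $R$, faithfulness of $P_M$ to $\gDescr{G}_{R=r}$ turns the pooled independence into a d-separation, hence again non-adjacency. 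So only true non-edges are ever cut, and $\gDescr\Adj_{R=r}(W) \subseteq \Adj(W)$ holds for every node $W$ at every stage.

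\textbf{Completeness (the adaptive rule locates a separator).} Fix $X, Y$ non-adjacent in $\gDescr{G}_{R=r}$; by acyclicity assume $Y \notin \gDescr\Anc_{R=r}(X)$ and use $X_j = Y$ as pivot. If both endpoints differ from $R$, Lemma~\ref{lemma:markov} gives $Z \subseteq \gDescr\Adj_{R=r}(Y)$ with either (a) $X \independent Y \mid Z$ on $P_M$, when $Y$ is neither a union child of $R$ nor on a union cycle, or (b) $X \independent Y \mid Z, R = r$ otherwise, the set in case (b) being $\gDescr\Pa_{R=r}(Y) \cup \{R\}$; if some endpoint is $R$, Remark~\ref{rmk:markovR} gives such a $Z$ (inside the adjacencies of one endpoint, not containing $R$) with the pooled independence. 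By the invariant $Z \subseteq \Adj(Y) \setminus \{X\}$, so once the conditioning-set size reaches $|Z|$ the algorithm tests $S = Z$ for the edge $X - Y$ (unless it was already removed). The branch picked by the rule ``$R \in S$?'' then matches the guaranteed independence: in case (a) and in Remark~\ref{rmk:markovR} one has $R \notin Z$ --- in (a) the path-blocking set is a set of $\gDescr{G}_{R=r}$-parents of $Y$, and $R$ lies in it only if $Y$ is a union child of $R$, which (a) excludes; in the remark $R$ is non-adjacent to the other endpoint --- so the pooled test is run and reports independence; in case (b) the set contains $R$, so the context-specific test $X \independent Y \mid \gDescr\Pa_{R=r}(Y), R = r$ is run and reports independence. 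Either way $X - Y$ is removed. Combining with soundness, the final skeleton equals that of $\gDescr{G}_{R=r} = \gPhys{G}_{R=r}$; the orientation step is irrelevant to the stated claim.

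\textbf{Where the difficulty sits.} The non-routine step is the alignment just used: that ``context-specific test exactly when $R$ is in the conditioning set'' coincides with case (b) of Lemma~\ref{lemma:markov}, and that no earlier separating set $S$ can subvert this --- either by supporting a spurious CSI $X \independent Y \mid S \setminus R, R = r$ for an edge that truly belongs to $\gDescr{G}_{R=r}$, or by making a pooled test fail because conditioning on $R = r$ has injected selection bias. The first possibility is killed by $r$-faithfulness (it is the soundness argument above). The second is the mutual-exclusiveness phenomenon recalled in Section~\ref{sec:method}: by Lemma~\ref{lemma:PhysChangesAreRegimeChildren} all physical changes sit downstream of $R$, in the adjacencies of its union children, whereas selection-bias artifacts sit strictly upstream of $R$, and strong context-acyclicity --- no directed cycle through $R$ or an ancestor of $R$, with minimal union cycles only of length $\leq 2$ --- forbids these two regions from meeting; the length restriction is also what keeps the separators of Lemma~\ref{lemma:markov} inside the adjacency lists, which is precisely what the adjacency-restricted search of Algorithm~\ref{algo:context} relies on. So whenever $R \notin S_0$ for the valid separator $S_0 = \gDescr\Pa_{R=r}(Y)$ we are genuinely in the benign (no physical change, no selection bias) region and the pooled test is legitimate; and whenever $R \in S_0$ we are at a union child of $R$ and the context-specific test is the correct --- and, by mutual exclusiveness, safe --- one. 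I expect most of the real work to be in making this last paragraph airtight, i.e.\ in the proofs of Lemmas~\ref{lemma:markov} and \ref{lemma:PhysChangesAreRegimeChildren}, rather than in the bookkeeping that assembles them here.
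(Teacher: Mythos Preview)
Your proof is correct and follows the same three-part structure as the paper's: faithfulness (Lemma~\ref{lemma:faithful}) prevents wrong deletions, the Markov property (Lemma~\ref{lemma:markov}) supplies a separator inside $\gDescr\Adj_{R=r}(Y)$, and the PC-style enumeration guarantees that separator is eventually tested with the branch matching the adaptive rule. The only cosmetic deviation is that you argue $R\notin Z$ in case~(a) by inspecting the construction inside the proof of Lemma~\ref{lemma:markov}, whereas the paper instead allows $R\in Z$ and observes that $X\independent Y\mid Z\setminus\{R\},R$ implies $X\independent Y\mid Z\setminus\{R\},R=r$, so the context-specific test the algorithm then selects still succeeds; both routes are valid.
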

The detailed proof is in Sec.~\ref{apdx:soundness}.
Restricting cycles to edge flips is necessary to avoid the problems
discussed in Sec.~\ref{apdx:large_cylces} and Sec.~\ref{apdx:large_cycles_and_R}. 
Together with the context-acyclicity assumption,
it ensures that all nodes involved in union cycles are union children of $R$, 
and thus all edges pointing at union cycles are tested on context-specific data, avoiding problems with acyclifications (see Rmk.~\ref{rmk:markovR} and
\citep{bongers2021foundations}).
The proof itself consists of three parts: \textbf{(i)} Testing \emph{all} subsets $Z \subset \Adj(Y)$ as separating sets
finds all missing links by the Lemma~\ref{lemma:markov}.
This follows from context-sufficiency and the restricted form of cycles. 
\textbf{(ii)} The algorithm does test all such subsets $Z \subset \Adj(Y)$. Identical to the argument for the PC algorithm.
\textbf{(iii)} We do not remove too many edges. This follows directly from Lemma~\ref{lemma:faithful}.

\section{Simulation study}
\label{sec:simulation}

\subsection{Experimental setup}
\label{sec:experimental_setup}

\textbf{Data generation} Our data models are created by first randomly generating a linear acyclic SCM with up to $D+1$ nodes (base graph) 
of the form $X^{i} = \sum_j^{D+1} a_{ij} X^j + c_{j} \eta^j$,  where $i = 1, \ldots , D+1$, at a desired sparsity level $s$. 
One of the nodes is randomly selected as the context indicator $R$.
To obtain categorical context indicators for the $n_{\text{contexts}} - 1$ context-specific SCMs, 
thresholding is applied by taking the $n_{\text{contexts}} - 1$ quantiles of the continuous values of $R$, adjusted using a data imbalance factor $\textit{b}$.
To generate a context-specific SCM,
$n_{\text{change}}$ links of the base graphs are edited sequentially by selecting a variable randomly
(excluding the context indicator),
adding, removing, or flipping an edge to this variable, and ensuring it is a child of $R$ by adding an edge from $R$ to it 
(and in the case of edge flips, also to the node at the other end of the edge). 
These operations lead to a non-linear relationship between the context indicator and the other variables. 
We enforce the presence or absence of cycles in the union graph. The data generation approach is detailed in Sec.~\ref{app:further_details_data_gen}.

\textbf{Evaluated methods} We compare our method, named PC-AC (AC is short for "adaptive context"), to a variant where PC-stable \cite{Colombo2012OrderindependentCC} is applied on 
each context-specific dataset separately by masking, denoted by PC-M. We expect PC-M to have a higher false positive rate (FPR) due to selection bias
and a lower true positive rate (TPR), as links involving the context indicator cannot be found using the masked data. 
We also compare our algorithm to PC-stable applied on the pooled dataset, named PC-P, which only recovers joint information, i.e., union graphs.
We expect PC-P to have a slightly higher TPR,  as it tests all links on all data points. 
PC-P should not suffer from selection bias; thus, the FPR of PC-P should be lower than that of PC-M 
for acyclic union graphs. For cyclic union graphs, PC-P only finds the acyclification, thus, the FPR should asymptotically remain finite. 
We also compare to a baseline method, named PC-B: For each context, we compute the intersection of the links found by PC-M and PC-P and then add 
the links found using PC-P for $R$. PC-B should converge asymptotically toward the true graph (see Sec.~\ref{apdx:intersection_graph}). 
However, as PC-B runs more tests overall, PC-AC should have better finite-sample properties. As we do not control for cycle length, 
the performance of PC-AC might be affected by large cycles, as discussed in Sec.~\ref{apdx:large_cylces} and Sec.~\ref{apdx:large_cycles_and_R}. 
We obtain union graphs for PC-M and PC-B using the unionization approach from Sec.~\ref{sec:method}. We also compare union graphs from two methods 
testing pooled data: FCI-JCI0 \cite{spirtes2000causation, mooij2020joint} and CD-NOD \cite{Zhang2017CausalDF}. We do not compare to other 
methods that obtain context-specific graphs, as they assume exogenous context indicators. A fair comparison to our algorithm is hard to realize 
due to our assumptions and a lack of available methods suitable to this problem. We compare computational runtimes in App.~\ref{app:runtimes}.

We present results where $D+1=8$ and density $s=0.4$. For all $j$, $\eta^j \sim \mathcal{N}(0,1)$. 
We draw the non-zero $a_{i,j}$ randomly from $\{1.8, 1.5, 1.2, -1.2, -1.5, -1.8\}$, and set $c_j=1$ for all $j$. For the context indicator $R$, 
we set $c_R=0.2$. We discuss results with and without enforced cycles. For the configurations without cycles, we apply only the \textit{remove} operation 
(see Sec.~\ref{app:further_details_data_gen}). The dataset presented here is balanced in the number of samples for each $r$, i.e., $b=1$. 
A challenge of our experimental setup is that non-linear relationships between continuous variables and the categorical $R$ must be tested (see also App \ref{apdx:CtxtSysTesting}). 
Non-parametric permutation-based tests for mixed-type data with non-linear relationships exist 
\cite{Mesner2020ConditionalMI, Zan2022ACM, Popescu2023NonparametricCI}, but are computationally expensive. 
Therefore, we apply a parametric mixed-type data test \cite{Tsagris2018ConstraintbasedCD} for most experiments. 
For a fair comparison, we run versions of the algorithms with so-called link assumptions,
where the adjacencies of $R$ are either partially or fully known, i.e.,
we input the (directed) links to or from $R$ (adj. $R$-all), or the (directed) links from $R$ to its children (adj. $R$-ch.), 
or test without known links (adj. none). For all tests involving only continuous variables, we use partial correlation. When testing pooled data, 
the link from a regressed-out variable might be missing in one of the contexts, but partial correlation tests should still work, 
as discussed in Sec.~\ref{app:CIT_remark}. We show results for "adj. $R$-ch." and "adj. none" and postpone results with "adj. $R$-all" to 
Sec.~\ref{app:further_results}. Here, we discuss the TPR and FPR for the union and context-specific graphs (averaged across contexts) 
but also evaluate edgemark precision and recall in Sec.~\ref{app:further_results}. Error bars are generated using Bootstrap with $200$ iterations. 
We repeat each experiment $100$ times; however, the graph generation is not always successful.
We evaluate how well our method scales, e.g., 
with different $D, s$ and $b$ in Sec.~\ref{app:further_results}. The code, based on causal-learn \cite{Zheng2023CausallearnCD} and 
Tigramite \cite{runge2020discovering}, and details for replication (random seeds) can be found here: \url{https://github.com/oanaipopescu/adaptive_endogenous_contexts}.

\subsection{Results}
\label{sec:results}
\begin{figure}[ht]
    \centering
    \includegraphics[width=0.38\linewidth]{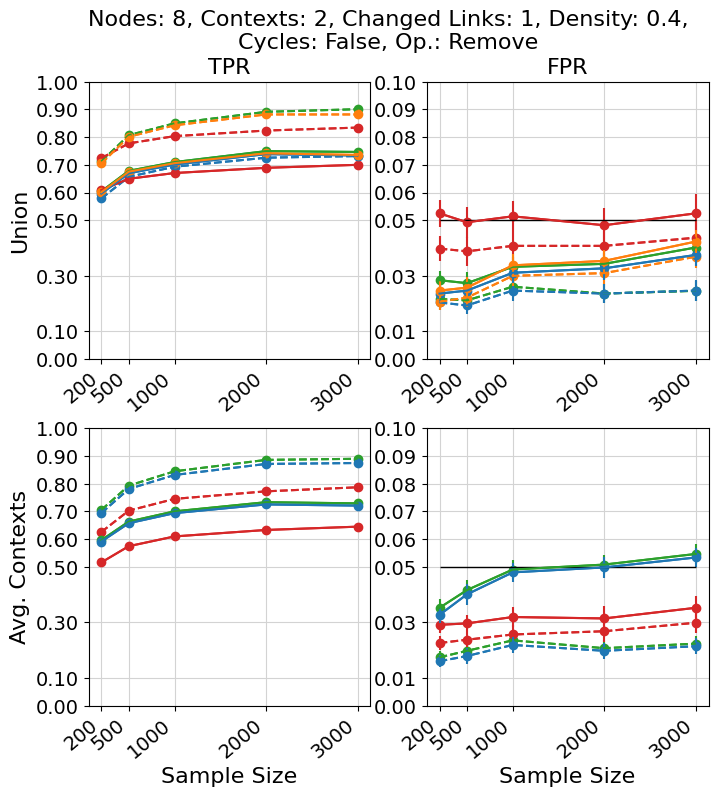}
    \includegraphics[width=0.38\linewidth]{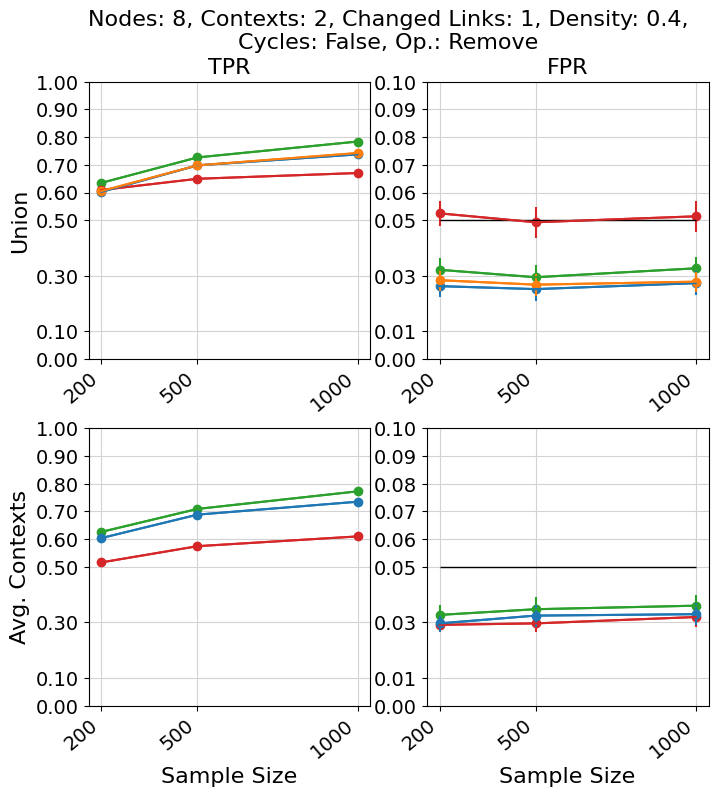}
    \includegraphics[width=0.6\linewidth]{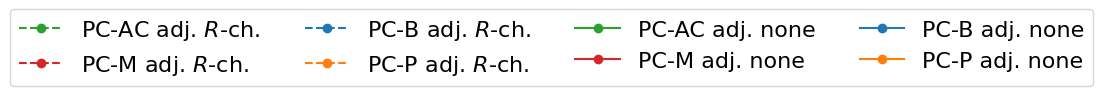}
    \caption{Results for our algorithm PC-AC (adaptive context, our method), PC-M (masking), PC-B (baseline), and PC-P (pooled) on the setup described in Sec.~\ref{sec:experimental_setup}. Dotted lines indicate settings where links from the context indicator $R$ to its children are
    fixed\Slash{}known (adj. $R$-ch.), solid lines indicate no prior knowledge about links (adj. none). The left panel shows results using a parametric CIT, and the right panel shows results for the same setting with "adj. none" using a non-parametric CIT.}
    \label{fig:main_results}
\end{figure}

The left plot of Fig.~\ref{fig:main_results} shows results for the experimental setup described in Sec.~\ref{sec:experimental_setup} \textbf{without cycles}. 
For individual contexts, our method outperforms PC-M in terms of FPR for both link assumption cases. Without any link assumptions, the FPR for both PC-B and 
PC-AC increases significantly. We test whether this is due to the CIT problem (Sec.~\ref{apdx:CtxtSysTesting}) by using a non-parametric 
CIT \cite{Popescu2023NonparametricCI}. Results in the right plot of Fig.~\ref{fig:main_results} show that the FPR is significantly lowered for all methods. 
For PC-M, the union graph FPR is higher than the context-specific FPR, as the links between context indicator 
and system variables cannot be found (as $R$ is a constant) for the individual context graphs. Still, false positives can appear in the union graph due to selection bias. Our method performs 
best regarding TPR and FPR on union and context-specific graphs with context children link assumptions and without link assumptions. 
PC-B performs slightly worse due to its finite-sample properties. However, with fully known context-system links (Fig.~\ref{fig:res_main_paper_all_link_asm} 
in Sec.~\ref{app:res_main_paper_all_link_asm}), masking finds more links overall, having the best TPR but also higher FPR, followed by PC-AC, PC-B, and PC-P. 
Results for the same setup as in Sec.~\ref{sec:experimental_setup} \textbf{with enforced cycles} in the union graph (Fig.~\ref{fig:cycles} in Sec.~\ref{app:cycle}) 
show similar behavior of the TPR and FPR, albeit some differences are less pronounced, possibly since the complexity of the problem increases. 
As expected, the FPR for PC-P, which detects only the acyclification, increases. We observe that for all versions without link assumptions, 
the FPR increases significantly compared to the setup without cycles. Larger differences in FPR between PC-M and PC-AC can be seen for samples up to $2000$. 
Regarding TPR, PC-AC and PC-B perform best for the context-specific graphs, especially without any link assumptions. Results for the comparison with FCI and 
CD-NOD for the \textbf{union graphs}, detailed in Sec.~\ref{app:fci}, show that our method slightly outperforms CD-NOD, which assumes acyclicity. 
Our method outperforms FCI in terms of TPR, 
but FPR is lower for FCI.

\section{Discussion and conclusion}\label{sec:discussion}

In this work, we have introduced a novel method for causal discovery (CD) of context-specific graphs and their union in the presence of a 
(possibly) endogenous context indicator. We propose a simple modification of the PC algorithm, but our method can be combined with any other constraint-based CD algorithm.
Our approach provides an elegant and efficient solution to the problem of selection bias. 
Further,
we discuss formally linking context-specific independence (CSI) relations to structural causal models (SCMs). To this end, we introduce novel graphical objects and assumptions
and prove the soundness of our algorithm. 
We discuss the theoretical and practical limitations to recovering the true graphs, e.\,g.\ in the presence of large cycles (\ref{apdx:large_cylces}).
Our method has a lower FPR than applying CD on each context individually, which can suffer from selection bias;
it displays better finite-sample performance than a proposed baseline method.
Our method is comparable with CD-NOD
\cite{Zhang2017CausalDF} and FCI \cite{Saeed2020CausalSD} on the union graphs, but additionally provides context-specific information.

\textbf{Limitations}
Our method requires that the context indicator $R$ is observed and chosen appropriately,
as discussed in Sec.~\ref{sec:connect_to_scm};
it can handle some union cycles,
but not reliably so for large union cycles (see Sec.~\ref{apdx:large_cycles_and_R}).
Without prior knowledge about adjacencies of $R$, our method has difficulties significantly outperforming 
pooled methods and keeping up with the baseline in numerical tests; this is likely due to the CIT problem discussed in Sec.~\ref{sec:results}, 
an interpretation supported by a smaller experiment with a non-parametric CIT (Fig.~\ref{fig:main_results}, right panel). We thus believe that the problem can be solved using appropriate CITs.
Finite sample errors incur further propagated errors similar to standard CD;
errors involving links to $R$ additionally lead to a regression to standard CD concerning context-specific information (see App \ref{apdx:error_propagation}).
In this work, we have assumed causal sufficiency, however, real-world data often contains hidden confounders.

\textbf{Future work}
We have mainly focused on skeleton discovery and 
omitted a detailed study of orientation rules. Algorithms such as FCI and CD-NOD already perform well for edge orientations of the union graph, 
as small experiments in Sec.~\ref{app:fci} indicate. But there also seem to be additional edge orientations available only by combining per-context information
(see Sec.~\ref{apdx:orientations} for an example), 
a possibility also mentioned in \citep{hyttinen2018structure}; these orientations can be beyond what can be concluded even with JCI arguments.
In practice, contexts are often assigned by thresholding,
so an extension to contexts given as deterministic functions of observed variables (see Sec.~\ref{apdx:deterministic_indicators})%, might be
is of interest.
Similar remarks apply to anomaly detection outputs, defining normal and anomalous contexts.
The assumption of causal sufficiency
simplifies the arguments but does not seem strictly necessary;
It can likely be relaxed at work in the future. 
Finally, our method is, in principle, extendable
to the time-series case. Particularly when allowing for contemporaneous relations of the context indicator, 
selection bias is a non-trivial problem for time-series.

\begin{ack}
We thank Rebecca Jean Herman for the fruitful discussions, and Tom Hochsprung for his helpful comments on the draft. J.R. and U.N. have received funding from the European Research Council (ERC) Starting Grant CausalEarth under the European Union’s Horizon 2020 research and innovation program (Grant Agreement No. 948112).
J.R., O.I.P., and M.R. have received funding from the European Union’s Horizon 2020 research and innovation programme under grant agreement No 101003469 (XAIDA). O.I.P has received funding from the German Aerospace Center (DLR). 
W.G. was supported by the Helmholtz AI project CausalFlood (grant no. ZT-I-PF-5-11). 
This work was supported by the German Federal Ministry of Education and Research (BMBF, SCADS22B) and the Saxon State Ministry for Science, Culture and Tourism (SMWK) by funding the competence center for Big Data and AI "ScaDS.AI Dresden/Leipzig".
\end{ack}

\bibliography{references}

\appendix

\section{Delimitation from LDAGs}
\label{app:del_from_LDAGs}

LDAGS \cite{pensar2015labeled, hyttinen2018structure} are conceptually closest to our work. We thus delimit our work from LDAGs as follows: 
First, we derive a connection from context-specific graphs to SCMs; also for identification from data,
only CSIs are insufficient in our case,
a combination of CSIs and independence-statements on the pooled data is required.
Second, LDAGs have been studied only under the assumption of acyclicity, while we allow for cycles in the union graph. 
Third, the method of \citet{hyttinen2018structure} only accepts categorical variables, while we enable continuous and discrete variables. 
Fourth, our method reduces the number of tests compared to \citet{hyttinen2018structure}, as we do not test each CI relation conditioned on the context. 
Instead, we adaptively select whether to test on the context-specific or the pooled data.
Finally, we distinguish system variables from the context indicator, whereas in LDAGs every variable is treated as a potential context indicator. From a theoretical perspective, this is advantageous, as it increases information content. However, from a practical perspective, the LDAG approach increases the number of required tests.

\section{Context variables and F-variables}\label{apdx:context_vars}

As described near the end of Sec.\ref{sec:connect_to_scm},
in our case, the context variable is \emph{chosen} from the set of variables.
This choice is primarily guided the regime-sufficiency
assumptions. This allows for a lot of flexibility:
The context variable can be selected from or introduced into the
dataset by expert knowledge or any regime detection algorithm; it simply has to behave like a random variables from the perspective of the CD-algorithm.

Such flexibility may remind the reader of what \citet{pearl2009causality}
calls $\mathcal{F}$-variables (see below) and while the relation is very indirect,
there is indeed an interesting perspective via $\mathcal{F}$-variables.
$\mathcal{F}$-variables are functional variables (taking values in
function spaces) that allow writing any
(hard or soft) interventions implicitly:
If $f_Y(x)$ is the mechanism at $x$, depending
on parents $x$ (possibly multivariate),
one can add a parent 
$\mathcal{F}_Y$ taking a function (e.\,g.\ the
original $f_Y$) as value, and replace the mechanism
at $Y$ by the evaluation map $\eval(f, x) := f(x)$.
Since $\mathcal{F}_Y$ can take a value for
$f$ that represents an intervention (e.\,g.\ a
constant function for a hard intervention)
this generalizes interventions.
Example \ref{example:change_by_indicator_function} can indeed be regarded as a kind of soft intervention on $Y$. 

Context variables as such are an orthogonal concept:
Given multiple distinct datasets over the same set of variables,
the context injects the information about
the specific associations in each dataset to the analysis by appending
a pseudo-variable to the dataset. The value
of this pseudo-variable
typically is simply a dataset index.
This context could be functional but in general
both contexts and functional nodes co-exist independently of each other. 

Now, we are interested in context-specific changes similar to
Example \ref{example:change_by_indicator_function}, which (see above) can be represented as
soft interventions, thus as $\mathcal{F}$-variables.
This means, we can formally describe these as context-variables which are
$\mathcal{F}$-variables.
This allows a different interpretation of the strong
regime-sufficiency assumption (cf.\ App.\ref{apdx:sufficiency}):
It demands, that the context variable belongs to a specific class of
functional nodes, which implement a soft intervention removing some of the parents' effects on certain variables.

In comparison to JCI \citep{mooij2020joint}, our method reveals not only the position of this context $\mathcal{F}$-variable in the graph; it also reveals some of the internal structure of some functional nodes (those satisfying the reformulated strong sufficiency assumption):
It \emph{additionally} uncovers the graphical changes induced by the implemented soft intervention.

Finally, temporal regimes were an important motivation for our study of context-specific behavior. To make the connection to non-stationary time series, these can be formulated using context variables. For the context to be categorical
(which we and others assume), the non-stationarity cannot be
a drift etc.\ but must be driven by a regime change, i.e., there must be temporal regions sharing a model or context. By finding the contexts in which the time series is stationary and describing the causal relationships for those stationary parts of the time series, we can describe the non-stationarity system as a multi-context system.

\section{Details on sufficiency conditions}\label{apdx:sufficiency}

Here, we provide details on the three context-sufficiency assumptions discussed in Sec.~\ref{sub:sufficiency} in the main text. 
Then, we show that these are indeed hierarchical: Strong context-sufficiency $\Rightarrow$ single-graph-sufficiency
$\Rightarrow$ weak context-sufficiency. Finally, we give examples of when the
other direction is violated.

First, we quickly summarize the problem to solve and the
ideas behind the different conditions:
If the observational support given $R=r$ of,
say, $X$, is restricted to a region where a mechanism
$f_Y(X, \ldots)$ is constant in $X$, then this can make the
link from $X$ to $Y$ disappear in independence testing.
Excluding CSIs arising like this requires a way of saying what "constant in a region" means.

The single-graph-sufficiency assumption
will capture this requirement on a graphical level.
This is powerful for proofs,
but does not provide much in terms of explanation
and might be hard to verify given a specific model.
Example~\ref{example:change_by_indicator_function} shows
that there clearly is a rather large and intuitive class
of models which look like
$f_Y(X,R,\ldots) = \mathbbm{1}(R)g(X,\ldots) + h(\ldots)$. Clearly, one still has to ensure that
$g$ or $h$ are "constant in a region", so that only
the indicator function can remove links.
The strong sufficiency condition will capture this intuitive notion, however, we note that this is only one possible formalization 
(see Rmk.~\ref{rmk:strong_suff_is_example}).

As indicated above, the strong sufficiency condition
requires a formal way of saying a map is \emph{not}
constant on any region. One possible way to do so is to require it to be injective (in every argument, after fixing other arguments):

\begin{definition}\label{def:strongly_suff}
    We call an indicator $R$
    \defName{strongly context-sufficient} for an SCM $M$, if both
    \begin{enumerate}[label=(\alph*)]
        \item the $f_i$ are continuous,
            such that:
            \begin{enumerate}[label=(\roman*)]
                \item
                    If $R\notin \Pa(X_i)$ in the mechanism graph, then $f_i$ injective in each argument, after fixing the other arguments to an arbitrary value.
                \item
                    If $R\in \Pa(X_i)$ in the mechanism graph, then
                    $f_i$ is of the form
                    \begin{equation*}
                        f_i(X, X', R, \eta_i) = \mathbbm{1}(R \in A)
                        \times g_i(X, X', R, \eta_i) + \mathbbm{1}(R \notin A) \times g_i'(X, R, \eta_i)\txt,
                    \end{equation*}
                    with $A\subset \val{X}_R$
                    and $g_i$, $g'_i$ injective in each argument, after fixing the other arguments to an arbitrary value. 
                    This could also be generalized to a sum over multiple indicator functions without changing the results.
            \end{enumerate}
        \item for every set of variables $V$ not containing $R$,
            the distributions $P(V)$, $P(V|R=r)$ and $P(V|\PearlDo(R=r))$
            have continuous densities. 
    \end{enumerate}
\end{definition}

\begin{rmk}\label{rmk:strong_suff_is_example}
    The notion of "non-constant on any region"
    could be formalized differently, for example through
    smooth mechanisms and a requirement on gradients.
    Further, this is not supposed to be particularly
    general, but rather to be intuitive.
    More general conditions to ensure that detected
    changes are physical is single-graph-sufficiency or that results are (descriptively) complete is weak
    context-sufficiency (see below).
\end{rmk}

\begin{example}
    A model with continuous densities of noise-distributions, and
    mechanisms depending on $R$ only through
    indicator functions and such that for each value
    $R=r$ the model $M_{\PearlDo(R=r)}$ is linear,
    is strongly regime sufficient. This
    includes Example~\ref{example:change_by_indicator_function}.
\end{example}

\begin{definition} \label{def:single_graph_suff}
    We call a context indicator $R$
    \defName{single-graph-sufficient} for an SCM $M$, if
    \begin{equation*}
        \gDescr{G}_{R=r} = \gPhys{G}_{R=r} = \gCF{G}_{R=r}\txt.
    \end{equation*}
    with the graphs $ \gDescr{G}_{R=r}, \gPhys{G}_{R=r}, \gCF{G}_{R=r}$ defined as in Sec.~\ref{apdx:MarkovGraphs}.
\end{definition}
\begin{rmk}
    Under this condition, descriptive results can be interpreted
    as \emph{physical} mechanism changes.
\end{rmk}

The efficiency of the algorithm proposed in Sec.~\ref{sec:method} comes from the restriction that context-specific changes must occur in children of 
the context $R$. This requires a weaker condition which we formally describe as weak regime-sufficiency. 
In this case, the difficult and interesting question to ask is: How should the output of the algorithm be interpreted under this assumption? This is
discussed in Rmk.~\ref{rmk:descr_soundness}, which can be interpreted as a robustness result: Even when violating
the context-sufficiency conditions, the output of the
algorithm is \emph{descriptively} correct, and, given weak context-sufficiency, it is still complete.

\begin{definition} \label{def:weak_suff}
    We call a context indicator $R$
    \defName{weakly context-sufficient} for an SCM $M$, if
    \begin{equation*}
     	R\notin\gUnion\Pa(Y)
     	\quad\Rightarrow\quad
     	\gDescr\Pa_{R=r}(Y) = \gUnion\Pa(Y)\txt.
    \end{equation*}
\end{definition}
\begin{rmk}
    Under this condition, changes may be \emph{descriptive}
    only, but the results are complete (all
    edge removals are found), see Rmk.~\ref{rmk:descr_soundness}.
\end{rmk}

\subsection{Hierarchy of assumptions}\label{apdx:suff_hier}

Here, we show that above-mentioned assumptions are indeed hierarchical:
Strong context-sufficiency $\Rightarrow$ single-graph-sufficiency
$\Rightarrow$ weak context-sufficiency.
We will start from Def.~\ref{def:strongly_suff}. The underlying idea for requiring (a) is rather direct:
\begin{lemma}\label{lemma:InjectivityImpliesNonConstness}
    If $X \in \Pa(Y)$ in $G[\mathcal{F}]$, and $f_Y\in\mathcal{F}$ is injective in $X$
    after fixing all other arguments,
    then given $S \subset \val{X}_{\Pa(Y)}$
    with $x \neq x' \in \val{X}_X$ and $\pa^- \in \val{X}_{\Pa(Y)-\{X\}}$
    such that $(x, \pa^-), (x', \pa^-) \in S$,
    then
    $f_Y|_S$ is non-constant in $X$.
\end{lemma}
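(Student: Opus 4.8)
The statement is essentially a restatement of what "injective after fixing the other arguments" buys us, so I would spend the bulk of the write-up fixing notation rather than on any substantive step. First I would recall that $f_Y\colon\val{X}_{\Pa(Y)}\times\val{N}_Y\to\val{X}_Y$ and write a parent configuration as $(x_X,q)$ with $x_X\in\val{X}_X$ and $q\in\val{X}_{\Pa(Y)-\{X\}}$; the restriction $f_Y|_S$ is understood as $f_Y$ with its parent argument confined to $S\subset\val{X}_{\Pa(Y)}$, the noise argument left free. The hypothesis "$f_Y$ injective in $X$ after fixing all other arguments" then says precisely: for every $q\in\val{X}_{\Pa(Y)-\{X\}}$ and every $\bar\eta\in\val{N}_Y$, the map $x_X\mapsto f_Y(x_X,q,\bar\eta)$ is injective on $\val{X}_X$.

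Next comes the one-line argument. Fix an arbitrary $\bar\eta\in\val{N}_Y$. Instantiating injectivity at $q=\pa^-$ and at this $\bar\eta$, and using $x\neq x'$, we obtain
\[
f_Y(x,\pa^-,\bar\eta)\;\neq\;f_Y(x',\pa^-,\bar\eta).
\]
Since $(x,\pa^-),(x',\pa^-)\in S$ by hypothesis, $f_Y|_S$ is defined at both points, and already along the single slice through $\pa^-$ it attains the two distinct values above; hence $f_Y|_S$ is non-constant in the $X$-argument. As $\bar\eta$ was arbitrary, this witnesses non-constancy for every realization of $\eta_Y$, which is exactly the sense of "non-constant in $X$" relevant for the observable graph $G[\mathcal{F},P]$ of Sec.~\ref{sub:graphs_new} (there, the edge $X\to Y$ is dropped only when the corresponding map is constant for all parent configurations and all noise realizations).

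I do not anticipate a real obstacle: the mathematical content is just "injective $\Rightarrow$ not constant on two distinct inputs," which needs no computation. The only care required is bookkeeping around the auxiliary exogenous argument $\eta_Y$ and aligning the phrase "non-constant in $X$" with the observable-graph definition; once these are pinned down the lemma is immediate. I would then record the intended use as a one-liner: applied with $S=\supp P_M(\Pa(Y)\mid R=r)$ whenever this conditional support still contains a pair $(x,\pa^-),(x',\pa^-)$ differing only in the $X$-coordinate, the lemma shows the edge $X\to Y$ survives the support restriction — so under Def.~\ref{def:strongly_suff}(a)(i) the "support problem" of Sec.~\ref{sec:connect_to_scm} cannot affect links into non-children of $R$, which is the step feeding the hierarchy Strong $\Rightarrow$ single-graph-sufficiency.
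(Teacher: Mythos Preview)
Your proposal is correct and follows essentially the same one-line argument as the paper: restrict the injectivity-in-$X$ hypothesis to the slice through $\pa^-$, observe that $(x,\pa^-)$ and $(x',\pa^-)$ both lie in $S$, and conclude $f_Y|_S(x,\pa^-)\neq f_Y|_S(x',\pa^-)$. Your additional bookkeeping around the noise argument $\eta_Y$ and the link to the observable-graph definition is more explicit than the paper's version but not a different approach.
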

\begin{proof}
    $f_Y$ is injective in $X$
    after fixing its other arguments to $\pa^-$,
    thus $f_Y|_S$ is injective in $X$
    after fixing its other arguments to $\pa^-$,
    so $(x, \pa^-)\neq (x', \pa^-)$
    implies $f_Y|_S(x, \pa^-) \neq f_Y|_S(x', \pa^-)$.
\end{proof}

Thus, as long as the support $S$ is never singular, i.e., as long as $x \neq x'$ as above exists, restricting to $S$ does not change the parent sets. 
The continuous densities required in (b) are an intuitive possibility to ensure this:
\begin{lemma}\label{lemma:ContDensityImpliesTwoPoints}
    If $X \in \Pa(Y)$ in $G[\mathcal{F}]$ and a distribution $Q$ is given
    with $Q(X)$ and $Q(\Pa(Y))$ possessing continuous densities,
    then
    $\exists x \neq x' \in \val{X}_X$ and $\pa^- \in \val{X}_{\Pa(Y)-\{X\}}$
    such that $(x, \pa^-), (x', \pa^-) \in \supp(P(\Pa(Y)|R=r))$.
\end{lemma}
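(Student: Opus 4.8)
\textbf{Proof plan for Lemma~\ref{lemma:ContDensityImpliesTwoPoints}.}
The plan is to show that the support of a continuous density on $\val{X}_{\Pa(Y)}$, which contains $X$ together with at least one other coordinate, cannot be concentrated on a set whose $X$-fibres are all singletons; two distinct $X$-values lying over a common value $\pa^-$ of $\Pa(Y)-\{X\}$ must then exist. First I would fix notation: write $\val{X}_{\Pa(Y)} = \val{X}_X \times \val{X}_{\Pa(Y)-\{X\}}$ and let $q$ be the continuous density of $Q(\Pa(Y))$ with respect to a product reference measure (e.g.\ Lebesgue, since "continuous density" is meant in the Euclidean sense here). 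The key observation is that $\supp(q)$ has nonempty interior: continuity of $q$ together with $\int q = 1$ forces $q(p^\ast) > 0$ for some $p^\ast = (x^\ast, \pa^{-,\ast})$, and by continuity $q > 0$ on an open neighbourhood $U \times W$ of $p^\ast$ with $U \subset \val{X}_X$ open and nonempty (hence containing two distinct points $x \neq x'$) and $W \subset \val{X}_{\Pa(Y)-\{X\}}$ open and nonempty (hence containing some $\pa^-$).

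The second step is to transfer this from $Q$ to $P(\Pa(Y)\mid R=r)$ as actually stated in the conclusion. Here I would note that the lemma is applied in the paper with $Q = P_M(\cdot\mid R=r)$ (cf.\ the continuous-density clause (b) of Def.~\ref{def:strongly_suff}, which asserts continuous densities precisely for $P(V)$, $P(V|R=r)$ and $P(V|\PearlDo(R=r))$), so $Q(\Pa(Y))$ having a continuous density \emph{is} the hypothesis about $P(\Pa(Y)\mid R=r)$; the support of a continuous density equals the closure of $\{q>0\}$, and the open box $U\times W$ constructed above lies inside $\{q>0\}\subset\supp(P(\Pa(Y)\mid R=r))$. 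Therefore $(x,\pa^-),(x',\pa^-)\in U\times W\subset\supp(P(\Pa(Y)\mid R=r))$ with $x\neq x'$, which is exactly the claim. The hypothesis $X\in\Pa(Y)$ is used only to guarantee that $X$ is a genuine coordinate of the product, so that the factor $U$ is a nondegenerate interval admitting two distinct points.

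The main (and essentially only) obstacle is a definitional one: making precise what "continuous density" means for the possibly mixed-type variables appearing elsewhere in the paper, and ensuring the reference measure on $\val{X}_X$ is non-atomic so that a nonempty open $U$ really contains two distinct points. I would handle this by restricting, as the surrounding text does, to the regime in which the relevant marginals are absolutely continuous with respect to Lebesgue measure on a Euclidean space (the continuous-variable case), where non-atomicity is automatic; the discrete coordinates of $\Pa(Y)$ can simply be frozen at a fixed value throughout, since the argument only needs one value $\pa^-$ of $\Pa(Y)-\{X\}$ to exist in the support. No cycle-, faithfulness-, or SCM-specific machinery is needed — the statement is a soft topological fact about supports of continuous densities — so beyond pinning down the measure-theoretic setting the proof is short.
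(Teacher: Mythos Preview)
Your argument is correct, but it proceeds differently from the paper's proof. The paper first invokes the continuous density of the \emph{marginal} $Q(X)$ to obtain two points $x\neq x'$ with $q_X(x),q_X(x')>0$, and then uses the marginalisation identity $q_X(x)=\int q_{\Pa}(\pa^-,x)\,d\pa^-$ together with continuity of $q_{\Pa}$ to extract a $\pa^-$ with $(x,\pa^-)$ (and, the paper asserts, $(x',\pa^-)$) in the support. You instead work entirely with the \emph{joint} density $q$ of $Q(\Pa(Y))$: pick a point where $q>0$, use continuity to open it up to a product box $U\times W$, and read off $x\neq x'\in U$ over a single $\pa^-\in W$.

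What your route buys is twofold. First, it makes the crucial ``same $\pa^-$'' requirement manifest: because you start from an open box, any two $x$-values in $U$ automatically share every $\pa^-\in W$; the paper's argument, as written, finds a $\pa^-$ tailored to $x$ via marginalisation and then asserts $(x',\pa^-)\in\supp$ without a separate justification, so your version is cleaner on exactly the point that matters for the downstream Lemma~\ref{lemma:InjectivityImpliesNonConstness}. Second, your argument uses only the continuous density of $Q(\Pa(Y))$, so the separate hypothesis on $Q(X)$ becomes redundant. The paper's route, on the other hand, shows explicitly how the assumption on the $X$-marginal enters and keeps the argument closer to the one-dimensional intuition (``a continuous density is not a point mass''), which may be pedagogically useful even if logically unnecessary.
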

\begin{proof}
    If $Q(X)$ has a continuous density $q_x$, it is not a point measure,
    thus there are $x \neq x'$ with $q_x(x) \neq 0$ and $q_x(x') \neq 0$.
    With $X \in \Pa(Y)$ and $Q(\Pa(Y))$ possessing a continuous density $q_{\pa}$,
    we know $q_x = \int_{\Pa(Y) - \{X\}} q_{\pa}(\pa^-,-)d\pa^-$ is the marginalization,
    so $0 \neq q_x(x) = \int_{\Pa(Y) - \{X\}} q_{\pa}(\pa^-,x)$,
    which implies $\exists\pa'$ with $q_{\pa}(\pa^-,x) \neq 0$.
    By continuity thus $(x, \pa^-), (x', \pa^-) \in \supp(P(\Pa(Y)|R=r))$.
\end{proof}

Requiring strong context-sufficiency ensures that the only non-injectivities come from $R$, such that after fixing $R$ in $\mathcal{F}_{\PearlDo(R=r)}$
and ignoring edges involving $R$ (i.\,e., using the barred $\bar{G}$ versions):
\begin{lemma}
    If $M = (\mathcal{F}, U, V)$ is strongly context-sufficient and $R=r$ fixed,
    then the following are equivalent for $X \neq R \neq Y$:
    \begin{enumerate}[label=(\roman*)]
        \item
        $X \in \Pa(Y)$ in $G[\mathcal{F}_{\PearlDo(R=r)}]$
        \item
        $X \in \Pa(Y)$ in $\gDescr{G}_{R=r}[M]$
        \item
        $X \in \Pa(Y)$ in $\gPhys{G}_{R=r}[M]$
        \item
        $X \in \Pa(Y)$ in $\gCF{G}_{R=r}[M]$
    \end{enumerate}
\end{lemma}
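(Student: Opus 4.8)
The plan is to exploit that all four graphs are constructed over one and the same set of intervened structural equations $\mathcal{F}_{\PearlDo(R=r)}$: graph (i) is by definition the mechanism graph $G[\mathcal{F}_{\PearlDo(R=r)}]$, whereas (ii), (iii) and (iv) are the observable graphs of $\mathcal{F}_{\PearlDo(R=r)}$ with respect to $P_M(V|R=r)$, $P_M(V)$ and $P_M(V|\PearlDo(R=r))$ respectively (the extra edges incident to $R$ that are appended to $\gDescr{G}_{R=r}$ play no role here because we only consider $X \neq R \neq Y$). Since an observable graph always has parent sets $\Pa'_i \subset \Pa_i$, the implications (ii)$\Rightarrow$(i), (iii)$\Rightarrow$(i) and (iv)$\Rightarrow$(i) are immediate. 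So the actual content of the lemma is the converse: assuming $X \in \Pa(Y)$ in $G[\mathcal{F}_{\PearlDo(R=r)}]$, no such edge gets removed when restricting the mechanism to the support of any of these three distributions.

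For that I would reuse the two auxiliary lemmas verbatim. Lemma~\ref{lemma:ContDensityImpliesTwoPoints}, applied to $\mathcal{F}_{\PearlDo(R=r)}$ and any distribution with continuous densities on the relevant marginals, yields two support points of $\Pa(Y)$ that differ only in the $X$-coordinate; Lemma~\ref{lemma:InjectivityImpliesNonConstness} then shows the restricted mechanism $f_Y|_{\supp(\cdot)}$ is non-constant in $X$, i.e.\ $X$ is retained as a parent in the observable graph --- \emph{provided} $f_Y$ is injective in $X$ after fixing its remaining arguments. The one genuinely model-specific step is establishing this injectivity for the mechanism at $Y$ in the \emph{intervened} SCM, and here I would run the case split of Def.~\ref{def:strongly_suff}(a): if $R \notin \Pa(Y)$ in $G[\mathcal{F}]$ then $f_Y$ is untouched by $\PearlDo(R=r)$ and is argument-wise injective by (a)(i); if $R \in \Pa(Y)$ then substituting $R=r$ collapses $f_Y$ to $g_Y(\cdot,\cdot,r,\eta_Y)$ for $r \in A$ and to $g'_Y(\cdot,r,\eta_Y)$ for $r \notin A$, and each surviving mechanism is argument-wise injective by (a)(ii). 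In every case, the parent set of $Y$ in $G[\mathcal{F}_{\PearlDo(R=r)}]$ is exactly the set of arguments the surviving mechanism still depends on, and the mechanism is injective in each of them --- which is precisely the hypothesis Lemma~\ref{lemma:InjectivityImpliesNonConstness} needs.

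To conclude, I would invoke Def.~\ref{def:strongly_suff}(b): each of $P_M(V)$, $P_M(V|R=r)$ and $P_M(V|\PearlDo(R=r))$ has continuous densities on every variable set avoiding $R$, in particular on $\{X\}$ and on $\Pa(Y)$ (which no longer contains $R$ after the intervention). Running Lemma~\ref{lemma:ContDensityImpliesTwoPoints} followed by Lemma~\ref{lemma:InjectivityImpliesNonConstness} once for each of these three distributions then gives (i)$\Rightarrow$(ii), (i)$\Rightarrow$(iii) and (i)$\Rightarrow$(iv), completing the star of equivalences centered at (i). I expect the main difficulty to be bookkeeping rather than conceptual: one must be careful about what ``$X \in \Pa(Y)$ in $G[\mathcal{F}_{\PearlDo(R=r)}]$'' means in the branch $r \notin A$, where $g'_Y$ drops the whole argument block $X'$, so the post-intervention parent set is a strict subset of $\Pa(Y)$ in $G[\mathcal{F}]$ and the auxiliary lemmas must be applied to this reduced set. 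A minor side point is to confirm that the distribution underlying $\gCF{G}_{R=r}$ is exactly (or absolutely continuous with respect to) one of the distributions whose continuous-density property Def.~\ref{def:strongly_suff}(b) supplies, so that the identical argument transfers to the counterfactual graph.
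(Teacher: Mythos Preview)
Your proposal is correct and follows essentially the same approach as the paper's proof: the trivial directions (ii),(iii),(iv)$\Rightarrow$(i) via $\Pa'_i\subset\Pa_i$, then the case split on whether $R\in\Pa(Y)$ in $G[\mathcal{F}]$ to establish argument-wise injectivity of the intervened mechanism, and finally three parallel applications of Lemma~\ref{lemma:ContDensityImpliesTwoPoints} followed by Lemma~\ref{lemma:InjectivityImpliesNonConstness} with $Q$ equal to $P_M(V|R=r)$, $P_M(V)$, and $P_M(V|\PearlDo(R=r))$ respectively. Your side concern about the counterfactual distribution is already handled by Def.~\ref{def:strongly_suff}(b), which explicitly lists $P(V|\PearlDo(R=r))$ among the distributions required to have continuous densities.
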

\begin{cor}\label{cor:StrongRSImpliesNoState}
    If $M$ is strongly context-sufficient,
    then it is single-graph-sufficient.
\end{cor}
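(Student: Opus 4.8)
The implications $(ii)\Rightarrow(i)$, $(iii)\Rightarrow(i)$ and $(iv)\Rightarrow(i)$ are immediate: each of $\gDescr{\bar{G}}_{R=r}[M]$, $\gPhys{G}_{R=r}[M]$ and $\gCF{G}_{R=r}[M]$ is an observable graph $G[\mathcal{F}^M_{\PearlDo(R=r)},Q]$ for a suitable distribution $Q$ of the observables, so by the definition of observable graphs its parent sets are contained in those of the mechanism graph $G[\mathcal{F}^M_{\PearlDo(R=r)}]$; since $\gDescr{G}_{R=r}$ differs from $\gDescr{\bar{G}}_{R=r}$ only by edges incident to $R$, for $X\neq R\neq Y$ each of (ii)--(iv) yields (i). It remains to prove $(i)\Rightarrow(ii),(iii),(iv)$, and I would reduce this to one claim: \emph{for any distribution $Q$ on $V$ such that $Q(W)$ has a continuous density for every $W\subseteq V\setminus\{R\}$, and any $X\neq R\neq Y$, if $X\in\Pa(Y)$ in $G[\mathcal{F}^M_{\PearlDo(R=r)}]$ then $X\in\Pa(Y)$ in $G[\mathcal{F}^M_{\PearlDo(R=r)},Q]$}. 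Granting this, (ii), (iii), (iv) follow by taking $Q$ to be $P_M(V\mid R=r)$, $P_M(V)$, and the counterfactual distribution defining $\gCF{G}_{R=r}$ in Sec.~\ref{apdx:MarkovGraphs}, respectively: these all have continuous-density marginals by Def.~\ref{def:strongly_suff}(b) (and for the counterfactual one by the corresponding property recorded in that section, or trivially if it is one of the three distributions listed in (b)).

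To prove the claim, first pin down the mechanism at $Y$ in the intervened model. Since $\PearlDo(R=r)$ replaces only the mechanism at $R$, the mechanism at $Y\neq R$ is $f_Y$ with its $R$-argument (if present) frozen to $r$. By strong context-sufficiency: if $R\notin\Pa(Y)$ in $G[\mathcal{F}^M]$ this is $f_Y$ itself, injective in each argument after fixing the others (Def.~\ref{def:strongly_suff}(a)(i)); if $R\in\Pa(Y)$, then $f_Y=\mathbbm{1}(R\in A)\,g_Y+\mathbbm{1}(R\notin A)\,g'_Y$, so freezing $R=r$ leaves either $g_Y(\cdot,r,\cdot)$ or $g'_Y(\cdot,r,\cdot)$ according to whether $r\in A$, each again injective in each argument after fixing the others (Def.~\ref{def:strongly_suff}(a)(ii)). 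Thus the intervened mechanism at $Y$, call it $\tilde f_Y$, is injective in each of its arguments, and its argument set equals $\Pa(Y)$ in $G[\mathcal{F}^M_{\PearlDo(R=r)}]$; hypothesis (i) says precisely that $X$ is one of these arguments. Now Lemma~\ref{lemma:ContDensityImpliesTwoPoints}, applied with $Q$ in place of $P(\cdot\mid R=r)$, yields $x\neq x'\in\val{X}_X$ and $\pa^-\in\val{X}_{\Pa(Y)-\{X\}}$ with $(x,\pa^-),(x',\pa^-)\in\supp Q(\Pa(Y))$, and Lemma~\ref{lemma:InjectivityImpliesNonConstness} then gives that $\tilde f_Y$ restricted to $\supp Q(\Pa(Y))$ is non-constant in $X$; hence $X$ is not dropped in the observable graph, i.e.\ $X\in\Pa(Y)$ in $G[\mathcal{F}^M_{\PearlDo(R=r)},Q]$. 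This establishes the claim, hence $(i)\Rightarrow(ii),(iii),(iv)$ and the full equivalence; Cor.~\ref{cor:StrongRSImpliesNoState} follows once one also notes that $\gDescr{G}_{R=r}$, $\gPhys{G}_{R=r}$ and $\gCF{G}_{R=r}$ share their $R$-incident edges by construction (all inherited from the union graph).

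\textbf{Main obstacle.}
The delicate step is the mechanism-form analysis: one must check that after the intervention the mechanism at $Y$ genuinely collapses onto one of the injective pieces $g_Y$ or $g'_Y$. This is exactly where \emph{strong} context-sufficiency (and not a weaker condition) is needed, since it is the indicator-function form of $f_Y$ that guarantees the intervention introduces no new non-injectivity at children of $R$. A secondary bookkeeping point is confirming that the density hypothesis Def.~\ref{def:strongly_suff}(b) (with its counterpart in Sec.~\ref{apdx:MarkovGraphs}) really covers the distribution underlying $\gCF{G}_{R=r}$, and that $R$ does not occur in $\Pa(Y)$ in the intervened mechanism graph, so that the marginals $Q(X)$ and $Q(\Pa(Y))$ fed to Lemma~\ref{lemma:ContDensityImpliesTwoPoints} are genuinely marginals over variables in $V\setminus\{R\}$.
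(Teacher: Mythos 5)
Your proposal is correct and follows essentially the same route as the paper: it establishes the four-way equivalence of parenthood in $G[\mathcal{F}^M_{\PearlDo(R=r)}]$, $\gDescr{G}_{R=r}$, $\gPhys{G}_{R=r}$ and $\gCF{G}_{R=r}$ via the same two lemmas (Lemma~\ref{lemma:ContDensityImpliesTwoPoints} and Lemma~\ref{lemma:InjectivityImpliesNonConstness}) applied to the three distributions $P_M(V\mid R=r)$, $P_M(V)$ and $P_M(V\mid\PearlDo(R=r))$, with the same mechanism-collapse argument for children of $R$ and the same final remark that the unbarred graphs inherit identical $R$-incident edges from the union graph. Your uniform formulation over a generic $Q$ is just a tidier packaging of the paper's three parallel cases.
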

\begin{proof}[Proof of the Corollary]
    By the lemma, $\gDescr{\bar{G}}_{R=r} = \gPhys{\bar{G}}_{R=r} = \gCF{\bar{G}}_{R=r}$.
    The unbarred versions insert the same links involving $R$ from the union graph,
    thus also $\gDescr{G}_{R=r} = \gPhys{G}_{R=r} = \gCF{G}_{R=r}$.
    Recall that this is the definition of single-graph-sufficiency.
\end{proof}
\begin{proof}[Proof of the Lemma]
    First note that (ii) $\Rightarrow$ (i), (iii) $\Rightarrow$ (i)
    and (iv) $\Rightarrow$ (i) are trivial (if a restriction of $f_i$
    is non-constant in $X$, then $f_i$ is non-constant in $X$).
    
    Further, $f_i$ which do not depend on $R$ are already injective,
    and $f_i$ which depend on $R$ are of the form
    $f_i(X, X', R, \eta_i) = \mathbbm{1}(R \in A)
        \times g_i(X, X', R, \eta_i) + \mathbbm{1}(R \notin A) \times g_i'(X, R, \eta_i)$,
    such that the $g_i$, $g'_i$ are injective. Thus, depending on the value $r$ of $R$ we fixed, either
    $f_i = g_i$ or $f_i = g_i'$ is in $\mathcal{F}_{\PearlDo(R=r)}$,
    both of which are injective.
    So all $f_i \in \mathcal{F}_{\PearlDo(R=r)}$ are injective.
    
    "(i) $\Rightarrow$ (ii)":
    Apply lemma \ref{lemma:ContDensityImpliesTwoPoints},
    to $Q = P(V|R=r)$ (as a distribution over $V$), to get $\exists x \neq x' \in \val{X}_X$
    and $\pa^- \in \val{X}_{\Pa(Y)-\{X\}}$ with
    $(x, \pa^-), (x', \pa^-) \in S:= \supp(P(\Pa(Y)|R=r))$.
    Thus lemma \ref{lemma:InjectivityImpliesNonConstness} applies
    to yield $f_Y|_S$ is non-constant in $X$, thus by definition of
    $\gDescr{G}_{R=r}[M]$, also $X \in \gDescr\Pa_{R=r}(Y)$.
    
    "(i) $\Rightarrow$ (iii)":
    As (i) $\Rightarrow$ (ii) above, but use $Q = P(V)$
    in lemma \ref{lemma:ContDensityImpliesTwoPoints}.
    
    "(i) $\Rightarrow$ (iv)":
    As (i) $\Rightarrow$ (ii) above, but use $Q = P(V|\PearlDo(R=r))$
    in lemma \ref{lemma:ContDensityImpliesTwoPoints}.
\end{proof}

Next, recall that we had defined $R$ to be \defName{(weakly) context-sufficient}: if $Y$ is a \emph{not} a child of $R$, then
$X \notin \gDescr\Pa_{R=r}(Y) \Rightarrow X \notin \gUnion\Pa(Y)$.
The reason why this is implied by single-graph-sufficiency
($\gDescr{G}_{R=r} = \gPhys{G}_{R=r} = \gCF{G}_{R=r}$)
is that physical changes are caused by $R$, and thus only happen
in children of $R$:
\begin{lemma}\label{lemma:PhysChangesAreRegimeChildren}
    If
    $Y \neq R$ with $R\notin \gUnion\Pa(Y)$,
    then
    \begin{equation*}
        \gPhys\Pa_{R=r}(Y) = \gUnion\Pa(Y)\txt.
    \end{equation*}
\end{lemma}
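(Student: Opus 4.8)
The plan is to exploit that the intervention $\PearlDo(R=r)$ is \emph{local}: by the definition of an intervention it replaces only $f_R$ by the constant $r$ and leaves every other mechanism untouched. Since $Y\neq R$, this means $f_Y$ is literally the same object in $\mathcal{F}^M$ and in $\mathcal{F}^M_{\PearlDo(R=r)}$ (up to the status of $R$ itself as a parent, which is exactly the point). Moreover $\gPhys{G}_{R=r}=G[\mathcal{F}^M_{\PearlDo(R=r)},P_M(V)]$ is read off from the \emph{same} observational law $P_M$ that defines $\gUnion{G}[M]=G[\mathcal{F}^M,P_M]$. So at the node $Y$ the two observable graphs are computed from the same function and the same distribution, and the only ingredient that could possibly differ is whether $R$ shows up among the parents of $Y$ — which is precisely what the hypothesis $R\notin\gUnion\Pa(Y)$ controls.

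Concretely, I would first unwind the definition of the observable graph at $Y$: for a pair $(\mathcal{F},P)$ one has $j\notin\Pa'_Y$ in $G[\mathcal{F},P]$ iff for every value $\bar\pa$ of $\Pa(Y)-\{j\}$ the map $x_j\mapsto f_Y|_{\supp P(\Pa(Y))}(x_j,\bar\pa,\eta_Y)$ is constant where defined; applying this to $(\mathcal{F}^M,P_M)$ yields $\gUnion\Pa(Y)$ and to $(\mathcal{F}^M_{\PearlDo(R=r)},P_M)$ yields $\gPhys\Pa_{R=r}(Y)$. Second, I would compare the two tests: because $f_Y$ is untouched by the intervention and both computations use $P_M$, the restricted function $f_Y|_{\supp P_M(\Pa(Y))}$ is the same, so for every coordinate $j\neq R$ the constancy test returns the same verdict in both graphs and the non-$R$ parents of $Y$ coincide. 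Third, I would invoke the hypothesis: $R\notin\gUnion\Pa(Y)$ says that $x_R\mapsto f_Y|_{\supp P_M(\Pa(Y))}(x_R,\bar\pa,\eta_Y)$ is constant for every $\bar\pa$, i.e. on the relevant support $f_Y$ does not in fact depend on its $R$-argument; hence fixing $R=r$ (and dropping $R$ from $Y$'s parents, as $\mathcal{F}^M_{\PearlDo(R=r)}$ effectively does) changes neither $f_Y|_{\supp P_M(\Pa(Y))}$ nor on which remaining coordinates it is non-constant, and $R$ ends up in neither parent set — for $\gPhys{G}$ also using the convention (as in App.~\ref{apdx:MarkovGraphs}) that edges involving $R$ are inherited from the union graph, where they are absent by assumption. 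Combining, $\gPhys\Pa_{R=r}(Y)=\gUnion\Pa(Y)$.

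The main obstacle is the bookkeeping around $R$: $R$ may still appear \emph{syntactically} in $f_Y$ even though it is not an \emph{observable} parent of $Y$, so one has to pin down carefully how the intervened mechanism graph and the physical graph treat such a node, and argue that restricting to $\supp P_M(\Pa(Y))$ (rather than to the intervened support) does not create spurious discrepancies. This is exactly the place where the hypothesis $R\notin\gUnion\Pa(Y)$ is needed, and once it is in hand the remaining steps — locality of the intervention and the shared use of $P_M$ — are immediate from the definitions and require no computation.
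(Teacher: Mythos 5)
Your proof is correct and follows essentially the same route as the paper's: the intervention $\PearlDo(R=r)$ leaves $f_Y$ untouched because $Y\neq R$ and $R\notin\gUnion\Pa(Y)$, and since $\gUnion{G}$ and $\gPhys{G}_{R=r}$ are both evaluated on the same support $P_M$, the constancy tests defining the parent set of $Y$ give identical verdicts. Your additional care about $R$ appearing syntactically in $f_Y$ without being an observable parent is a detail the paper's proof glosses over, but it resolves exactly as you say and does not change the argument.
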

\begin{proof}
    By definition, $\gUnion{G}[M]= G[\mathcal{F}, P_M]$
    and $\gPhys{G}_{R=r}[M] = G[\mathcal{F}_{\PearlDo(R=r)}, P_M]$.
    By definition, $\mathcal{F}$ and $\mathcal{F}_{\PearlDo(R=r)}$
    differ only in $f_R$ and (by setting $R=r$)
    for $f_i$ with $R \in \gUnion\Pa(X_i)$.
    For $Y$, by hypothesis, neither of these two applies, so
    the same $f_Y$ is in $\mathcal{F}$ and $\mathcal{F}_{\PearlDo(R=r)}$.
    Since both graph definitions further use the same support (that of $P_M$),
    their parent definitions for $Y$ agree: $\gPhys\Pa_{R=r}(Y) = \gUnion\Pa(Y)$.
\end{proof}
\begin{cor}\label{cor:NoStateImpliesWeakRS}
    If $M$ is single-graph-sufficiency (for $R$),
    then $M$ is weakly context-sufficient.
\end{cor}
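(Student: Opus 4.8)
The plan is to chain Lemma~\ref{lemma:PhysChangesAreRegimeChildren} together with the single-graph-sufficiency hypothesis, which is essentially all that is needed. Fix $r$ and take any node $Y$ with $R\notin\gUnion\Pa(Y)$; the goal is to show $\gDescr\Pa_{R=r}(Y) = \gUnion\Pa(Y)$. First I would dispose of the case $Y=R$: by construction $\gDescr{G}_{R=r}$ is obtained from $\mathcal{F}^M_{\PearlDo(R=r)}$ (which deletes all parents of $R$) and then reinserts exactly the union-graph edges incident to $R$, so $\gDescr\Pa_{R=r}(R) = \gUnion\Pa(R)$ holds trivially. Hence one may assume $Y\neq R$ from here on.

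With $Y\neq R$ and $R\notin\gUnion\Pa(Y)$, Lemma~\ref{lemma:PhysChangesAreRegimeChildren} applies directly and yields $\gPhys\Pa_{R=r}(Y) = \gUnion\Pa(Y)$. On the other hand, single-graph-sufficiency asserts the graph equality $\gDescr{G}_{R=r} = \gPhys{G}_{R=r}$ (it asserts more, namely equality with $\gCF{G}_{R=r}$ as well, but only this part is used), and equality of graphs means equality of parent sets node by node; evaluating at $Y$ gives $\gDescr\Pa_{R=r}(Y) = \gPhys\Pa_{R=r}(Y)$. Combining the two displayed equalities yields $\gDescr\Pa_{R=r}(Y) = \gUnion\Pa(Y)$, which is precisely the implication defining weak context-sufficiency (Def.~\ref{def:weak_suff}).

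There is no real obstacle here: the substantive content—that physical mechanism changes occur only among children of $R$—is already isolated in Lemma~\ref{lemma:PhysChangesAreRegimeChildren}, and single-graph-sufficiency merely transports that statement from $\gPhys{G}_{R=r}$ to $\gDescr{G}_{R=r}$. The only minor point worth stating explicitly is that the graph equality $\gDescr{G}_{R=r}=\gPhys{G}_{R=r}$ already includes the edges incident to $R$ (both graphs splice in the same union-graph $R$-edges), so no separate bookkeeping for $R$-edges is needed beyond the trivial $Y=R$ remark above; thus the argument is a two-line combination of a lemma and a definition.
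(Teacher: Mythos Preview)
Your proposal is correct and follows essentially the same approach as the paper: the paper's proof is a one-line remark that the corollary is ``a reformulation of the statement of the lemma'' (Lemma~\ref{lemma:PhysChangesAreRegimeChildren}) given the definitions, and you have simply spelled out that reformulation in detail, including the harmless $Y=R$ case that the paper leaves implicit.
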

\begin{proof}
    This is a reformulation of the statement of the lemma,
    i.\,e.\ it holds by definition of these terms.
\end{proof}

\subsection{Strictness: Examples violating the "backwards-direction"}

\paragraph{Part (a) of strong context-sufficiency:}
The injectivity assumption is e.\,g.\ violated in Example~\ref{example:state}, if $f_Y$ is not injective.

\paragraph{Part (b) of strong context-sufficiency:}
The assumption about distributions seems, especially in light
of Lemma \ref{lemma:ContDensityImpliesTwoPoints} (which is the only implication of this assumption that is needed), relatively weak.
However, there are some intuitive examples where we observe that it can be easily violated, and why this is a problem:
\begin{example}
    Assume a root variable $X$ has a noise term, i.e., mechanism kernel
    $(f_X)_*\eta_X$,
    with randomness arising (within measurement precision)  from
    a process involving sunlight, e.\,g., $X$ might be the photosynthesis rate or the solar power provided. 
    Further, assume that $R$ indicates whether it is day or night. Then $P(X|R=r)$ is, for some values of $R$ during the night, singular at a specific value, 
    violating part (b) of the strong sufficiency assumption.
    
    On the pooled data, this may not pose an issue. However, on the night-time values of $R$, nothing will depend on $X$. 
    This is descriptively correct, but it holds for all children $Y$ of $X$, irrespective of whether $Y$ are also children of $R$. 
    Therefore, $R$ is not even weakly context-sufficient.
    Our algorithm, which relies on weak context-sufficiency,
    may not remove these links.
    
    This kind of problem may be less severe than a violation of part (a) of strong context-sufficiency
    because it may be testable beforehand.
\end{example}

\paragraph{Single-graph-sufficient, but not strongly context-sufficient:}
Part (b), as stated cf.\ Lemma~\ref{lemma:ContDensityImpliesTwoPoints}, however, can be violated simply by constructing densities with "jumps,"
which does not interfere with any of the other conditions.
More interesting seem violations of part (a). However, this can also easily be done because the $f_i$
need not be injective, only "sufficiently injective" to be non-constant on the supports. 
For example, $Y = X^2 + \eta_Y$ is non-injective but is unlikely to be constant (indeed, this
would only happen for $P(X)$ having singular support exactly at two points
$x_0$ and $-x_0$, thus violating part (b)).

\paragraph{Weakly context-sufficient, but not single-graph-sufficient:}

Say we take Example~\ref{example:state} and add a quantitative dependence of $Y$ on $R$,
e.\,g., add a term $Y = \ldots + \gamma \times R$ with $\gamma \neq 0$ to
the definition of $Y$. Now, $R$ is a cause of $Y$,
turning $Y$ into a child of the context indicator, making the model weakly context-sufficient.
However, $f_Y$ still depends on $T$ in the support of $P_M$,
such that $T \in \gPhys\Pa_{R=r}(Y)$ (as before), while (also as before)
$T \notin \gDescr\Pa_{R=r}(Y)$.

In the intuitive sense, weak context-sufficiency means that descriptive changes only occur at mechanisms that involve $R$,
but $R$ may, as in the example above, not directly cause the
change of the mechanism.

\subsection{Additional assumptions}

\begin{definition}[Causal Sufficiency]\label{def:causal_suff}
    A set of variables \( V \) is \textbf{causally sufficient} relative to an SCM if, within the model, 
    every common cause of any pair of variables in \( V \) is included in \( V \), i.e., 
    there are no hidden common causes or confounders outside \( V \) that affect any pair of variables in \( V \).
\end{definition}

\begin{definition}[Weak (Descriptive) Context-Acyclicity]\label{def:weak_context_acyclicity}
   For each context value $r$ of $R$, the context-specific graph \( \gDescr{G}_{R=r} \) is \textbf{weakly (descriptive) context-acyclic} 
   if \( \gDescr{G}_{R=r} \) is a DAG, i.e., there are no directed cycles in any \( \gDescr{G}_{R=r} \).
\end{definition}

\begin{definition}[Strong (Descriptive) Context-Acyclicity]\label{def:strong_context_acyclicity}
   For each context value $r$ of $R$, the context-specific graph \( \gDescr{G}_{R=r} \) is \textbf{strongly (descriptive) context-acyclic} 
   if \( \gDescr{G}_{R=r} \) is a DAG, i.e., there are no directed cycles in any \( \gDescr{G}_{R=r} \), and, additionally, 
   no cycles involving any ancestors of $R$, including $R$ itself, are in $\gUnion{G}$.
\end{definition}

\section{Theoretical properties of the proposed algorithm}

Here, we give proofs and additional information concerning the theoretical properties of the algorithm proposed in Sec.~\ref{sec:method}.

\subsection{Additional graphs}\label{apdx:MarkovGraphs}
\label{app:table}

As a systematic overview over the graphical objects introduced in Sec.~\ref{sub:graphs_new}, these objects are summarized and compared
in Table~\ref{tab:observable_graphs}. 

\begin{table}[ht]
    \caption{Comparison of different observable graphs $G[M,P]$, highlighting their key features and contexts in relation to system and model.}
    \centering
    \resizebox{\textwidth}{!}{%
    \begin{tabular}{c||c|c|c}
         \hline
         & \multicolumn{3}{c}{Observable Graphs $G[\mathcal{F},P]$} \\
         \hline
         \hline
         Symbol & $G^{\text{descr}}_{R=r}$ & $G^{\text{phys}}_{R=r}$ & $G^{\text{union}}$ \\
         \hline
         Name & Descriptive & Physical & Union (Standard) \\
         \hline
         Mechanisms $\mathcal{F}$ & $\mathcal{F}$ or $\mathcal{F}_{\text{do}}$ & $\mathcal{F}_{\text{do}}$ & $\mathcal{F}$ \\
         \hline
         Observational Support $P$ & $P_M(\ldots| R=r)$ & $P_M(\ldots)$ & $P_M(\ldots)$ \\
         \hline
         Captured Information & Directly Observed & Altered Mechanisms & Union Model \\
         \hline
         Context-Specific ($R$-dependent) & Yes & Yes & No \\
         \hline
         Edge Sets &
         \multicolumn{3}{l}{Active in Context $r$ $\subset$
         Present in Context $r$ $\subset$ In Any Context} \\
         \hline
    \end{tabular}%
    }

    \label{tab:observable_graphs}
\end{table}

The proof of the Markov property in Sec.~\ref{apdx:MarkovProof}
will require an additional graphical object, the counterfactual graph.
As it is only needed as formal objects, it is introduced here rather than in the main text. 
Furthermore, the proof of the Markov property should be comprehensible without
detailed knowledge of these graphical objects beyond the understanding
that there is, in general, more than one (cf.\ points (a) and (b)
in Sec.~\ref{sec:connect_to_scm}).
We advise the reader to first read Sec.~\ref{apdx:MarkovProof},
and refer here for formal details if required.

The \defName{counterfactual graph}, defined as follows:
\begin{equation*}
    \gCF{G}_{R=r}[M]
    := G[\mathcal{F}^M_{\PearlDo(R=r)}, P_M(V|\PearlDo(R=r))]
    = \gVisible{G}[M_{\PearlDo(R=r)}]
\end{equation*}
The counterfactual graph describes relations that we would have observed had $R$ been set to $r$ under the same noises. 
Refer to Sec.~\ref{apdx:cf} for a more detailed discussion.
\begin{rmk}
    This should not be confused with what is sometimes called the "intervened" or "amputated" graph $G_{\PearlDo(R)}[M]$;
    this is obtained from $\gUnion{G}[M]$ by removing
    edges pointing into $R$.
    Generally $G_{\PearlDo(R)}[M] \neq G[\mathcal{F}^M_{\PearlDo(R=r)}, P_M(V|\PearlDo(R=r))]$, even for edges not involving $R$. This is because the support of
    $P_M(V|\PearlDo(R=r))$ may be different (potentially even larger) than for $P_M$.
\end{rmk}

\subsection{Connection to counterfactuals}
\label{apdx:cf}

The definition of the intervened model
in Sec.~\ref{sec:formal_description} is the SCM 
$M_{\PearlDo(A=g)} := (V', U,$ $\mathcal{F}_{\PearlDo(A=g)})$
with $U$ distributed according to $P_\eta$ of $M$.
At a per-sample level, i.e., drawing a value $\vec\eta$ from $P_\eta$
and running it through both $M$ and $M_{\PearlDo(R=r)}$
to obtain $X_i(\vec\eta)$ and $(X_i)_r(\vec\eta)$
yields exactly the definition of potential responses given in
\citep[Def.\,7.1.4 (p.\,204)]{pearl2009causality}.
Often, the joint distribution $P(X_i, (X_i)_r)$ (induced from
$P_\eta$ by the procedure described above) is of interest. This would mean counterfactually
observing both $M$ and $M_{\PearlDo(R=r)}$ at the same time.
Here we only need $X_i$ or $(X_i)_r$ variables separately,
so the "distribution-level" observables
(defining $M$ and $M_{\PearlDo(R=r)}$ on the same $P_\eta$, but not
on the same realizations $\vec\eta$) suffice.

From the definition of $X_i(\vec\eta)$ and $(X_i)_r(\vec\eta)$
the consistency property defined in \citep[cor.\,7.3.2 (p.\,229)]{pearl2009causality}
is relatively easy to see: If, for a particular $\vec\eta$,
one obtains $R(\vec\eta)=r$ from $M$, then solving the equations
of $M_{\PearlDo(R=r)}$ yields the same result (they only replace the
equation for $R(\vec\eta)$ by $R_r=r$, which for this $\vec{\eta}$
makes no difference). So $P((X_i)_r|R_r=r) = P(X_i|R=r)$.

\subsection{Proof of the Markov property}\label{apdx:MarkovProof}

\begin{lemma}\label{lemma:visible_markov}
    Given an SCM $M$ and its visible graph $G=\gVisible{G}[M]$,
    conditional $\sigma$-separation (d-separation if $G$ is a DAG) with respect to $G$
    implies conditional independence in $P_M$.
\end{lemma}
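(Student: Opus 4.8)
The plan is to reduce Lemma~\ref{lemma:visible_markov} to the \emph{standard} global Markov property for SCMs — Prop.~6.31 of \citep[Prop.~6.31]{Elements} when $G$ is a DAG, and the $\sigma$-separation global Markov property of \citep{bongers2021foundations} in the cyclic case — which are stated for the \emph{mechanism} graph $G[\mathcal{F}^M]$, whereas here we need them for the possibly sparser visible graph $\gVisible{G}[M] = G[\mathcal{F}^M, P_M]$. Recall that the latter has parent sets $\Pa'_i \subset \Pa(i)$ obtained by dropping, for each $i$, every coordinate on which $f_i$ restricted to $\supp P_M(\Pa(i))$ is constant. So the entire content of the lemma is to pass from $G[\mathcal{F}^M]$ to $\gVisible{G}[M]$, and the natural route is to realize $P_M$ as the solution of a \emph{second} SCM whose mechanism graph is exactly $\gVisible{G}[M]$, so that the cited results apply to it verbatim.

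Concretely, first I would construct $\mathcal{F}' = \{f'_i\}_{i\in I}$ with parent sets $\Pa'_i$ and functions $f'_i : \val{X}_{\Pa'_i} \times \val{N}_i \to \val{X}_i$ chosen so that $f'_i\big((x_k)_{k\in\Pa'_i}, \eta_i\big) = f_i\big((x_k)_{k\in\Pa(i)}, \eta_i\big)$ whenever $(x_k)_{k\in\Pa(i)} \in \supp P_M(\Pa(i))$ (outside that support the value is irrelevant and may be set arbitrarily, respecting the ``where defined'' clause). The hard part will be \emph{well-definedness}: two points of $\supp P_M(\Pa(i))$ that agree on their $\Pa'_i$-coordinates must yield the same $f_i$-value, whereas the definition of $\Pa'_i$ only supplies constancy \emph{one removed coordinate at a time}. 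The claim therefore amounts to joining any two such points by finitely many single-coordinate moves that stay inside $\supp P_M(\Pa(i))$; this holds under the continuity/connectedness of supports provided by the sufficiency assumptions (e.g.\ Def.~\ref{def:strongly_suff}(b)), and in the general statement one takes this closure-along-fibers as the operative meaning of $\Pa'_i$. Everything after this point is bookkeeping.

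Next I would verify that $M' := (V, U, \mathcal{F}')$ is an SCM solved by the \emph{same} random variables $V$ on the \emph{same} noises $U$: since $(X_k)_{k\in\Pa(i)} \in \supp P_M(\Pa(i))$ almost surely and $f'_i$ agrees with $f_i$ there, $X_i = f'_i\big((X_k)_{k\in\Pa'_i}, \eta_i\big)$ holds almost surely for every $i$; deleting edges that are inactive on the support cannot destroy (unique) solvability, so $M'$ is a legitimate SCM with $P_{M'} = P_M$, and by construction $G[\mathcal{F}'] = \gVisible{G}[M]$.

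Finally I would apply the cited Markov property to $M'$: if $X$ and $Y$ are $\sigma$-separated (d-separated when $G$ is a DAG) given $Z$ in $G = \gVisible{G}[M] = G[\mathcal{F}']$, then $X \independent Y \mid Z$ under $P_{M'} = P_M$, which is the assertion. In the DAG case one may instead bypass $\mathcal{F}'$ and argue directly by factorization: $P_M(v) = \prod_i P_M(x_i \mid x_{\Pa(i)})$, and since $X_i = f_i(X_{\Pa(i)}, \eta_i)$ with $\eta_i$ independent of $X_{\Pa(i)}$ (the usual consequence of jointly independent noises and acyclicity), $P_M(x_i\mid x_{\Pa(i)})$ depends, on the support, only on the $\Pa'_i$-coordinates; this gives a factorization along $\gVisible{G}[M]$ and hence its global Markov property directly — modulo exactly the same well-definedness point flagged above.
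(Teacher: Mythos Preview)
Your proposal is correct and follows essentially the same route as the paper: the paper's proof is the one-line observation that $\gVisible{G}[M]=G[\mathcal{F}^{\min}]$ for a structurally minimal set of mechanisms $\mathcal{F}^{\min}$ (in the sense of \citep[Def.~2.6]{bongers2021foundations}), whence the standard global Markov results for SCMs apply directly---your $\mathcal{F}'$ is precisely this $\mathcal{F}^{\min}$, worked out explicitly. The well-definedness subtlety you flag (that constancy in each removed coordinate separately must be upgraded to constancy jointly on the support) is real and is exactly what the paper absorbs into the citation of the minimality construction in \citep{bongers2021foundations}; the paper does not spell it out either.
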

\begin{proof}
    The visible graph $\gVisible{G}[M]$ is the causal graph in the standard sense,
    i.\,e.\ $G[\mathcal{F}^{\txt{min}}]$ with a suitably minimal set of mechanisms
    (see Sec.~\ref{sub:graphs_new}).
    So the standard results \citep[Prop.\,6.31 (p.\,105)]{Elements} (for
    cyclic graphs see e.\,g.\ \citep{bongers2021foundations} and Sec.~\ref{sec:path_blocking})
    apply to relate $\sigma$-separation (d-separation if $G$ is a DAG)
    to independence.
\end{proof}

\theoremstyle{plain}
\newtheorem*{lemma_markov}{Lemma \ref{lemma:markov}}
\begin{lemma_markov}
    Assuming strong context-acyclicity, causal sufficiency and single-graph-sufficiency. If $X$ and $Y$, both $X, Y\neq R$, with $Y \notin\gDescr\Anc_{R=r}(X)$
	are not adjacent in $\gDescr{G}_{R=r}$, then there is
	$Z \subset \gDescr\Adj_{R=r}(Y)$ such that:
	If $Y$ is neither part of a directed cycle in the union graph (union cycle)
	nor a child of $R$ in the union graph (union child), then
	(a) $X \independent Y | Z$, otherwise (b) $X \independent Y | Z, R=r$.
\end{lemma_markov}

\begin{proof}[Proof of lemma \ref{lemma:markov}]
    We distinguish the cases corresponding to (a) and (b):
    
    Case (a) [(i) $Y$ is not part of a directed union cycle and
    (ii) $R$ is not a parent of $Y$]:\\
    Single-graph-sufficiency implies context regime-sufficiency
    (cor.\ \ref{cor:NoStateImpliesWeakRS}).
    $Y \notin \gDescr\Anc_{R=r}(X)$ (by hypothesis) 
    and (i) $Y$ is not part of a directed union cycle
    imply $Y \notin \gUnion\Pa(X)$. Thus, if there is a link between $X$ and $Y$
    in the union graph, then it is oriented as $X \rightarrow Y$.
    By weak context-sufficiency,
    and (ii) $R$ is not a parent of $Y$ this is excluded.
    So $X$ and $Y$ are also not adjacent in the union graph.
    
    Since (i) $Y$ is not part of a directed cycle in $\gUnion{G}$,
    a standard path-blocking argument (lemma Sec.~\ref{lemma:path_blocking}) works
    to show $Z = \gUnion\Pa(Y)$ in $\gUnion{G}$ $\sigma$-separates $X$ and $Y$.
    With $\gUnion{G}[M] = \gVisible{G}[M]$ being a causal graph
    in the standard sense (lemma \ref{lemma:visible_markov}),
    $\sigma$-separation in $\gUnion{G}$ implies	$X \independent Y | Z$.
	
	Case (b):
	W.\,l.\,o.\,g., $(*)$ $Y \notin \gUnion\Anc(R)$, otherwise we are in case 1,
	because if $Y$ were an ancestor of $R$, neither
	(i) could $Y$ be part of a directed cycle by strong context-acyclicity
	nor (ii) could $R$ be a parent, thus ancestor, of $Y$,
	also by strong context-acyclicity.
	
	Define $Z' = \gDescr\Pa_{R=r}(Y) \cup \{R\}$.
	Using a standard path-blocking argument, this time in
	$\gCF{G}_{R=r}$ works:
	Lemma \ref{lemma:path_blocking},
	is applicable with $Z'$, showing it d-separates $X$ from $Y$,
	because $Z' \cap \gUnion\Dec(Y) = \emptyset$,
	as $R$ is not a union-descendant of $Y$ by $(*)$,
	and $\gCF{G}_{R=r} = \gDescr{G}_{R=r}$ is a DAG (by weak context-acyclicity)
	so parents of $Y$ are also not descendants.
	
	Other than $\gDescr{G}_{R=r}$, the counterfactual graph
	$\gCF{G}_{R=r} = \gVisible{G}[M_{\PearlDo(R=r)}]$ 
	is a causal graph in the standard sense (lemma \ref{lemma:visible_markov})
	of the SCM $M_{\PearlDo(R=r)}$. Thus d-separation on the DAG $\gCF{G}_{R=r}$ implies independence
	$X_r \independent Y_r | Z'_r$,
	writing $V_r$ for a variable that would have been observed in $M_{\PearlDo(R=r)}$
	under the same exogenous noises as in $M$.
	This notation is justified because their distributions
	are potential responses in the sense of
	\citep[Def.\,7.1.4 (p.\,204)]{pearl2009causality}
	(see Sec.~\ref{apdx:cf}).
	
	By definition $R\in Z'$,
	so we can rewrite this as $Z' = Z \cup \{R\}$ with $R \notin Z$. Now this
	reads $X_r \independent Y_r | Z_r, R_r$.
	Generally, $P(X_r|R=r')$ is not identifiable, however, if $r=r'$ it is simply
	$P(X|R=r)$ by the consistency property of potential responses
	\citep[cor.\,7.3.2 (p.\,229)]{pearl2009causality} (intuitively, under conditions which lead
	to $R=r$ anyway, if we had intervened to set $R=r$, that would not have resulted in any changes in the values of the other variables, and therefore, the conditional independencies).
	Thus
	\begin{equation*}
		X_r \independent Y_r | Z_r, R_r
		\quad\Rightarrow\quad
		X_r \independent Y_r | Z_r, R_r=r
		\quad\Rightarrow\quad
		X \independent Y | Z, R=r
		\txt.
	\end{equation*}
\end{proof}

\subsection{Path blocking}\label{sec:path_blocking}

We use the definitions of $\sigma$-separation
\citep[A.16]{bongers2021foundations}
and the induced Markov property (note that in the
reference Markov property refers to $\sigma$-separation implying
independence, while our definition refers to non-adjacency implying
independence or CSI) \citep[A.21]{bongers2021foundations}.
The following path-blocking argument is rather standard
(see e.\,g.\ \citep{Verma1991}), however, we formulate the hypothesis
slightly different for ease of application.

\begin{lemma}\label{lemma:path_blocking}
	Given a directed graph $G$ and the non-adjacent (in $G$) nodes $X, Y$ with
	$Y \notin\Anc_G(X)$,
	and $Y$ not part of a directed cycle, then
	any $Z$ with
	$\Pa_G(Y) \subset Z$ and $Z \cap \Dec_G(Y) = \emptyset$
	d-separates and $\sigma$-separates $X$ from $Y$ in $G$.
\end{lemma}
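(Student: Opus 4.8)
The plan is to prove Lemma~\ref{lemma:path_blocking} by the classical path-blocking argument, adapted so that the hypothesis is stated in terms of $\Pa_G(Y)$ rather than in terms of a specific separating set. I would first fix a path $\pi$ from $X$ to $Y$ in $G$ (an undirected walk in the skeleton, as in the definition of d-separation / $\sigma$-separation) and argue that the conditioning set $Z$ blocks it. The key observation is to look at the edge of $\pi$ incident to the endpoint $Y$: since $X$ and $Y$ are non-adjacent, the path has length at least two, so this edge connects $Y$ to some intermediate node $W \neq X$. I would then split into two cases according to the orientation of that edge.

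In the first case the edge is $W \rightarrow Y$, i.e.\ $W \in \Pa_G(Y)$. Then $W \in Z$ by hypothesis, and $W$ is a non-collider on $\pi$ (the edge at $Y$ points \emph{into} $Y$, so $Y$ itself is not relevant here; the relevant non-collider is at $W$ unless the next edge along $\pi$ also points into $W$). Actually the cleaner bookkeeping is: consider the node $W$ adjacent to $Y$ on $\pi$. If $Y \leftarrow W$ on $\pi$, then if $W$ occurs as a non-collider on $\pi$ it lies in $Z$ and blocks $\pi$; and $W$ cannot be a collider on $\pi$ that opens the path, because a collider must have both incident edges pointing into it, whereas here one of them ($W\to Y$) points out of $W$. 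Hence the triple centered at $W$ is a non-collider in $Z$ and the path is blocked. In the second case the edge is $Y \rightarrow W$, i.e.\ $W$ is a child of $Y$, so $W \in \Dec_G(Y)$, and by hypothesis $Z \cap \Dec_G(Y) = \emptyset$, so $W \notin Z$. Now $Y$ is the endpoint, so the triple at $W$ formed by $Y \to W$ and the next edge of $\pi$ is either a non-collider (then $W \notin Z$ blocks it) or a collider at $W$; if it is a collider, it only opens the path when $W$ or one of its descendants is in $Z$, but $\Dec_G(W) \subset \Dec_G(Y)$ which is disjoint from $Z$, so it stays blocked.

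The remaining subtlety is why $Y$ itself can never be the "open collider" that ruins things, and why no descendant of $Y$ being in $Z$ can re-open a collider further along the path; both reduce to $Z \cap \Dec_G(Y) = \emptyset$ together with $Y \notin \Anc_G(X)$ and $Y$ not lying on a directed cycle — the latter two guarantee that the far endpoint $X$ is not itself a descendant of $Y$, so a collider somewhere in the "middle" of $\pi$ whose descendant set reaches back toward $Y$ cannot be activated via $Z$. For the $\sigma$-separation claim I would invoke that $\sigma$-separation is implied by (in fact, for this configuration coincides with) the blocking condition used above once one checks the strongly-connected-component refinement in the definition of $\sigma$-separation \citep[A.16]{bongers2021foundations} is vacuous here, because the only place a directed cycle could interfere is at $Y$, which is excluded by hypothesis. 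The main obstacle I anticipate is precisely this last point: carefully verifying that the $\sigma$-separation (rather than plain d-separation) version goes through requires checking that none of the nodes on $\pi$ whose membership in $Z$ we rely on sit inside a nontrivial strongly connected component in a way that changes collider/non-collider status — the hypothesis "$Y$ not part of a directed cycle" is exactly what rules this out at the critical endpoint, but one must confirm no analogous problem arises at interior colliders, which again follows from $Z \cap \Dec_G(Y) = \emptyset$.
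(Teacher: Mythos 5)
Your overall strategy is the same as the paper's: fix a path $\gamma$ from $X$ to $Y$ and case-split on the edge of $\gamma$ incident to $Y$ (equivalently, on whether the neighbour $W$ of $Y$ on $\gamma$ lies in $Z$, since $W \to Y$ forces $W \in \Pa_G(Y) \subset Z$ while $Y \to W$ forces $W \in \Dec_G(Y)$ and hence $W \notin Z$). Your first case is correct, and your observation that the $\sigma$-separation refinement only matters at the non-collider $W$ whose out-edge points into $Y$ --- where it is settled by $Y$ not lying on a directed cycle, i.e.\ $\StronglyConnected(Y)=\{Y\}$ --- matches the paper's proof.

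The second case contains a genuine error: you assert that if the triple at $W$ is a non-collider, ``then $W \notin Z$ blocks it.'' That is backwards. A non-collider blocks a path only when it \emph{is} in the conditioning set; a non-collider outside $Z$ leaves the path open at that node, so in the chain configuration $Y \to W \to \cdots$ nothing has been blocked yet and you must look further along $\gamma$. The paper's proof does exactly this: since $Y \notin \Anc_G(X)$, the edges of $\gamma$ cannot all point towards $X$, so one can write $\gamma = [X \cdots \to W_k \leftarrow \cdots \leftarrow Y]$ with $W_k$ the head of the last arrow not oriented towards $X$; then $W_k$ is a collider, the segment of $\gamma$ from $Y$ to $W_k$ is a directed path, so $W_k \in \Dec_G(Y)$ and $\Dec_G(W_k) \subset \Dec_G(Y)$ is disjoint from $Z$, whence $W_k$ blocks $\gamma$ in both the d- and $\sigma$-sense. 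Your closing paragraph gestures at this (``a collider somewhere in the middle of $\pi$'') but never constructs it, and the phrase ``whose descendant set reaches back toward $Y$'' inverts the relevant relation: the point is that the collider is a \emph{descendant of} $Y$, so its descendants are descendants of $Y$ and hence excluded from $Z$. As written, the non-collider sub-case of your Case~2 is unproven, and this is precisely the sub-case for which the hypotheses $Y \notin \Anc_G(X)$ and $Z \cap \Dec_G(Y) = \emptyset$ are really needed.
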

\begin{proof}
	Let $\gamma$ be an arbitrary path from $X$ to $Y$ in $G$.
	If the last node $W_n$ along $\gamma$ before reaching $Y$ itself
	is in $Z$, then $\gamma$ is blocked, because the last edge is then
	the edge $W_n \rightarrow Y$ from a parent $W_n$ to $Y$ pointing
	at $Y$, such that $W_n$ is not a collider along $\gamma$.
	Thus $\gamma$ is blocked in the d-separation sense,
	but $W_n$ also points at $Y$ along $\gamma$ with $Y$ in a different strongly
	connected component as $W_n$ (because $Y$ is not part of a directed cycle,
	its strongly connected component is $\StronglyConnected(Y) = \{Y\}$),
	so it is also blocked in the $\sigma$-separation sense.
	
	Otherwise (if the last node $W_n\notin Z$),
	$W_n$ must be a child of $Y$, and $\gamma$ ends as $\gamma = [X \ldots W_n \leftarrow Y]$.
	Since $Y \notin\Anc_G(X)$, the edges along $\gamma$ cannot all point towards $X$.
	Let $W_k$ be the node to the right of the last (closest to $Y$) arrow pointing
	to the right $\gamma = [X \ldots \rightarrow W_k \leftarrow \ldots \leftarrow Y]$,
	then $W_k$ is a collider along $\gamma$.
	
	But the part of $\gamma$ between $W_k$ and $Y$ is a directed path from $Y$ to $W_k$
	(because $W_k$ was chosen after the \emph{last} arrow pointing to the right),
	so $W_k \in\Dec_G(Y)$ and so are all descendants of $W_k$,
	$\Dec_G(W_k) \subset \Dec_G(Y)$. Thus, $\Dec_G(W_k) \cap Z = \emptyset$ and $\gamma$ is blocked both in the d-separation and $\sigma$-separation sense.
\end{proof}

\subsection{Proof of the Faithfulness statement}\label{apdx:faithful}

The hypothesis of the following lemma is motivated in the main text
in Sec.~\ref{sub:AlgoTheory}.

\theoremstyle{plain}
\newtheorem*{lemma_ff}{Lemma \ref{lemma:faithful}}
\begin{lemma_ff}
	Given $r$, assume both $P_M$ is faithful to $\gDescr{G}_{R=r}$
	and $P_M(\ldots|R=r)$ is faithful to $\gDescr{\bar{G}}_{R=r}$.
	Then:
	\begin{equation*}
	\exists Z \txt{ s.\,t.\ }
	\left\{\quad\begin{aligned}
	X &\independent Y | Z \quad\txt{or}\\
	X, Y \neq R \txt{ and }X &\independent Y | Z, R=r
	\end{aligned}\quad\right\}
	\quad\Rightarrow\quad
	\txt{$X$ and $Y$ are not adjacent in $\gDescr{G}_{R=r}$}
	\end{equation*}
	We will refer to this condition as $r$-faithfulness,
	or $R$-faithfulness if it holds for all $r$.
\end{lemma_ff}

\begin{proof}
    The statement is symmetric under exchange of $X$ and $Y$, so
    it is enough to show $X \notin \gDescr\Pa_{R=r}(Y)$. 
	We do so by contradiction: Assume $X \in \gDescr\Pa_{R=r}(Y)$ and let $Z$ be arbitrary.
	$Z$ can never block the direct path $X \rightarrow Y$,
	so they are never d-separated $X \dependent_{\gDescr{G}_{R=r}} Y | Z$.
	By (the contra-position of) the faithfulness assumptions, $X \dependent_P Y | Z$ and if $X, Y \neq R$ also $X \dependent_P Y | Z, R=r$
	(the second statement is by definition
	the same as $X \dependent_{P(\ldots|R=r)} Y | Z$).
\end{proof}

\subsection{Proof of Soundness of the algorithm}\label{apdx:soundness}

\theoremstyle{plain}
\newtheorem*{thm_soundness}{Thm.\ \ref{thm:soundness}}
\begin{thm_soundness}
	Assume causal sufficiency, single-graph-sufficiency, $r$-faithfulness and
	strong context-acyclicity with minimal union cycles of length $\leq 2$ (edge-flips).
	In the oracle case (if there are no finite-sample errors during
	independence testing), the algorithm
	recovers $\gDescr{G}_{R=r} = \gPhys{G}_{R=r}$.
\end{thm_soundness}

\begin{proof}
    (i) We show: If $X$ and $Y$ are not adjacent in $\gDescr{G}_{R=r}$,
    w.\,l.\,o.\,g.\, by weak context-acyclicity, $Y \notin \gDescr\Anc_{R=r}(X)$,
    then an algorithm deletes the edge between $X$ and $Y$ if it tests all
    subsets $Z \subset \gDescr\Adj_{R=r}(Y)$, on
    context-specific data if $R\in Z$, on the pooled data otherwise.
    
    Since each $\gDescr{G}_{R=r}$ is acyclic,
    at least one of the edges in a cycle must vanish for each context.
    Because each edge must appear in at least one context  to appear in $\gUnion{G}$, 
    for a cycle of length $\leq 2$, each directed edge in the cycle must vanish in 
    some context. Thus, already by weak context-sufficiency (implied by the hypothesis,
    Cor.\ \ref{cor:NoStateImpliesWeakRS}), all (i.\,e. both) nodes in the
    cycle are (union-)children of $R$. Thus, under the restriction to "small" cycles
    in the hypothesis, part (a) of the Markov property (Lemma~\ref{lemma:markov}) applies if $R$ is not
    a parent of $Y$.
    
    If part (a) applies for a $Z$ with $R\in Z$, then writing $Z' := Z - \{R\}$
    by $X \independent Y | Z', R$ $\Rightarrow$ $\forall r$ $X \independent Y | Z', R=r$,
    and the tests we do suffice to delete the link.
    If part (a) does not apply, then (see above) $R$ is a parent of $Y$.
    Further, at least one of (a) or (b) applies, so (b) applies.
    With $R$ a parent of (thus adjacent to) $Y$, we test (b).
    
    (ii) We show: The proposed algorithm runs at least all tests required for (i).
    
    The PC-algorithm (and its derivatives like PC-stable or conservative PC)
	tests for a link $X - Y$ in an efficient way, i.e., \emph{at least}
	all conditional independencies $X \independent Y | Z$ with
	$Z \subset \Pa_X$ or $Z \subset \Pa_Y$ and deletes an edge
	(starting from the fully connected graph) if this independence cannot be rejected.
	By the same argument, our proposed algorithm tests \emph{at least}
	all the conditioning sets required for (i).
    
    (iii) We show: If $X$ and $Y$ are adjacent in $\gDescr{G}_{R=r}$,
    then the edge between $X$ and $Y$ remains in the graph.
    
    By faithfulness (Lemma~\ref{lemma:faithful}),
	whenever we find a such an independence, the edge is \emph{not} in $\gDescr{G}_{R=r}$,
	i.\,e.\ irrespective of which (context-specific) independence tests we actually
	perform, we never delete an edge that is in $\gDescr{G}_{R=r}$.
\end{proof}
\theoremstyle{definition}

\begin{rmk}
	\label{rmk:descr_soundness}
    Using the argument of \citep[Sec.~4]{TheoryPaper} one can
    supplement the result of Thm.~\ref{thm:soundness} showing robustness against assumption-violations:\\
    Replacing the assumption of "single-graph-sufficient" by
    "weakly context-sufficient", the algorithm still recovers
    all descriptive information $\gDescr{G}_{R=r}$.\\
    Irrespective of any context-sufficiency assumptions,
    the algorithm recovers a graph $\gDetect{G}_{R=r}$ with
	$\gDescr{G}_{R=r} \subset \gDetect{G}_{R=r} \subset \gPhys{G}_{R=r}$,
	i.\,e.\ all edge removals are descriptively correct, and at least
	all physically changing links are removed.
\end{rmk}
\begin{proof}
    The arguments in (i) for showing that we only have to test on 
    context-specific data if $R$ is adjacent still apply (they only use
    weak context-sufficiency).
    Further, by weak context-acyclicity together with strong context-acyclicity,
    links that are in the $\gUnion{G}$ but not in $\gDescr{G}_{R=r}$
    are never between two ancestors of $R$ in the union graph (union-ancestors).
    So by the Markov property \citep[Prop.\ 4.3]{TheoryPaper},
    also the remainder of (i) still works.
    Part (ii) is still as in the standard PC-case.
    Part (iii) still works, because our faithfulness result
    (Lemma~\ref{lemma:faithful}) is already sufficiently general.
\end{proof}

\subsection{Large cycles}\label{apdx:large_cylces}

In this section, we discuss how large cycles, which involve more than two edges, i.\,e.\ not arising directly
from an edge flip, can lead to problems, as exemplified below. However, this problem
may not occur directly as discussed in Sec.~\ref{apdx:large_cycles_and_R}).

\paragraph{Context $R=0$:}
A valid separating set for $X$ and $Y$ is $\{W_5\}$ (the only parent of $Y$),
so $X \independent Y | W_5, R=0$. But $R$ is not adjacent to $Y$ (in the union-graph, see below), 
so this would not be tested by our algorithm (however, see Sec.~\ref{apdx:large_cycles_and_R} for a discussion):
Candidates for separating sets are the subsets of
adjacencies, and context-specific tests are only considered
if $R$ is in the candidate set.

\begin{minipage}{0.9\textwidth}
\centering
\begin{tikzpicture}[scale=1.0, outer sep=0.5, every circle node/.style={draw, thick}]
	\foreach \idx in {0,1,...,7} {
		\draw (\idx*360/8: 3cm) node(n\idx)[circle]
		{\ifnum\idx=2 $X$ \else \ifnum\idx=6 $Y$ \else $W_{\idx}$ \fi \fi};
		\ifnum\idx=0 \else
		\draw[->, thick] (\idx*360/8+10: 3cm) arc (\idx*360/8+10: (\idx+1)*360/8-10: 3cm)
		\fi;
	}
	\draw (7*360/16: 5cm) node[circle] (Z) {$Z$};
	\draw[->, thick] (Z) -- (n3);
	\draw[->, thick] (Z) -- (n4);
	\draw (15*360/16: 5cm) node[circle] (Z2) {$Z'$};
	\draw[->, thick] (Z2) -- (n0);
	\draw[->, thick] (Z2) -- (n7);
\end{tikzpicture}
\end{minipage}

\paragraph{Context $R=1$:}
A valid separating set for $X$ and $Y$ is $\{W_1\}$ (the only parent of $X$),
so $X \independent Y | W_1, R=1$. But $R$ is not adjacent (see Sec.~\ref{apdx:large_cycles_and_R}
however) to $X$, so this would not be tested by our algorithm.

\begin{minipage}{0.9\textwidth}
\centering
\begin{tikzpicture}[scale=1.0, outer sep=0.5, every circle node/.style={draw, thick}]
	\foreach \idx in {0,1,...,7} {
		\draw (\idx*360/8: 3cm) node(n\idx)[circle]
		{\ifnum\idx=2 $X$ \else \ifnum\idx=6 $Y$ \else $W_{\idx}$ \fi \fi};
		\ifnum\idx=4 \else
		\draw[->, thick] (\idx*360/8+10: 3cm) arc (\idx*360/8+10: (\idx+1)*360/8-10: 3cm)
		\fi;
	}
	\draw (7*360/16: 5cm) node[circle] (Z) {$Z$};
	\draw[->, thick] (Z) -- (n3);
	\draw[->, thick] (Z) -- (n4);
	\draw (15*360/16: 5cm) node[circle] (Z2) {$Z'$};
	\draw[->, thick] (Z2) -- (n0);
	\draw[->, thick] (Z2) -- (n7);
\end{tikzpicture}
\end{minipage}

\paragraph{Ground-truth labeled union graph}
Note: $\sigma$-separation implies d-separation.
So it is enough to exclude d-separation.

Evidently the empty set is not a separating set $X \dependent Y$.
The parents of either $X$ or $Y$ would only work in one context,
in the other they become dependent, so
$X \dependent Y | W_5$ and $X \dependent Y | W_1$ on the pooled data.
For this simple example (without $Z$, $Z'$), it seems like conditioning on all
adjacencies could work. However, this would lead to other
problems. For example,
with the additional variable $Z$ conditioning on $\Adj(X) = \{W_1, W_3\}$
would open the path
$[X \rightarrow W_3 \leftarrow Z \rightarrow W_4 \rightarrow W_5 \rightarrow Y]$
in context $R=0$. (Similar for $Y$ and $Z'$.)

\begin{minipage}{0.9\textwidth}
\centering
\begin{tikzpicture}[scale=1.0, outer sep=0.5, every circle node/.style={draw, thick}]
\foreach \idx in {0,1,...,7} {
	\draw (\idx*360/8: 3cm) node(n\idx)[circle]
	{\ifnum\idx=2 $X$ \else \ifnum\idx=6 $Y$ \else $W_{\idx}$ \fi \fi};
	\draw[->, thick] (\idx*360/8+10: 3cm) arc (\idx*360/8+10: (\idx+1)*360/8-10: 3cm)
	\ifnum\idx=4 node[pos=0.5, left] {0} \fi
	\ifnum\idx=0 node[pos=0.5, right] {1} \fi;
}
\draw (0,0) node[circle] (R){$R$};
\draw[->,thick] (R) -- (n5);
\draw[->,thick] (R) -- (n1);
\draw (7*360/16: 5cm) node[circle] (Z) {$Z$};
\draw[->, thick] (Z) -- (n3);
\draw[->, thick] (Z) -- (n4);
\draw (15*360/16: 5cm) node[circle] (Z2) {$Z'$};
\draw[->, thick] (Z2) -- (n0);
\draw[->, thick] (Z2) -- (n7);
\end{tikzpicture}
\end{minipage}

\subsection{Union cycles involving context children}\label{apdx:large_cycles_and_R}

There is another problem with larger union cycles:
Links from $R$ to a union cycle cannot be tested using context-specific data
(because links involving $R$ cannot be tested for $R=r$, as $R$ is in that case a constant),
but tests performed on the pooled data will only delete edges not in the acyclification
of the union graph. Thus, if there is a child of $R$ in a union cycle
-- which is \emph{always} the case given weak context-acyclicity and
weak context-sufficiency -- then the edges from $R$ to \emph{all} nodes 
in the cycle cannot be deleted. Thus, $R$ will always point at
(be adjacent to, for skeleton discovery) all nodes in a union cycle,
irrespective of whether or not they are children of $R$ in the ground truth.

While this is a problem in itself, it seemingly removes the problem
of Sec.~\ref{apdx:large_cylces}, so one might be inclined to claim
soundness for larger cycles.
While this is not entirely wrong, we strongly advise
against it, for the following reasons:
\begin{enumerate}[label=(\roman*)]
    \item
    Adding \emph{correct} constraints
    (e.\,g.\ from expert knowledge)
    about missing edges from $R$ to system nodes could break the output.
    \item
    In practise (on final data) error-propagation behavior is poor.
    \item
    Large cycles can be treated consistently (if they are likely to occur) with the intersection graph
    method described in Sec.~\ref{apdx:intersection_graph},
    at the cost of requiring more tests and therefore
    generally worse finite-sample performance.
\end{enumerate}

It might be possible that (iii) can be used "sparingly": One may observe that the above problem should leave a suspicious trace in discovered graphs:
There should be many edges from $R$ to the region of the graph
containing the cycle, and all nodes in this region not adjacent to $R$
should be heavily interconnected (because they are not tested on context-specific data,
so only the acyclification is found). Thus encountering such a structure may be an indication to test the edges of this region again 
using the intersection graph method.

\subsection{Complexity}
\label{subsec:complexity}

Pairwise tests are always run on the pool in our algorithm, and all context-specific test are "localized" to near the indicator. This means, our algorithm scales like standard PC for increasing variable count but fixed context-variable count.
But also for large (or increasing) context-variable count only the possible values of context-variables "nearby" are taken into account as context, leading to an (approximately) linear scaling:
The precise number of tests in each iteration of our algorithm depends on the number of links remaining after the previous iteration 
and is thus difficult to estimate precisely. However, on finite data, dependencies are typically only detectable for a finite
number $H$ of hops along a true causal path. Given that the union graph has a finite graph degree $D_G$, each variable is found dependent (is within $H$ hops) of a 
finite number $\beta\leq\beta_0(D_G,H)$ of other nodes (upper bounded by a function of graph degree $D_G$ and maximum number of hops $H$ independent of the number of context variables $K$ and the total number of variables $D$, i.\,e.\ $\beta_0=\mathcal{O}_{K,D}(1)$).
Every variable is thus after the pairwise tests
(pairwise testing is always done on the pool)
adjacent to less than $\beta_0$ other variables.
We assume\footnote{Then within the $\beta$ many adjacencies there is typically at most one indicator; generally we could replace $C$ by $C^{\beta_0}$, as there are at most $\beta_0$ many adjacent indicators with at most $C^{\beta_0}$ many combinations of values, which is still $\mathcal{O}_{K,D}(1)$.} that the density $\sfrac{K}{D}$ of indicators
is small compared to $\beta_0$ and each indicator takes at most $C=\mathcal{O}_{K,D}(1)$ many different values.
Then on average there are $n\leq \beta_0 \times \sfrac{K}{D} \times C$ many context-specific tests for each variable.
Running up to $n$ CSI tests on average on all $D$
variables leads thus to less than $\beta_0 \times K \times C$ many CSI tests on average, so our algorithm is $\mathcal{O}(K)$ on average.
Under the assumption that $\sfrac{K}{D}$ is fixed as $D$ increases (the density of context variables in the graph is constant), the algorithm is $\mathcal{O}(D^2)$ as is standard PC (the $D^2$ comes from the number of pairwise tests growing faster than the CSI-count).

In comparison, a masking or intersection graph approach, as described in
Sec.~\ref{apdx:intersection_graph} has to run all tests for each combination of indicator values, thus scales $\mathcal{O}(\exp(K))$, 
as each indicator can take at least two values, i.e., there are at least $2^K$ combinations.
Thus, our method could scale comparatively well to a larger setups involving many variables and context indicators.

\subsection{Baseline method / intersection graph}
\label{apdx:intersection_graph}

From the perspective that information in causal graphs is in the missing links,
one would expect that running causal discovery once on the masked\Slash{}context-specific data
(all data points with $R=r$) to obtain $\gMask{\bar{G}}_{R=r}$
(containing edges from selection bias; we exclude $R$ from the graph,
as it is a constant)
and once on the pooled data (all data points) to obtain $\gPool{G}$ (which
in the case of an acyclic union graph is \emph{not} the union graph)
one should find all necessary information. Thus, we would expect that
the intersection graph obtained by intersecting the obtained sets of edges returns the correct context-specific graphs. Indeed we now also have the theory in place to verify this:
\begin{lemma}
    The intersection of the edges obtained from the pooled and masked results yield the correct descriptive graph:
    \begin{equation*}
        \gMask{\bar{G}}_{R=r} \cap \gPool{\bar{G}} = \gDescr{\bar{G}}_{R=r}\txt.
    \end{equation*}
\end{lemma}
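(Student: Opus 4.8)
The plan is to prove the two inclusions separately, working throughout at the level of adjacencies and reading $\gPool{\bar{G}}$ and $\gMask{\bar{G}}_{R=r}$ as the skeletons returned by oracle PC applied to the pooled distribution $P_M$ and to the masked distribution $P_M(\,\cdot\mid R=r)$ respectively (with $R$ dropped, since it is constant under masking, just as in the definition of the barred graphs). The only facts about oracle PC I would invoke are the standard ones: an edge $W-Y$ survives whenever \emph{no} conditioning set at all renders $W\independent Y\mid(\cdot)$ in the relevant distribution, and an edge is deleted as soon as PC actually tests a valid separating set for it. Since $\gPool{\bar{G}}$ and $\gMask{\bar{G}}_{R=r}$ agree with their unbarred counterparts on pairs avoiding $R$, I can quote Lemma~\ref{lemma:markov} and Lemma~\ref{lemma:faithful}, which are stated for $\gDescr{G}_{R=r}$, and also use $\gDescr{G}_{R=r}\subset\gUnion{G}$.

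For $\gDescr{\bar{G}}_{R=r}\subseteq\gMask{\bar{G}}_{R=r}\cap\gPool{\bar{G}}$: let $X,Y\neq R$ be adjacent in $\gDescr{G}_{R=r}$. The contrapositive of Lemma~\ref{lemma:faithful} shows that no $Z$ satisfies $X\independent_{P_M}Y\mid Z$ and (as $X,Y\neq R$) no $Z$ satisfies $X\independent_{P_M}Y\mid Z,R=r$. Hence neither the pooled nor the masked run ever finds a witness to delete $X-Y$, so the edge lies in both graphs and thus in the intersection. This direction is immediate once Lemma~\ref{lemma:faithful} is available.

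For $\gMask{\bar{G}}_{R=r}\cap\gPool{\bar{G}}\subseteq\gDescr{\bar{G}}_{R=r}$: let $X,Y\neq R$ be non-adjacent in $\gDescr{G}_{R=r}$; by weak context-acyclicity (Def.~\ref{def:weak_context_acyclicity}) we may take $Y\notin\gDescr\Anc_{R=r}(X)$. Lemma~\ref{lemma:markov} gives $Z\subset\gDescr\Adj_{R=r}(Y)$ with either (a) $X\independent_{P_M}Y\mid Z$ — and inspecting its proof one may take $Z=\gUnion\Pa(Y)=\gDescr\Pa_{R=r}(Y)$, the last equality by weak context-sufficiency (implied by single-graph-sufficiency, Cor.~\ref{cor:NoStateImpliesWeakRS}), valid because in case (a) $R$ is not a parent of $Y$ — or (b) $X\independent_{P_M}Y\mid Z,R=r$ with $Z=\gDescr\Pa_{R=r}(Y)\setminus\{R\}$. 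I would then argue that in case (a) the pooled run deletes $X-Y$ and in case (b) the masked run does. In both cases $X\notin Z$ (in case (a), $X,Y$ are even non-adjacent in $\gUnion{G}$ by the proof of Lemma~\ref{lemma:markov}; in general $X\notin\gDescr\Adj_{R=r}(Y)\supseteq Z$), so $Z$ is an admissible candidate separating set for $X-Y$. The point to verify is that $Z$ is still present when PC reaches conditioning sets of size $|Z|$, i.e.\ that each edge $W-Y$ with $W\in Z$ survives up to that stage. If such an edge were deleted, it would be via a witness $W\independent_{P_M}Y\mid(\cdot)$ (case a) or $W\independent_{P_M}Y\mid(\cdot),R=r$ (case b), whence Lemma~\ref{lemma:faithful} forces $W\notin\gDescr\Pa_{R=r}(Y)$, contradicting $W\in Z\subseteq\gDescr\Pa_{R=r}(Y)$. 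So $Z$ remains available, the relevant run tests $X\independent Y\mid Z$ on the appropriate data and deletes $X-Y$, so the edge is absent from the intersection.

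I expect the main obstacle to be this bookkeeping of PC's adjacency sets together with pinning down what "oracle PC on pooled / masked data" returns when $\gUnion{G}$ is cyclic: the pooled run then recovers (the skeleton of) the acyclification $\ACycl(\gUnion{G})$ rather than of $\gUnion{G}$, so one must check that the extra within-strongly-connected-component edges do not obstruct the deletion argument. This is exactly where "minimal union cycles of length $\le 2$" plus strong context-acyclicity enter: they force every node of a union cycle to be a union child of $R$, so any spurious $X-Y$ with $Y$ in a union cycle falls into case (b) of Lemma~\ref{lemma:markov}, and the masked run — which reflects the context-$r$ mechanisms rather than an acyclification — performs the deletion; for larger cycles this breaks, exactly as in Sec.~\ref{apdx:large_cylces}. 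A minor remaining point is whether the equality is meant for skeletons or for oriented edge sets; I would prove the skeleton version as above and then note that on the common skeleton $\gDescr{\bar{G}}_{R=r}$ both runs see the same unshielded colliders, so the orientations coincide as well.
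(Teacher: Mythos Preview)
Your two-inclusion argument via Lemma~\ref{lemma:faithful} (for $\supseteq$) and Lemma~\ref{lemma:markov} (for $\subseteq$) is the paper's approach. The paper's proof is simply the one-line observation that the separating sets $Z\subset\gDescr\Adj_{R=r}(Y)$ produced by Lemma~\ref{lemma:markov} are searched by both PC runs because $\gDescr\Adj_{R=r}(Y)\subset\gMask\Adj_{R=r}(Y)$ and $\gDescr\Adj_{R=r}(Y)\subset\gPool\Adj(Y)$; both containments are exactly the contrapositive of Lemma~\ref{lemma:faithful}, which is what your ``$W\in Z$ survives'' bookkeeping spells out. You do not need to inspect the proof of Lemma~\ref{lemma:markov} to pin down $Z$ more precisely: the statement already gives $Z\subset\gDescr\Adj_{R=r}(Y)$, and that is all you use.

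Where you go astray is the cycle discussion. You invoke the ``minimal union cycles of length $\le 2$'' restriction and conclude that ``for larger cycles this breaks, exactly as in Sec.~\ref{apdx:large_cylces}''. This is incorrect for the intersection method, and the paper says so explicitly in the sentence following this lemma: ``This approach even avoids the problems of Sec.~\ref{apdx:large_cylces}.'' The point is that case~(b) of Lemma~\ref{lemma:markov} fires whenever $Y$ lies on a union cycle, regardless of whether $Y$ is a union child of $R$ and regardless of cycle length; the masked run, which unconditionally tests every $Z\subset\gMask\Adj_{R=r}(Y)\supset\gDescr\Adj_{R=r}(Y)$ on the conditional data, therefore deletes any such spurious $X-Y$. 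The cycle-length restriction in Theorem~\ref{thm:soundness} exists only because the \emph{adaptive} algorithm refuses to test on masked data unless $R$ is in the candidate conditioning set---a restriction the intersection method does not impose. So the extra within-strongly-connected-component edges you worry about in $\gPool{\bar G}$ are harmless here: they are removed from the intersection by the masked run.
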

\begin{proof}
    This follows directly from the Markov property (Lemma \ref{lemma:markov})
    and faithfulness (Lemma \ref{lemma:faithful}),
    even if one only tests $Z$ in adjacencies in 
    the respective discovery algorithms
    (because both $\gMask\Adj_{R=r}(Y) \supset \gDescr\Adj_{R=r}(Y)$
    and $\gPool\Adj(Y) \supset \gDescr\Adj_{R=r}(Y)$).
\end{proof}

\begin{rmk}
    Generally, the information argument about $\gMask{\bar{G}}_{R=r} \cap \gPool{\bar{G}}$
    works as is, but our results show that this has indeed a meaningful connection to the SCM.
\end{rmk}

This approach even avoids the problems of Sec.~\ref{apdx:large_cylces}. However, it has some severe disadvantages on finite data: 
Both discovered graphs are individually less sparse than
$\gDescr{\bar{G}}_{R=r}$, and thus more CITs are necessary (and especially more CITs with large separating sets, 
which are particularly problematic for both precision and runtime).
Furthermore, all tests are executed once on the pooled data for $\gPool{G}$
and once on the masked data $\gMask{\bar{G}}_{R=r}$. Thus, twice as many tests would be necessary, 
even without the lack of sparsity. With multiple context indicators,
the intersection graph approach would scale exponentially while our method scales linearly, as discussed in Sec.~\ref{subsec:complexity}.

We use this approach as a baseline to validate the finite-sample properties of our algorithms. 
We also discuss how to use the intersection graph approach when large cycles may exist in the union graph in Sec.~\ref{apdx:large_cycles_and_R}. 
While large cycles can possibly pose problems for our method, combining our approach with the intersection graph method could yield the correct 
result in a more efficient way than only using the intersection graph approach.

\subsection{Difficulty of detecting context-system links}
\label{apdx:CtxtSysTesting}

Testing $R \independent Y$ is not only difficult due to non-linearities,
but also due to the following problems:
We focus on the case where $R$ is binary for the example
$Y = \mathbbm{1}(R) \times X + \eta_Y$,
and $X = \eta_X$ with both $\eta_X, \eta_Y \sim \mathcal{N}(0,1)$
standard normal.
Here $Y \dependent R$ (with the empty conditioning set).
This is essentially a two-sample test, i.\,e., we are
asking if $P(Y|R=0) \neq P(Y|R=1)$.
This difference is only visible in variance (higher moments) and
not in mean value. $P(Y|R=0)$ is simply $\eta_Y$ and thus standard-normal $\mathcal{N}(0,1)$,
while $P(Y|R=1)$ is the sum of two independent standard-normal
distributed variables ($\eta_X + \eta_Y$), so it is
$\mathcal{N}(0,\sigma^2 = 2)$.
Many independence measures like correlation or generalized covariance
measure \citep{shah2020hardness} are insensitive to changes in higher moments only (e.\,g., testing the correlation
of $R$ and $Y$ checks for the non-zero slope of the
linear fit through $(0, E[Y|R=0])$ and $(1, E[Y|R=1])$).
Mutual information or distance correlation
based tests can gain power against this
alternative. Generally, using parametric tests for this scenario
(even simple workarounds like testing correlation on squared or absolute values) should also be possible and might provide better power.
Some care should be taken \citep[Example 1]{shah2020hardness}
to formulate the model-class we actually want to test on, as we also have to be able to account for the synergistic (the model is not additive in $X$ and $R$) dependence on $R$.

\subsection{A remark on CIT with regression}
\label{app:CIT_remark}

Conditional independence testing often relies on regressing out conditions and testing on residuals (e.\,g. \ partial correlation). 
This is usually justified for causal discovery roughly as follows: Only the "true" adjustment set has to work -- "erroneous" independence 
is assumed impossible by faithfulness -- and only if $X$ and $Y$ are "truly" non-adjacent. The "true" adjustment-set for 
$X, Y$ is w.\,l.\,o.\,g.\ $Z = \Pa(Y)$, with $P(Y|Z) = P(Y|\PearlDo(Z))$, so the regression function approximates 
the structural $f_Y$ which exists and is well-defined as a mapping (in particular single-valued);
so for additive noise models testing on residuals should work.

One might, at first glance, consider that this may fail to hold if some links from $Z$ to $Y$ are context-dependent (thus multi-valued). 
However, this is not the case: If a link from $Z$ to $Y$ vanishes by a change of mechanism, then $R \in Z=\Pa(Y)$ and 
the \emph{multi-variate} $f_Y$ is \emph{not} multi-valued (on its multi-dimensional input). 
If a link from $Z$ to $Y$ vanishes by moving out of the observational support, the previous argument still works, i.e., $f_Y$ does not change.

Moreover, if the per-context system (dependencies on variables $\neq R$ after fixing $R=r$) is additive (e.\,g. \ linear), 
then the fit can be done additively because, again, if a link from $Z$ to $Y$ vanishes by a change of mechanism, then $R \in Z=\Pa(Y)$. 
Since our method tests on data with $R=r$ only, it learns a consistent restriction of the multi-variate $f_Y$ where it is additive. 
For example, if $\forall r:$ $M_{\PearlDo(R=r)}$ is linear with Gaussian noise, then conditional independence between two variables $X, Y$, 
both $\neq R$, can be tested by partial correlation, and the method remains consistent.

\subsection{A Remark on Error Propagation}
\label{apdx:error_propagation}

Errors involving links to $R$ affect the performance of the algorithm as follows:

For false negatives (a link involving $R$ is not found), no CSI is tested, 
and thus it does not find additional context specific information, but it also does not incur further errors beyond those inherent to PC.

In the presence of false positives 
(a link involving $R$ is detected to a non-adjacent $Y$), our method -- like PC -- executes more tests than strictly necessary, because more adjacencies are available. 
If $R$ is erroneously found as adjacent, and is added to the conditioning set, in our case the tests will be run as CSI (for $R=r$), i.e., using the context-specific 
samples only. This can lead to testing on a reduced sample size resulting in further finite sample effects. It should \emph{not} induce selection bias, because
we only have to find (at least) one valid separating set, but if $R$ is non-adjacent such a separating set,
not containing $R$, exists, still allowing to correctly delete the link.

\subsection{Labeled union graphs}
\label{apdx:labeled_union_graph}

Knowing $\gDescr{G}_{R=r}$ for all $r$, we can also construct 
a labeled union-graph similar to LDAGs 
(or more precisely --
context-specific LDAGs \citep[Def.\ 8]{pensar2015labeled}).
This follows the philosophy that one obtains strictly
more information than just the union graph, and wants to
represent this information as an extension of the
union graph rather than in separate graphs.

There are two viewpoints on how to symbolize multiple graphs in one:
The information in the graph (about
factorization of the distribution) is in the \emph{missing} links,
hence, link labels indicate when they "disappear" (stratified graphs and LDAGs
follow this convention) -- or the edges represent mechanism relationships of the underlying
SCM and labels represent when mechanisms are "active."
The second option seems more straightforward to read (from a causal
model perspective), so we follow this convention below:
\begin{itemize}
    \item 
    If there is an edge $X \rightarrow Y$ in $\gDescr{G}_{R=r}$
    for all $r$, insert an (unlabeled) edge $X \rightarrow Y$ into
    the labeled union graph.
    \item
    If there is never an edge $X \rightarrow Y$ in $\gDescr{G}_{R=r}$
    for any $r$, than there is no edge from $X$ to $Y$ in the
    labeled union graph.
    \item
    If there is an edge $X \rightarrow Y$ for $r \in \mathcal{R}$,
    and no edge for $r \notin \mathcal{R}$, with
    $\mathcal{R} \neq \emptyset$ a proper (i.\,e.\ not
    equal to) subset of the values taken by $R$,
    then add an edge labeled by $\mathcal{R}$,
    e.\,g.\ $X \xrightarrow{\{r_0, r_1, r_5\}} Y$, to the
    labeled union graph.
\end{itemize}

\section{Further details on the simulation study}
\label{app:simulation_cont}

Here, we describe the data generation approach for the numerical experiments in  detail. 
Moreover, we present further results which aim to explore the robustness of our method under different scenarios. 

\subsection{Data generation}
\label{app:further_details_data_gen}

We generate the simulation data using SCMs. We start by constructing what we will call the base SCM\Slash{}graph, a linear SCM with up to $D+1$ nodes, and with acyclic (mechanism) graph, of the form

\begin{equation}
    X_i = f_i(\text{Pa}(X_i)) + c_{i} \eta_i
    \label{eq:gen_model}
\end{equation}

where \(i = 1, \ldots, D+1\), \(\text{Pa}(X_i)\) are the parents of \(X_i\), \(c_{j}\) are constants, and \(\eta_j\) are exogenous noise terms. 
The function \(f_i(\text{Pa}(X_i))\) is defined as:

\begin{equation}
    f_i(\text{Pa}(X_i)) = \sum_{X_j \in \text{Pa}(X_i)}a_{ij} X_j.
    \label{eq:f_i_pa_i}
\end{equation}

We use a desired sparsity level $s \leq 1$, which leads to the number of total links $L$
(non-zero coefficients $a_{ij}$) to be found as the fraction $s$ of the total 
possible number of links $D(D+1)$:

\begin{equation}
    L = s \cdot D(D+1)
\end{equation}

As a next step, we select a variable to become the context indicator $R$. To this end, we randomly select an index $k \in \{1,..., D+1\}$. 
Then, we assign the variable $X_k$ as the context indicator, i.e., \( R = X_k \). For example, if \( k = 2 \), then \( R = X_2 \). 

As discussed in Sec.~\ref{sec:formal_description}, \( R \) is a \emph{categorical} variable that indicates which of the \( n_{\text{contexts}} \) 
context-specific SCMs the respective sample belongs to. Thus, we must generate categorical values from the continuous values of $X_k$. 
To achieve this, we define \( n_{\text{contexts}} - 1 \) quantiles \(\{q_1, \ldots, q_{n_{\text{contexts}}-1}\}\) as linearly spaced values between 0 and 1. 
These quantiles are then adjusted using a balance factor \( b \) to compute the final quantiles as \(\{q_1^b, \ldots, q_{n_{\text{contexts}}-1}^b\}\). 
The balance factor \( b \) affects the distribution of data points across contexts:

\begin{itemize}
    \item If \( b = 1 \), the data points are balanced across contexts.
    \item If \( b < 1 \), the balance tips towards the left tail of the distribution.
    \item If \( b > 1 \), the balance tips towards the right tail of the distribution.
\end{itemize}

Using the adjusted quantiles, we compute the values for \( R \) as follows:

\[
R = 
\begin{cases} 
1 & \text{if } X_k < q_1^b \\
2 & \text{if } q_1^b \leq X_k < q_2^b \\
\vdots & \vdots \\
n_{\text{contexts}} & \text{if } X_k \geq q_{n_{\text{contexts}}-1}^b 
\end{cases}
\]

Thus, the continuous variable \( X_k \) is converted into a categorical variable \( R \) indicating the context based on the thresholds 
defined by the adjusted quantiles. This results in $R$ having a non-linear relationship to all other variables.

Using $R$, we can now proceed to generate context-specific SCMs which have \( n_{\text{change}} \) changed links compared to the base graph. 
For this, we edit the base graph sequentially using the allowed operation set $S_{\text{ops}}$, as follows:
\begin{enumerate}
    \item Randomly select a variable $X_c$ (excluding $R$),
    \item Randomly select an operation from the set $S_{\text{ops}}$,
    \item Apply the change to the graph: 
    \begin{itemize}
    \item \textbf{Add an Edge:} Add an edge from \( X_c \) to another randomly selected variable $X_{c'}$, 
    if an edge between $X_c$ and $X_{c'}$ does not already exist.

    \item \textbf{Remove an Edge:} Remove an existing edge from $X_c$ to a randomly selected adjacent variable $X_{c'}$.

    \item \textbf{Flip (Reverse) an Edge:} Reverse an existing edge from $X_c$ to a randomly selected adjacent variable $X_{c'}$, 
    making $X_{c'}$ the parent of $X_c$.
    \end{itemize}
    \item Add a link from $R$ to $X_{c'}$. If the selected operation is flip, then add a link from $R$ to $X_c$ and $X_{c'}$.
\end{enumerate}

We note that performing the selected operation on the randomly selected node is not always possible. 
For example, the node might not have any edges to remove, or adding an edge would lead to the graph being cyclic. 
We, therefore, check for possible inconsistencies and repeat the process several times (in the case of the experiments presented here, $10$ times). 
If an accepted graph is not obtained after these repetitions, we interrupt the process and continue with a new base graph. 
Therefore, not all repetitions of an experiment may be successful. 

As described in Sec.~\ref{sec:results}, we can either enforce cycles between the context-specific graphs (and thus, cycles in the union graph) 
or we do not allow them for a specific configuration. If the configuration must contain cycles, for each proposed edit operation, 
we check whether doing this operation would result in cycles and accept it if so. Otherwise, we reject the operation and start a new trial. 
This also leads to some repetitions of an experiment not being successful. We mention that we do not enforce a specific cycle length. 

Once a context-specific SCM is created, we generate the data according to this SCM. 
For each variable $X_i$ in the context-specific SCM, we determine its value based on its parents and the structural equations of the SCM. 
Formally, for a variable $X_i$ with context-specific parents for context $R=r$, \( \text{Pa}^{R=r}(X_i) \), its value is generated as:

\[
X_i = f_i(\text{Pa}^{R=r}(X_i)) + c_i \eta_i
\]

where \( f_i \) is defined as in Equation~\ref{eq:f_i_pa_i}. By following this process, we ensure that each variable affected by the 
context is generated according to the modified SCM, incorporating the context-specific changes introduced during the editing phase.

\subsection{Failed attempts}
\label{app:failed}

As mentioned in Sec.~\ref{sec:results}, we repeat each experiment $100$ times. However, as discussed above, due to the random data-generating 
process described above, not all graph generation trials are successful. The accompanying 'failed.txt' file in the .zip file reports how many 
graph generation trials have failed for each configuration. Here, we mention that, for the results presented in Sec.~\ref{sec:results}, $10$ trials have failed. 
We also report the number of failed trials in the following subsections which present additional results.

\subsection{Further results}
\label{app:further_results}

We now present results for further configurations, which vary $D$, $s$ and $b$, as well as detailed discussions of the experiments mentioned in the 
main paper. All results were obtained by conducting trials in parallel across a cluster of 116 CPU nodes, 
each equipped with 2x Intel Xeon Platinum 8260 processors. The supplementary .zip file contains the original code used for running the experiments on the cluster. 
Additionally, we provide an option to run the experiments in sequential mode.

\subsubsection{Further results for the configuration in Sec.~\ref{sec:results}}
\label{app:res_main_paper_all_link_asm}

\paragraph{Results with context-system link assumptions}

 \begin{figure}[H]
    \centering
\includegraphics[width=0.4\linewidth]{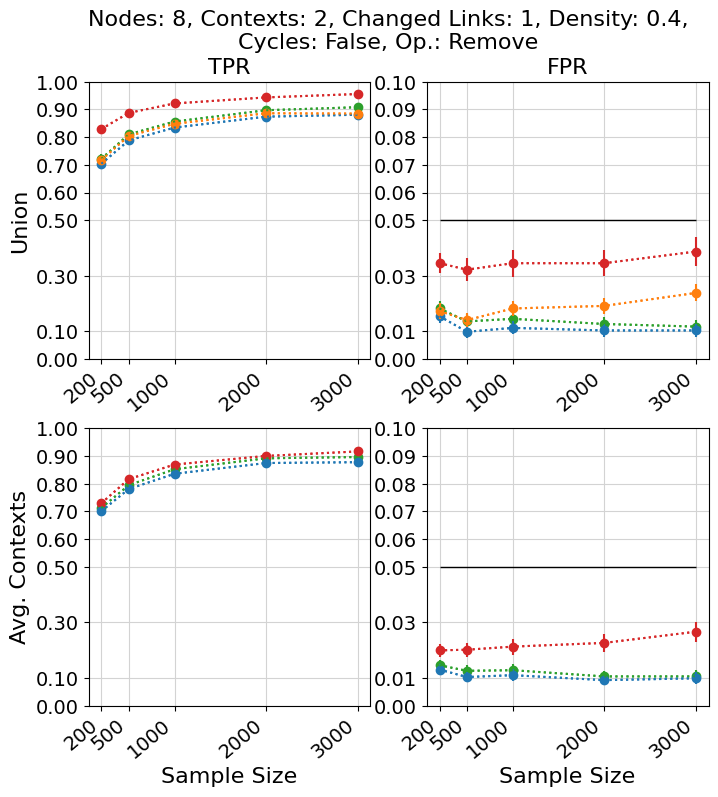}
    \includegraphics[width=0.6\linewidth]{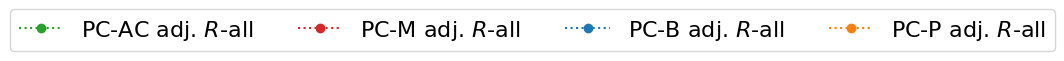}
    \caption{TPR and FPR results for the setup presented in Sec.~\ref{sec:results} without cycles in the union graph, where all links to $R$ are known (adj.$R$-all) for the methods PC-AC (our algorithm), PC-B (baseline), PC-M (masked data) and PC-P (pooled data). Results for the union graph are presented in the row above, and results for the context-specific graphs averaged over contexts are presented in the row below.}
    \label{fig:res_main_paper_all_link_asm}
\end{figure}

Fig.~\ref{fig:res_main_paper_all_link_asm} presents the results from the setup presented in Sec.~\ref{sec:experimental_setup} 
without cycles where all links to and from $R$ are known. Knowing these links significantly improves the TPR for all methods. 
Notably, PC-M has the highest TPR and FPR, indicating that only knowing about the context-system links in the endogenous context indicator 
setting does not suffice for PC-M to discover the correct graphs. The FPR is generally lower for the other methods, 
indicating fewer false discoveries due to the reduced uncertainty about context-system connections.

\paragraph{Edgemark precision and recall}

 \begin{figure}[H]
    \centering
\includegraphics[width=0.4\linewidth]{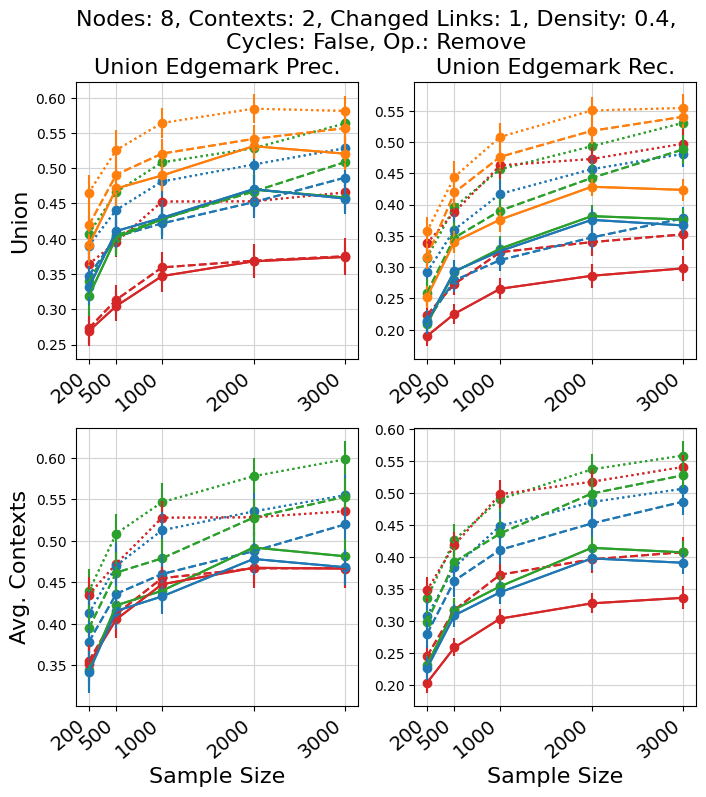}
    \includegraphics[width=0.6\linewidth]{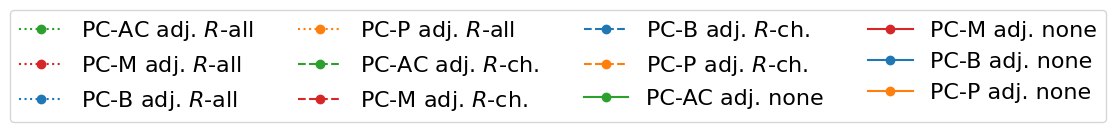}
    \caption{Edgemark precision (prec.) and recall (rec.) results for the setup presented in Sec.~\ref{sec:results} without cycles in the union graph, where either all links to $R$ are known (adj. $R$-all, dotted line), only the children of $R$ are known (adj. $R$-ch, interrupted line) or no links are known (adj. none, straight line). Here, we present results for the followinf methods: PC-AC (our algorithm), PC-B (baseline), PC-M (PC with masking) and PC-P (PC on pooled data). Results for the union graph are presented in the row above, and results for the context-specific graphs, averaged over contexts, are presented in the row below.}
    \label{fig:edgemark1}
\end{figure}

Fig.~\ref{fig:edgemark1} shows the precision and recall values for both the union and the context-specific graphs (averaged over contexts) 
for PC-AC, PC-M, and PC-B and the union graphs for PC-P for the different link assumptions. On the context-specific graphs, 
we observe that assuming all links to and from the context indicator are found (adj. $R$-all), PC-AC has the lowest precision, i.e., 
lower risk of falsely oriented edges, and highest recall, i.e., finding the correct orientations, followed PC-B. For the other link assumption cases, 
our method outperforms all other methods for the context-specific graphs. For the union graph, PC-P scores best, likely because it finds the most 
links in the union case, followed by our method. PC-M scores last.

\subsubsection{Cycles}
\label{app:cycle}

Fig.~\ref{fig:cycles} presents results for the experimental setup as described on Sec.~\ref{sec:results} with the difference that cycles 
between the context-specific graphs are enforced here. For this experiment, a total of $29$ trials have failed (see Sec.~\ref{app:failed}). 
From the total of 71 samples, 69 samples have cycle length 2, and 2 samples have cycle length 3. We observe similar behavior of the 
TPR and FPR for the non-cyclic results presented in Sec.~\ref{sec:results}. However, some differences are less pronounced, possibly due 
to the increasing complexity of the problem. As expected, the FPR for PC-P, which only detects the aycyclification, increases.
We observe that for all versions without link assumptions, the FPR increases significantly compared to the setup without cycles. 
The largest differences in FPR between PC-M and PC-AC are for a smaller number of samples, up to $2000$. For $3000$ samples, 
the FPR of PC-M is lower than the FPR of PC-AC, possibly due to overall fewer links being tested compared to PC-AC: 
Since PC-AC also finds context-system links, it tests more adjacent pairs and possibly finds more links. However, in terms of TPR, PC-AC, 
and PC-B perform best for the context-specific graphs, especially without any link assumptions.

\begin{figure}[H]
  \centering
    \includegraphics[width=0.4\linewidth]{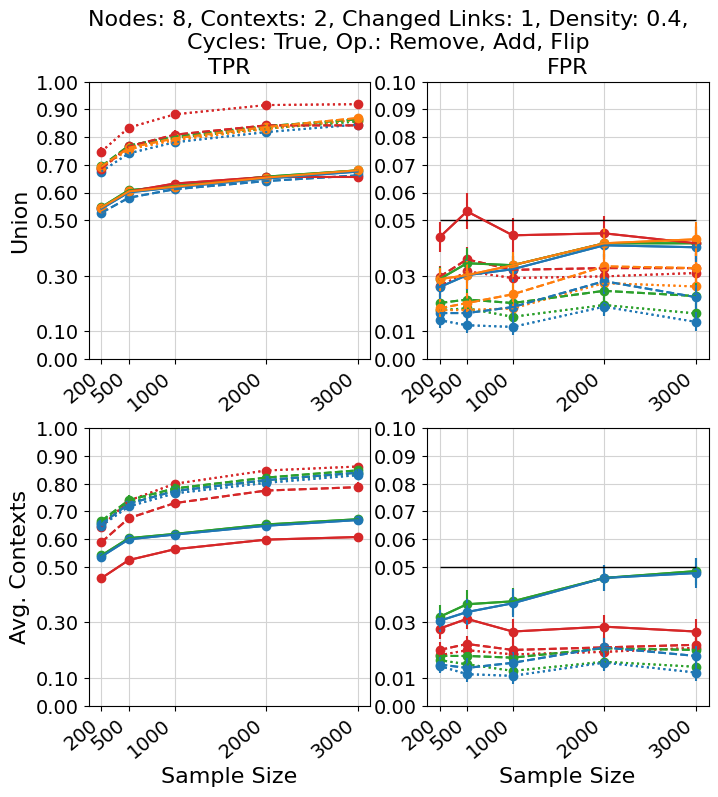}
    \includegraphics[width=0.6\linewidth]{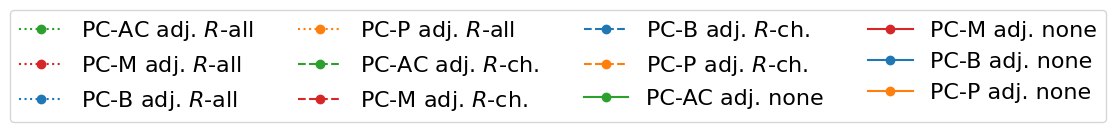}
   \caption{TPR and FPR results for the setup presented in Sec.~\ref{sec:results} where cycles in the union graph are enforced. 
   Here, we present results where all links to $R$ are known (adj.$R$-all, dotted line), links to children of $R$ are known (adj. $R$-c., interrupted line) 
   or without any link assumptions (adj. none, straight line) for the methods PC-AC (our algorithm), PC-B (baseline), PC-M (PC with masking) 
   and PC-P (PC on pooled data). Results for the union graph are presented in the row above, and results for the context-specific graphs, 
   averaged over contexts, are presented in the row below. }
  \label{fig:cycles}
\end{figure}

\subsubsection{Comparison with FCI-JCI0 and CD-NOD}
\label{app:fci}

\begin{figure}[H]
  \centering
    \includegraphics[width=0.4\linewidth]{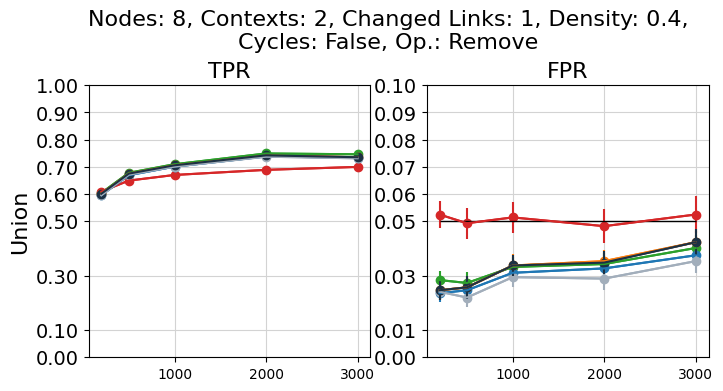}
    \includegraphics[width=0.4\linewidth]{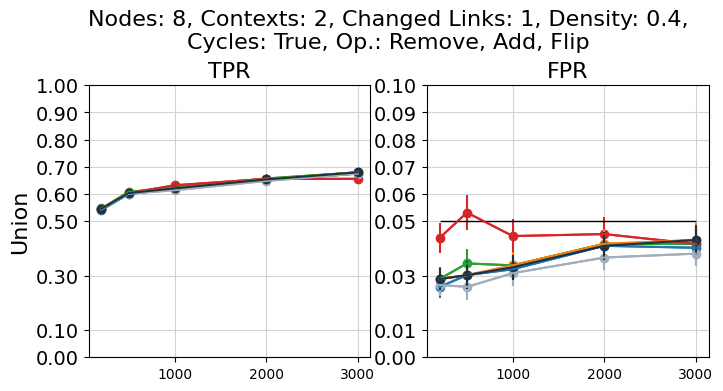}
    \includegraphics[width=0.6\linewidth]{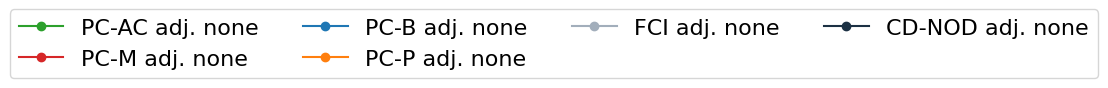}
   \caption{TPR and FPR results for the setup presented in Sec.~\ref{sec:results} without cycles in the union graph (left plot) and with enforced cycles in the union graph (right plot). Here, we present results where all links to $R$ are known (adj.$R$-all), for the methods PC-AC (our algorithm), PC-B (baseline), PC-M (masked data) and PC-P (pooled data), FCI-JCI0 (FCI) and CD-NOD. Results for the union graph.}
  \label{fig:fci}
\end{figure}

\paragraph{Skeleton discovery} As mentioned in Sec.~\ref{sec:results}, we compare the results of our algorithm for the discovery of the union graph 
to the Fast Causal Inference (FCI) algorithm \cite{spirtes2000causation}, discussed as JCI-FCI0 in \citet{mooij2020joint}. We apply JCI-FCI0, 
which does not assume exogenous context variables (we name it only FCI from now on) for the multi-context setup, and CD-NOD \cite{Zhang2017CausalDF}, 
a causal discovery algorithm for heterogeneous data, which assumes exogenous context variables. Both algorithms are applied to the pooled datasets. 
We use the causal-learn implementation \cite{Zheng2023CausallearnCD} for both algorithms. Since, for the available implementation, 
it is not possible to introduce link assumptions from the beginning of the discovery phase (while it is possible to introduce link assumptions, 
at the moment, they are only used to refine the graph after the discovery phase), we only evaluate the algorithms for the case without 
link assumptions (adj. none). As the algorithms only use pooled datasets, we do not present any result for the context-specific graphs.

Fig.~\ref{fig:fci_2} presents the results for the setup in Sec.~\ref{sec:results} of the main paper, without cycles (left-most two plots) 
and with cycles (two right-most plots). We observe that both FCI and CD-NOD behave similarly to PC-AC. We observe minimal differences between 
PC-AC and CD-NOD for smaller samples, with our method performing slightly better. FCI performs best across all methods in terms of FPR.
Nevertheless, FCI can only recover the union graph, and thus, our method still has 
the advantage of gaining more descriptive information than FCI. However, our method may perform poorly due to the CIT, and we believe that using 
an adequate CIT, our method can improve over FCI and CD-NOD.

\paragraph{Edge orientations}

We observe that FCI and CD-NOD perform best when comparing the precision and recall values for the edge orientations between the different methods. 
This result is expected: Our method focuses on skeleton discovery and does not propose any special rules to orient edges obtained from the union of the 
context-specific graphs, which is left for future work, as discussed in Sec.~\ref{sec:discussion}. CD-NOD and FCI develop specific orientation rules 
involving the context indicator, and thus are expected to perform better in terms of orientations at the moment. However, as we consider that there is 
additional information from the context-specific graphs that can be used to orient more edges, as exemplified in Sec.~\ref{apdx:orientations}, 
we believe that the performance of our method regarding edge orientations can be significantly improved.

\begin{figure}[H]
  \centering
    \includegraphics[width=0.4\linewidth]{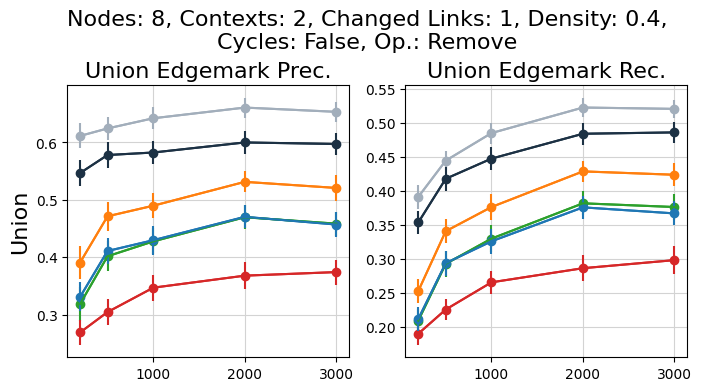}
    \includegraphics[width=0.6\linewidth]{images/appendix_results/fci/legend_fciFalse.png}
   \caption{Edgemark precision (prec.) and recall (rec.) results for the setup presented in Sec.~\ref{sec:results} without cycles in the union graph. 
   Here, we present results where all links to $R$ are known (adj.$R$-all), for the methods PC-AC (our algorithm), PC-B (baseline), PC-M (masked data) 
   and PC-P (pooled data), FCI-JCI0 (FCI) and CD-NOD. Results for the union graph.}
  \label{fig:fci_2}
\end{figure}

\subsubsection{Higher number of changing links}

\begin{figure}[H]
    \centering
    \includegraphics[width=0.4\linewidth]{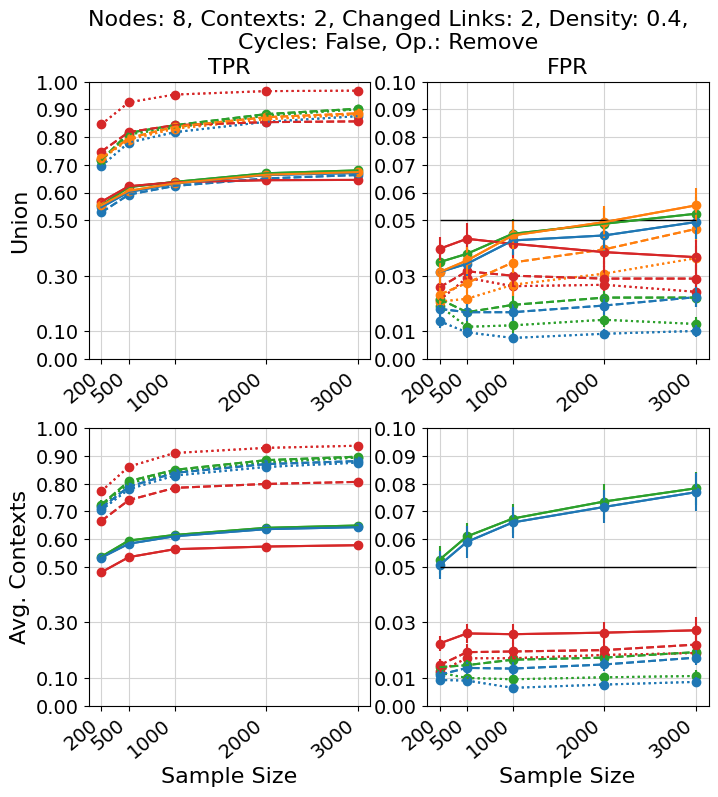}
    \includegraphics[width=0.4\linewidth]{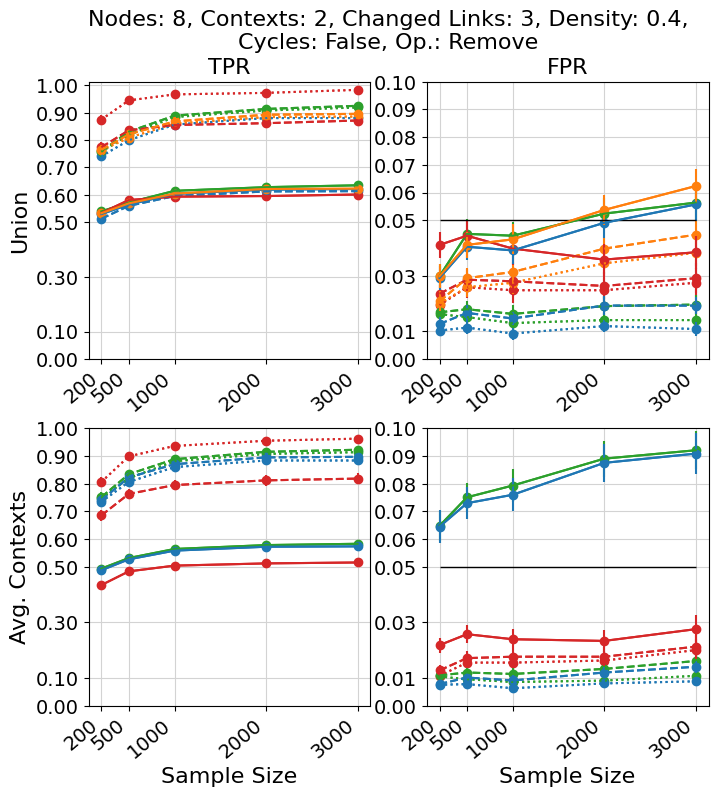}
    \includegraphics[width=0.6\linewidth]{images/new_results/legend_hand_parents_True_False.png}
      \caption{TPR and FPR results for the setup presented in Sec.~\ref{sec:results} without cycles in the union graph, 
      but with a higher number of changing links $n_{\text{change}}=2$ (left plot) and $n_{\text{change}}=3$ (left plot). 
      Here, we present results where all links to $R$ are known (adj.$R$-all), links to children of $R$ are known (adj. $R$-c.) 
      or without any link assumptions (adj. none) for the methods PC-AC (our algorithm), PC-B (baseline), PC-M (masked data) and PC-P (pooled data). 
      Results for the union graph are presented in the rows above, and results for the context-specific graphs averaged over contexts are 
      presented in the rows below.}
    \label{fig:changing_links}
\end{figure}

Here, we explore the scenario where the number of changing links between contexts varies. 
We keep the experimental setup as in Sec.~\ref{sec:results}, only changing the number of changing links $n_{\text{change}}$ to $2$ and $3$. 
Figure~\ref{fig:changing_links} shows that as the number of changing links increases, the TPR generally improves, 
suggesting that the presence of more changes provides more signals that the causal discovery process can detect. 
Conversely, a higher number of changing links leads to an increase in FPR, especially for the cases without link assumptions. 
However, we believe this is due to the CIT, which generally leads to a higher FPR, as the results in Sec.~\ref{sec:results} of the main paper indicate. 
Our method performs similarly or outperforms PC-P, PC-M, and PC-B for cases with known links from $R$ to its children. 
We mention that, for this experiment, a total of $18$ trials have failed for $n_{\text{change}}=2$ 
and $25$ trials have failed for $n_{\text{change}}=3$ (see Sec.~\ref{app:failed}).

\subsubsection{Higher graph density}

 \begin{figure}[H]
    \centering
    \includegraphics[width=0.4\linewidth]{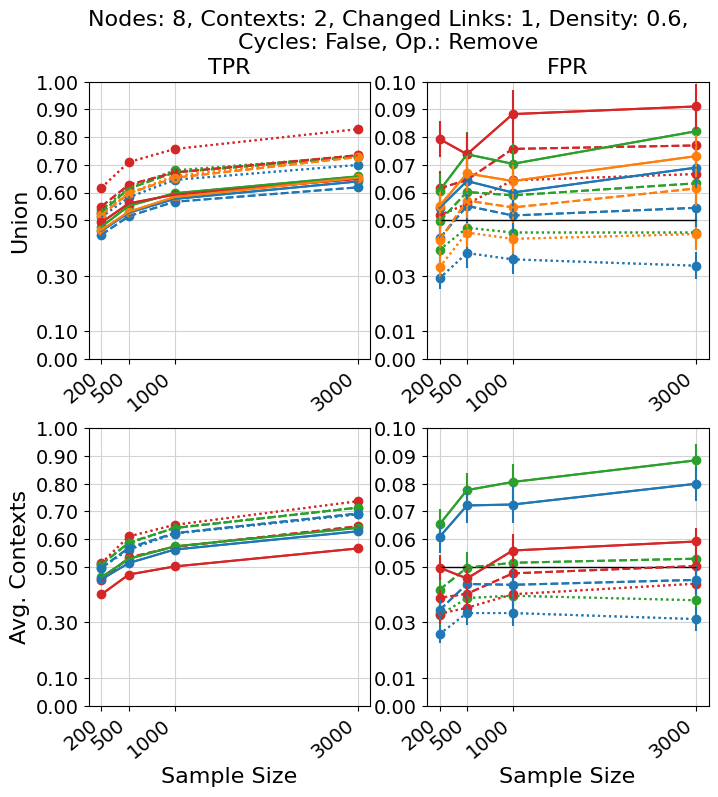}
    \includegraphics[width=0.6\linewidth]{images/new_results/legend_hand_parents_True_False.png}
          \caption{TPR and FPR results for the setup presented in Sec.~\ref{sec:results}, without cycles in the union graph, 
          but with a higher density value $s=0.6$. Here, we present results where all links to $R$ are known (adj.$R$-all), 
          links to children of $R$ are known (adj. $R$-c.) or without any link assumptions (adj. none) for the methods PC-AC (our algorithm), 
          PC-B (baseline), PC-M (masked data) and PC-P (pooled data). Results for the union graph are presented in the row above, 
          and results for the context-specific graphs averaged over contexts are presented in the row below.}
    \label{fig:higher_graph}
\end{figure}

Here, we examine the impact of increasing the density of the base graph on the performance. 
We keep the experimental setup as in Sec.~\ref{sec:results}, and only the graph density to $s=0.6$. 
As Fig.~\ref{fig:higher_graph} shows, with higher graph density, the TPR decreases slightly compared to the results in the main paper, while the FPR increases. 
This is likely due to the increased complexity of the problem and possible confounding. 
The denser the graph, the more candidate links the algorithm needs to evaluate, which can increase the likelihood of falsely identifying spurious links as causal. 
Our algorithm still outperforms PC-M for known links between $R$ and its children. 
PC-B scores best overall in terms of FPR, but its finite-sample issues lead to lower TPR results, indicating that it finds fewer links overall. 
For this experiment, we mention that a total of $12$ trials have failed (see Sec.~\ref{app:failed}).

\subsubsection{Smaller and larger number of nodes}

Fig.~\ref{fig:n} present the performance of the algorithms across graphs with different $N=6$ (left plot) and $N=12$ (right plot), while the rest of the experimental setup remains as described in Sec.~\ref{sec:results}. Generally, TPR decreases as the number of nodes increases, possibly due to the increased complexity and the higher dimensional space in which the causal discovery process operates. Larger graphs have more potential causal connections, making it harder to distinguish true causal links from noise. The FPR slightly increases with the number of nodes. However, our method (PC-AC) still outperforms PC-M when no links are assumed, or the links from $R$ to its children are known.  For this experiment, we mention that a total of $14$ trials have failed for $N=6$ and $12$ trials have failed for $N=12$ (see Sec.~\ref{app:failed}).

 \begin{figure}[H]
    \centering
    \includegraphics[width=0.4\linewidth]{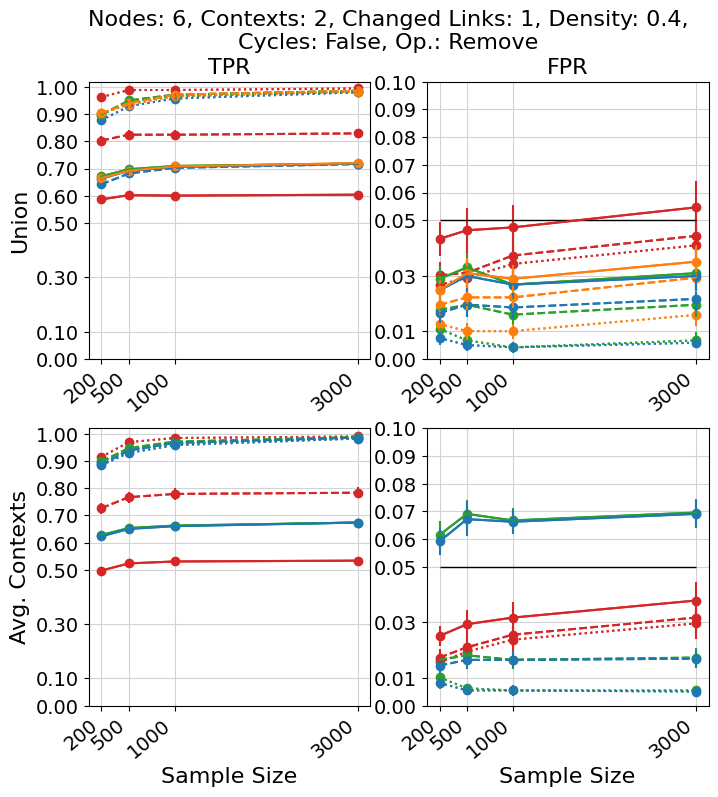}
     \includegraphics[width=0.4\linewidth]{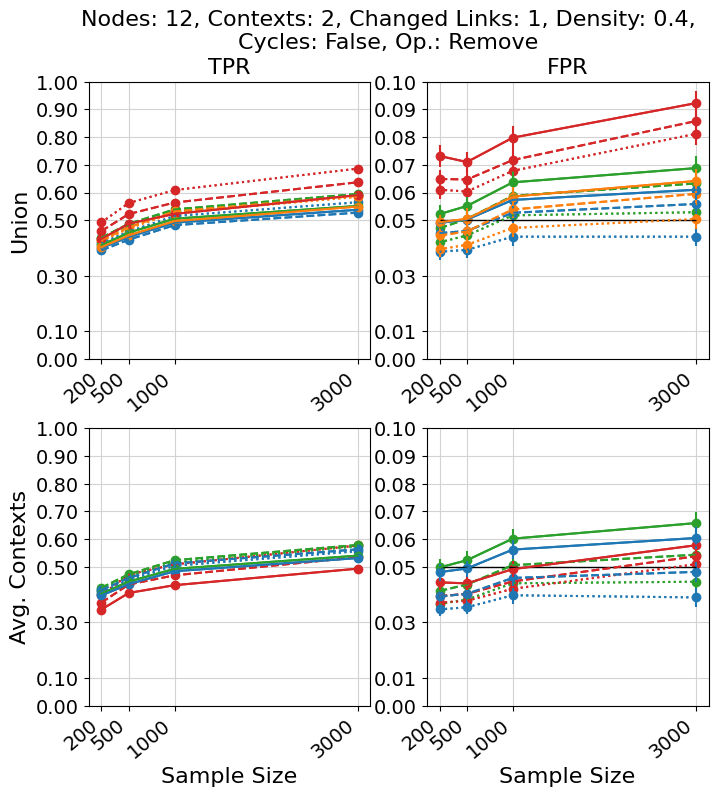}
    \includegraphics[width=0.6\linewidth]{images/new_results/legend_hand_parents_True_False.png}
    \caption{TPR and FPR results for the setup presented in Sec.~\ref{sec:results} without cycles in the union graph, 
    but with number of nodes $N=6$ (left plot) and $N=12$ (right plot). Here, we present results where all links to $R$ are known (adj.$R$-all), 
    links to children of $R$ are known (adj. $R$-c.) or without any link assumptions (adj. none) for the methods PC-AC (our algorithm), PC-B (baseline), 
    PC-M (masked data) and PC-P (pooled data). Results for the union graph are presented in the rows above, and results for the context-specific 
    graphs averaged over contexts are presented in the rows below.}
    \label{fig:n}
\end{figure}

\subsubsection{Data imbalance}

 \begin{figure}[H]
    \centering
    \includegraphics[width=0.4\linewidth]{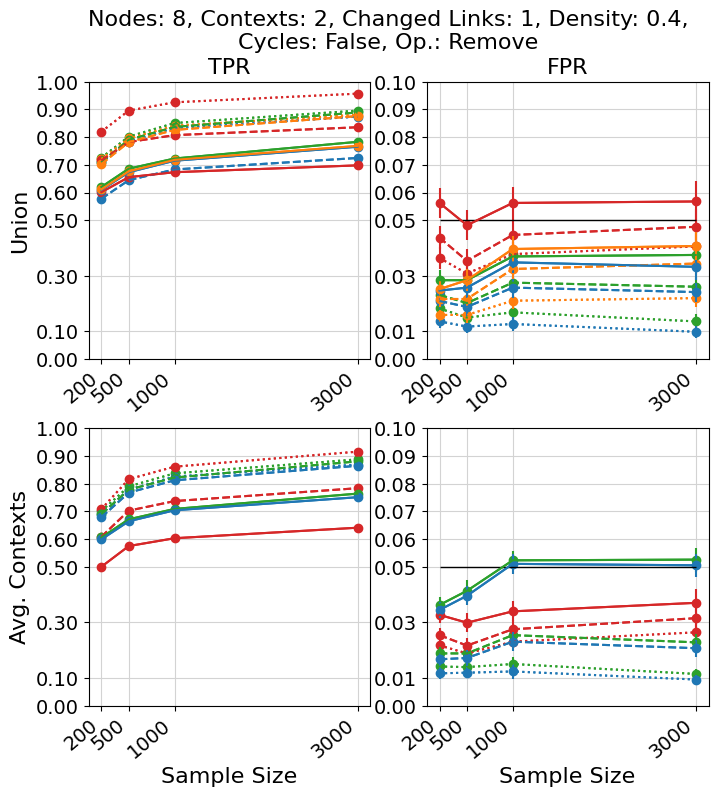}
    \includegraphics[width=0.6\linewidth]{images/new_results/legend_hand_parents_True_False.png}
    \caption{TPR and FPR results for the setup presented in Sec.~\ref{sec:results} without cycles in the union graph, 
    but with data imbalance factor $b=1.5$. Here, we present results where all links to $R$ are known (adj.$R$-all), 
    links to children of $R$ are known (adj. $R$-c.) or without any link assumptions (adj. none) for the methods PC-AC (our algorithm), 
    PC-B (baseline), PC-M (masked data) and PC-P (pooled data). Results for the union graph are presented in the rows above, 
    and results for the context-specific graphs averaged over contexts are presented in the rows below.}
    \label{fig:b}
\end{figure}

Results investigating the effect of data imbalance, presented in Fig.~\ref{fig:b} show that our method, PC-AC, still performs better in terms of FPR or TPR compared to PC-M, despite a data imbalance factor of $b=1.5$, if links from $R$ to its children are known or without any link assumptions, similar to the results in the main paper. While the overall FPR increases slightly from the results presented in Fig.~\ref{fig:main_results}, especially for PC-P, this increase seems insignificant. We mention that, for this experiment, a total of $10$ trials have failed (see Sec.~\ref{app:failed}).

\subsection{Computational runtimes}
\label{app:runtimes}

We evaluate the computational runtimes of the different methods evaluated in Sec.~\ref{sec:experimental_setup} and App.~\ref{app:further_results} and present these in Table~\ref{tab:runtimes}. PC-AC has a slightly higher runtime than PC-M due to the additional tests involving the context variable. Among the methods that generate the union graph, the baseline PC-B has, as expected, the longest runtime, as it uses the information from both the PC-M results and the results of PC on the pooled data (PC-P). We also observe that CD-NOD seems to be the most time-efficient algorithm on pooled data, while PC is the most computational intensive. This difference in particular might arise from implementation details.

\begin{table}[H]
\caption{Comparison of the computational runtimes in seconds of the PC-AC, PC-M, PC-B, PC, FCI and CD-NOD method that have been used in the experiments. While PC-AC and PC-M use only context-specific data, the other methods generate a union graph and / or use the pooled datasets, as discussed in Sec.~\ref{sec:experimental_setup}. For the PC-AC and PC-M, we present the average time per regime in seconds. }
\centering
\resizebox{0.8\textwidth}{!}{%
\begin{tabular}{l|ll|llll}
 & \multicolumn{2}{c|}{\textbf{Context-specific data (avg. per regime)}} & \multicolumn{4}{c}{\textbf{Union / pooled data}} \\ \cline{2-7} 
 & \multicolumn{1}{l|}{\textbf{PC-AC}} & \textbf{PC-M} & \multicolumn{1}{l|}{\textbf{PC}} & \multicolumn{1}{l|}{\textbf{\begin{tabular}[c]{@{}l@{}}PC-B\\ PC(M + PC)\end{tabular}}} & \multicolumn{1}{l|}{\textbf{FCI}} & \textbf{CD-NOD} \\ \hline
\textbf{$n=200$} & \multicolumn{1}{l|}{2.784} & 2.345 & \multicolumn{1}{l|}{1.419} & \multicolumn{1}{l|}{4.203} & \multicolumn{1}{l|}{0.874} & 0.775 \\ \hline
\textbf{$n=500$} & \multicolumn{1}{l|}{3.254} & 2.679 & \multicolumn{1}{l|}{1.693} & \multicolumn{1}{l|}{4.947} & \multicolumn{1}{l|}{1.054} & 0.974 \\ \hline
\textbf{$n=1000$} & \multicolumn{1}{l|}{3.632} & 2.948 & \multicolumn{1}{l|}{1.923} & \multicolumn{1}{l|}{5.555} & \multicolumn{1}{l|}{1.282} & 1.173 \\ \hline
\textbf{$n=2000$} & \multicolumn{1}{l|}{4.444} & 3.579 & \multicolumn{1}{l|}{2.382} & \multicolumn{1}{l|}{6.826} & \multicolumn{1}{l|}{1.636} & 1.451 \\ \hline
\textbf{$n=3000$} & \multicolumn{1}{l|}{5.302} & 3.862 & \multicolumn{1}{l|}{2.900} & \multicolumn{1}{l|}{8.202} & \multicolumn{1}{l|}{1.915} & 1.730
\end{tabular}%
}
\label{tab:runtimes}
\end{table}

\section{Extensions and future work}

In the Discussion Sec.~\ref{sec:discussion}, we have outlined a few ideas for extensions and future directions for the algorithm presented in this work. 
Here, we discuss these ideas in more detail.

\subsection{Orientation rules}\label{apdx:orientations}

In some cases, additional edge orientations beyond JCI arguments \cite{mooij2020joint} are possible
by using context-specific information. For instance, consider the following example:
\begin{example}\label{example:orientations_beyond_JCI}
    Orientations Beyond JCI:\\
    In context 0, we have $X \rightarrow Z \leftarrow Y$. In context 1, we have $X \rightarrow Y$, and no edges at $Z$.\\[0.3em]
    \begin{minipage}{\textwidth}
        \begin{minipage}{0.3 \textwidth}
    		\centering
    		Context 0:\\[0.2em]
    		\begin{tikzpicture}
    		[scale=1.5, every circle node/.style={draw,inner sep=0.15em, outer sep=0.15em}]
    			\draw	  (-1,0) node[circle] (X) {$X$};
    			\draw	  (1,0) node[circle]  (Y) {$Y$};
    			\draw	  (0,1) node[circle]  (Z) {$Z$};
    			\draw[->] (X) -- (Z);
    			\draw[->] (Y) -- (Z);
    		\end{tikzpicture}
    	\end{minipage}
    	\begin{minipage}{0.3 \textwidth}
    		\centering
    		Context 1:\\[0.2em]
    		\begin{tikzpicture}
    		[scale=1.5, every circle node/.style={draw,inner sep=0.15em, outer sep=0.15em}]
    			\draw	  (-1,0) node[circle] (X) {$X$};
    			\draw	  (1,0) node[circle]  (Y) {$Y$};
    			\draw	  (0,1) node[circle]  (Z) {$Z$};
    			\draw[->] (X) -- (Y);
    		\end{tikzpicture}
    	\end{minipage}
    	\begin{minipage}{0.3 \textwidth}
    		\centering
    		JCI\Slash{}Union:\\[0.2em]
    		\begin{tikzpicture}
    		[scale=1.5, every circle node/.style={draw,inner sep=0.15em, outer sep=0.15em}]
    			\draw	  (-1,0) node[circle] (X) {$X$};
    			\draw	  (1,0) node[circle]  (Y) {$Y$};
    			\draw	  (0,1) node[circle]  (Z) {$Z$};
    			\draw	  (2,1) node[circle]  (R) {$R$};
    			\draw[->] (X) -- (Z);
    			\draw[-] (Y) -- node[pos=0.5,anchor=west]{\hspace{0.3em}?} (Z);
    			\draw[->] (X) -- (Y);
    			\draw[->] (R) -- (Y);
    			\draw[->] (R) -- (Z);
    		\end{tikzpicture}
    	\end{minipage}
	\end{minipage}
\end{example}
Context 0 can be oriented by the unshielded v-structure,
context 1 can be oriented by a JCI argument \cite{mooij2020joint}.
In the JCI case, both $X \rightarrow Z \leftarrow R$
and $X \rightarrow Y \leftarrow R$ are unshielded v-structures,
thus leading to orientations.
The v-structures $X \rightarrow Z \leftarrow Y$ and
$R \rightarrow Z \leftarrow Y$ are both shielded. Furthermore, the v-structures that would occur for $Z\rightarrow Y$,
i.\,e.\ $X \rightarrow Y \leftarrow Z$ and
$R \rightarrow Y \leftarrow Z$ are both shielded
and thus could not have been detected, and there is no
conclusion that could be drawn from not detecting them.
Thus the (pooled) independence structure is symmetric in $Y$ and $Z$. Therefore, also no graphical consistency criterion (avoid directed cycles etc.)
can lead to an orientation of the edge between $Y$ and $Z$ and the graph obtained by JCI is also completely symmetric under exchange of $Y$ and $Z$, missing
the orientation of the edge $Y \rightarrow Z$ (which in context $0$ is found immediately by our method).

For the following example,\\
\begin{minipage}{\textwidth}
    \centering
    \begin{tikzpicture}
    [scale=1.5, every circle node/.style={draw,inner sep=0.15em, outer sep=0.15em}]
    	\draw	  (-1,0) node[circle] (X) {$X$};
    	\draw	  (0,0) node[circle]  (Y) {$Y$};
    	\draw	  (1,0) node[circle]  (Z) {$Z$};
    	\draw     (0.5, 0.9) node[circle] (C) {$R$};
    	\draw[->] (X) -- (Y);
    	\draw[->] (Y) to [bend right=45] node[pos=0.5, above]{A} (Z);
    	\draw[->] (Z) to [bend right=45] node[pos=0.5, below]{B} (Y);
    	\draw[->] (C) -- (Y);
    	\draw[->] (C) -- (Z);
    \end{tikzpicture}
\end{minipage}\\[0.4em]
a slightly different problem occurs.
Orientations are obtained by observing that $X \rightarrow Y \leftarrow R$ is unshielded,
thus detectable, and in context A, $Y \leftarrow Z$
can be excluded, because it would introduce the unshielded
v-structure $X \rightarrow Y \leftarrow Z$
(which is used for orientations in context B).
This argument works only for the sparse union graph (as opposed to its
acyclification) \emph{and} only per context (the orientation is only well-defined
per-context). The latter point makes it fundamentally inaccessible to JCI arguments,
the former point also leads to the problem of Example~\ref{example:orientations_beyond_JCI}
(where it occurs independently of the second point).

\subsection{Deterministic indicators}\label{apdx:deterministic_indicators}

In realistic examples, context indicators are often obtained by thresholding.
An extension to such deterministic functions of observables
seems possible as follows:

Given a deterministic indicator $C = g(W)$ for an observable $W$ ($g$ could be multi-variate,
see below), add $C$ as a node to the output graph and draw a special edge $W \rightarrow C$ (indicating
that this edge is deterministic and provided by expert knowledge).
Then execute a standard constraint-based causal discovery method on the variables (excluding $C$)
with these modifications for all $c$ values of $C$ occurring in data:
\begin{enumerate}[label=(\roman*)]
    \item
    If a test $X \independent Y | Z$ is required,
    and $W \in Z$, then test $X \independent Y | Z, g(W)=c$ instead.
    \item
    If the resulting regime-specific skeleta
    differ from the union graph in the removal of an edge $X - Y$,
    then:
    \begin{enumerate}[label=(\alph*)]
        \item If both $X$ and $Y$ are adjacent to $W$, save the edge $(X,Y)$ as ambiguous for (iii).
        \item If neither $X$ nor $Y$ are adjacent to $W$ (which could happen if the links to $W$
            are deleted with larger adjustment sets), then mark the link as special (indicating that this is not a physical change by 
            Lemma~\ref{lemma:PhysChangesAreRegimeChildren}).
        \item If only $Y$ (or only $X$) is a child of $W$, draw a (directed) edge $C \rightarrow Y$
            (or $C \rightarrow X$).
    \end{enumerate}
    \item
    After the orientation phase (potentially using Sec.~\ref{apdx:orientations}),
    for each ambiguous edge $(X,Y)$ saved in (ii) (again using Lemma~\ref{lemma:PhysChangesAreRegimeChildren}):
    \begin{enumerate}[label=(\alph*)]
        \item If the edge was oriented $X \rightarrow Y$, draw a (directed) edge $C \rightarrow Y$.
        \item If the edge was oriented $X \leftarrow Y$, draw a (directed) edge $C \rightarrow X$.
        \item If the edge was not oriented, draw special (indicating to the user that they are ambiguous,
            possibly only one of them real) directed edges $C \rightarrow X$ and $C \rightarrow Y$.
    \end{enumerate}
\end{enumerate}

In this scenario, testing $X \independent W | g(W)=c$ could provide additional information:
\begin{example}
    Qualitative and Quantitative Context Dependence:\\
    Assume $C = g(W)$ as above is binary, and
    \begin{equation*}
        Y = \big(\alpha + \mathbbm{1}(C)\big) \times (1 +  \beta W) \times  X + \eta_Y.
    \end{equation*}
    If $\alpha=0$, then the mechanism depends \emph{qualitatively} on $C$:
    If $C=0$, then $Y$ does not depend on $X$.\\
    If $\beta \neq 0$, then the mechanism depends \emph{quantitatively} on $W$.
\end{example}
In the qualitative case ($\alpha=0$), the CSI changes: $X \independent Y | C=0$.
In the quantitative case ($\beta \neq 0$), $Y \dependent Y | C$
(note that one could even check $Y \dependent Y | C=c$ for further info),
while in the non-quantitative case ($\beta = 0$) $Y \independent Y | C$.

\end{document}